\documentclass{article}
\usepackage{graphicx} %
\usepackage{hyperref}

\usepackage{enumitem} %
\usepackage{amsmath, amsfonts, amsthm, amssymb}  %
\usepackage{thmtools}
\usepackage{thm-restate}

\usepackage{environ}  %
\usepackage{natbib}
\usepackage{bbm}
\usepackage{mdframed}
\usepackage{multirow}
\usepackage{todonotes}
\usepackage{cleveref}
\usepackage{soul}
\usepackage{ulem}
\usepackage{url}

\usepackage[font=small]{caption}

\usepackage{tikz}
\usetikzlibrary{arrows.meta, positioning, calc}

\DeclareMathOperator*{\argmax}{arg\,max}

\newtheorem{theorem}{Theorem}
\newtheorem{lemma}{Lemma}

\newtheorem{corollary}{Corollary}
\newtheorem{definition}{Definition}

\declaretheorem[name=Theorem,numbered=no]{theorem*}

\usepackage[a4paper, margin=1in]{geometry} %
\usepackage{parskip} %

\usepackage{wrapfig}   %

\usepackage{booktabs}
\newlist{titlelist}{itemize}{1}
\setlist[titlelist]{label=,
                    leftmargin=0pt,
                    itemindent=\parindent,
                    listparindent=\parindent}
\usepackage{multicol} %

\usepackage[table]{xcolor}
\usepackage{subcaption} %

\usepackage{float}     %

\def\E{\mathbb{E}}
\def\H{\mathbb{H}}
\def\I{\mathbb{I}}
\def\1{\mathbf{1}}

\def\Pr{\mathbb{P}}
\def\KL{\mathbf{d}_{\mathrm{KL}}}

\def\environment{\mathcal{E}}

\def\observations{\mathcal{O}}

\title{Consequentialist Objectives and Catastrophe}

\date{}

\begin{document}
\author{Henrik Marklund, Alex Infanger, Benjamin Van Roy}

\maketitle

\begin{abstract}
Because human preferences are too complex to codify, AIs operate with misspecified objectives. Optimizing such objectives often produces undesirable outcomes; this phenomenon is known as reward hacking. Such outcomes are not necessarily catastrophic. Indeed, most examples of reward hacking in previous literature are benign. And typically, objectives can be modified to resolve the issue.

We study the prospect of catastrophic outcomes induced by AIs operating in complex environments. We argue that, when capabilities are sufficiently advanced, pursuing a fixed consequentialist objective tends to result in catastrophic outcomes. We formalize this by establishing conditions that provably lead to such outcomes. Under these conditions, simple or random behavior is safe. Catastrophic risk arises due to extraordinary competence rather than incompetence.

With a fixed consequentialist objective, avoiding catastrophe requires constraining AI capabilities. In fact, constraining capabilities the right amount not only averts catastrophe but yields valuable outcomes. Our results apply to any objective produced by modern industrial AI development pipelines.

\end{abstract}

\section{Introduction}
At some point, humanity is likely to build superintelligent systems. These are systems that are more capable than humans across virtually all domains. A growing body of work argues that such AI systems may pose a catastrophic risk \citep{hubinger2019risks,bostrom2012superintelligent,bostrom2014superintelligence,russell2019human,barnett2024without,di2022goal}. A core concern is that we unintentionally develop a misaligned AI that pursues goals at odds with ours. There is considerable disagreement among researchers about how likely such a scenario is (see, e.g., \citep{mitchell2024dumbsuperintelligence,aioptimism2023easycontrol,carone2019dontfearterminator,pace2019instrumentalconvergence,munk2023aidebate}).

One common argument for concern is that, in complex environments, it is hard to specify objectives that do not induce unintended consequences.  Once the agent is sufficiently capable, it often exploits some loophole in the objective. This phenomenon is known as reward hacking. And reward hacking often happens whether the objective function is manually specified in code, or learned from data, or obtained in some other way.  Such outcomes are not necessarily catastrophic, however. Indeed, most examples of reward hacking in the previous literature are benign \citep{krakovna2019specification,leike2017gridworlds,nguyen2015deep}.  And while some work points to scenarios in which reward hacking may lead to catastrophic outcomes, it remains unclear whether those scenarios are easily avoidable \citep{bostrom2003ethical,zhuang2020consequences}.

The concern is that as AI systems become increasingly capable, the unintended behavior will become increasingly dangerous. One type of objective has been singled out as especially dangerous: objectives that evaluate the agent based on outcomes. We refer to such objectives as \textit{consequentialist}. This is in contrast with \textit{deontological} objectives, which evaluate the agent's actions directly.  Deontological objectives are used, for instance, when an AI system is trained to imitate actions taken by humans.

The great promise of superintelligence stems from its potential to do things that humans do not know how to do.  This relies on use of consequentialist objectives.  For example, with such an objective, a designer could reward an agent for curing cancer without knowing how.  But consequentialist objectives also pose great risk.  This is because the space of possible strategies is vast, and an AI system might be incentivized to employ extreme behaviors, if it is sufficiently capable.

In this paper, we formalize this concern.  We define what it means for an outcome to be catastrophic, and provide natural conditions under which virtually any consequentialist objective induces catastrophic outcomes, if the agent is sufficiently capable. In particular, to specify a safe objective, the AI designer would have to provide a prohibitively large amount of information.  Our results apply to a broad range of approaches to specifying an objective, including manual specification in code and learning the objective from data.

Under our conditions, random or simple behavior is safe. Risk arises due to extraordinary competence rather than incompetence. Thus, with a consequentialist objective, avoiding catastrophe requires constraining AI capabilities. In fact, we establish that constraining capabilities the right amount not only averts catastrophe but yields valuable outcomes.

We have two intentions with this work. First, by describing this particular risk in clear mathematical terms, we want to make it easier to assess how seriously to take the risk. Second, we hope the model will guide research to address the resulting failure mode. Indeed, a proposed alignment method might appear effective in a particular empirical study, yet fail under more general or challenging conditions. When the underlying mechanisms of failure are clearly articulated, as our work aims to achieve, we hope it becomes easier to identify solutions that scale to superintelligent systems.

\section{Preliminaries}

We begin with an informal description of the setting. The formal model is introduced in the subsections below. There is a designer, an agent, and an environment. The designer communicates preferences to the agent.  The agent learns about the environment and chooses a policy.  The environment produces an outcome.

The designer's preferences between outcomes are represented by a reward function $r^*$.  Ideally, the agent would choose a policy that leads to an outcome that scores highly according to $r^*$. In practice, however, the designer's preferences are complex and difficult to specify exactly. As a result, the agent instead optimizes a proxy reward function $\hat r$ that only approximates $r^*$.

Figure \ref{fig:specification} illustrates the interactions we have described between the designer, the agent, and the environment.  We now formalize this setting.

\begin{figure}[htbp]
\centering
\includegraphics[width=0.7\columnwidth]{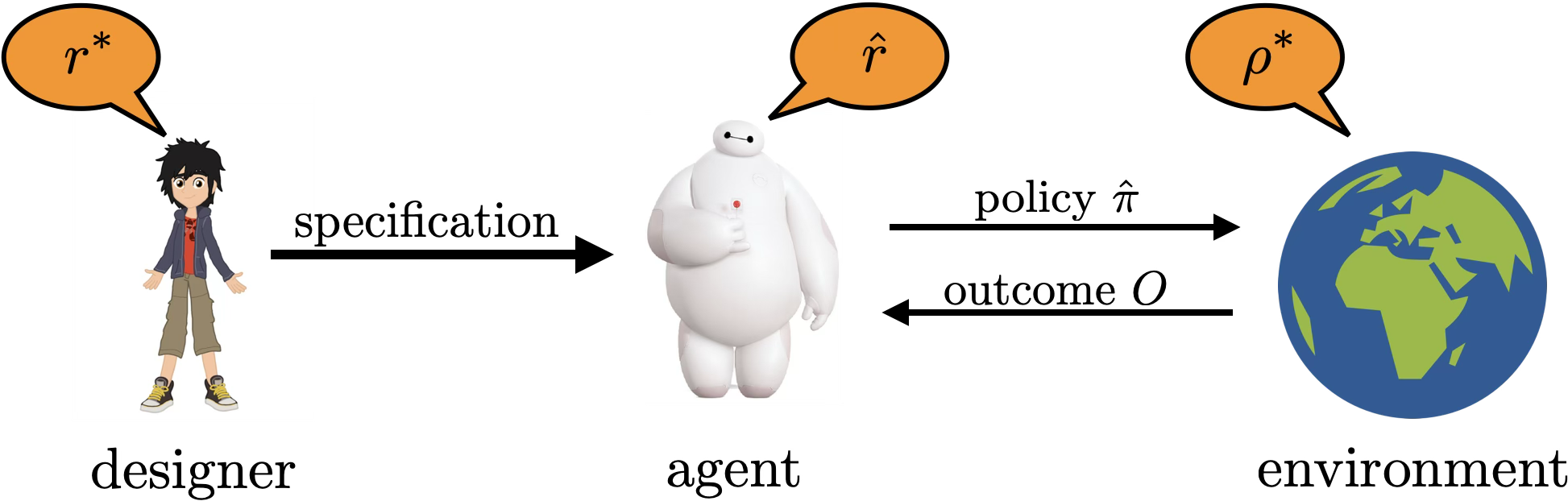}
\caption{Interactions between the designer, the agent, and the environment.}
\label{fig:specification}
\end{figure}

\subsection{Outcomes, Policies, and Environments}

Let $\observations$ and $\Pi$ be countable sets, referred to as outcomes and policies. An environment $\rho: \observations \times \Pi \to [0,1]$ is a function that specifies for each policy $\pi \in \Pi$ a probability mass function $\rho(\cdot \mid \pi)$ over outcomes $\observations$. If the agent executes policy $\pi$ in environment $\rho$, an outcome $O$ is sampled according to
$O \sim \rho(\cdot \mid  \pi)$. Let $\mathcal{E}$ be a countable set of environments.

Let $\rho^*$ denote the true environment. The environment is initially unknown to the agent, and we therefore model it as a random variable taking values in $\environment$.

\subsection{Reward}

A reward function is a function $r:\observations\rightarrow\mathbb{R}$. Let $\mathcal{R}$ denote the set of all such reward functions.  With this definition, the reward depends only on the outcome and not on actions. That is, the desirability of an outcome is independent of how it was generated. We refer to such reward functions as consequentialist.

The {\it true reward function} $r^*$ expresses the designer's preferences over outcomes.  This function is unknown to the agent, and we therefore model it as a random variable taking values in $\mathcal{R}$. We will assume that for all $o \in \observations$, $r^*(o) \in [-1,1]$ with probability $1$.

\subsection{Proxy Reward}

In practice, $r^*$ is too complex to codify exactly. As a result, the agent instead optimizes a proxy reward function $\hat r \in \mathcal{R}$, which can be viewed as an approximation of $r^*$. Formally, we model $\hat r$ as a random element in $\mathcal{R}$ satisfying
\[
\hat r \perp \rho^* \mid r^*,
\]
meaning that, conditioned on $r^*$, the proxy reward $\hat r$ is independent of the environment $\rho^*$. To simplify technical matters, we assume that $\hat{r}$ induces a reverse well-ordering over outcomes: there exists a highest-ranked element, a second-highest, and so on. Ties are allowed.

There are many ways one might construct a proxy reward function $\hat{r}$.  One example is to fit a neural network to choice data, as is commonly done in reinforcement learning from human feedback (RLHF).  In particular, suppose each outcome is a trajectory of observations $o = (o_1,o_2, o_3, \ldots, o_{\overline{T}})$.  And suppose the designer supplies the agent with a data set where each data point consists of a pair of partial trajectories $o_{1:T} = (o_1,\ldots,o_T)$ and $o'_{1:T'}  = (o'_1,\ldots,o'_{T'})$ with $\max(T,T') \leq \overline{T}$, and a choice between them.  A common practice is to fit a neural network to this data to produce a choice model; that is, a model that predicts a human's choice between two alternatives.   A proxy reward function $\hat{r}$ is then inferred from the choice model.

\subsection{Information}

To quantify how much information $\hat{r}$ contains about $r^*$, we use the mutual information $\I(r^*;\hat r)$.  Informally, $\I(r^*;\hat r)$ measures the number of bits of information about $r^*$ required from the designer to convey $\hat{r}$.  This quantity is central to our formulation and analysis, which will argue that to induce safe behavior the number of bits $\I(r^*;\hat r)$ would have to be impractically large.

To interpret this, consider again a proxy reward function $\hat{r}$ derived from choice data.  To produce $\hat{r}$, the designer provides to the agent a choice data set, possibly in a compressed form.  The mutual information $\I(r^*; \hat{r})$ is a lower bound on the number of bits required to represent this data set.  Intuitively, if $\I(r^*; \hat{r})$ is large then $\hat{r}$ expresses more information about $r^*$, and this is only possible if there is a large choice data set.

More generally, suppose the designer encodes information about $r^*$ into a message $M$ consisting of $k$ bits, where $k$ is a fixed positive integer. Based on this message, the agent constructs the proxy reward function $\hat{r}$. Formally, we assume $r^*$, $M$, and $\hat{r}$ form a Markov chain:
\[
r^* \to M \to \hat{r}.
\]
Informally, this means that the proxy reward function depends on the true reward function only through the message.  As we establish in Appendix~\ref{sec:mutual-information-bits-required}, the message length $k$ satisfies
\[
k \ge \I(r^*;\hat r).
\]
This motivates use of $\I(r^*;\hat r)$ as a lower bound on the number of bits required to construct $\hat{r}$.

The aforementioned result relies on the message length $k$ being fixed. If the message length can depend on $r^*$ then the result is technically different but qualitatively similar. In this case, $k$ is a random variable, and we can bound $\E[k]$. Specifically, $\E[k] \ge \I(r^*;\hat r) - \log(\I(r^*;\hat r) + 1) - \log e$ where the logarithm is in base $2$. When $\I(r^* ; \hat r)$ is large, $\I(r^*; \hat r)$ is, in percentage terms, nearly a lower bound.  We discuss this in more detail in Appendix~\ref{sec:mutual-information-bits-required}.

\subsection{Policy Selection}

For any environment $\rho \in \environment$ and reward function $r \in \mathcal{R}$, define the policy
\[
\pi_{\rho,r} \in 
\arg\max_{\pi \in \Pi}
\sum_{o \in \observations} \rho(o \mid \pi) r(o).
\]
A maximizing policy $\pi_{\rho, r}$ may not always exist. However, we will assume that for environment $\rho^*$ and reward function $\hat{r}$, a maximizing policy $\pi_{\rho^*, \hat r}$ exists with probability $1$.

However, there might be ties. Before describing how ties are broken, we introduce the following object. Let $\mathcal{L} = \{\rho(\cdot | \pi) | \rho \in \mathcal{E}, \pi \in \Pi \}$. This is the set of all possible outcome distributions. Due to previous assumptions, this set is countable. Fix an enumeration.

Ties are broken using \textit{consequentialist tie-breaking}: we first select the policy whose induced outcome distribution $\rho(\cdot \mid \pi)$ has the smallest index among maximizing outcome distributions.  If multiple policies correspond to this outcome distribution, we choose the policy with the smallest index.

\subsection{Executed Policy and Value}
The policy executed by the agent is
\[
\hat \pi = \pi_{\rho^*, \hat r}.
\]

The policy $\hat \pi$ is the policy that maximizes proxy reward $\hat r$, given knowledge about $\rho^*$. Because $\hat \pi$ is chosen based on knowledge of $\rho^*$, we think of $\hat \pi$ as the policy of a very capable agent. Because $(\rho^*, \hat r)$ is random, so is $\hat \pi$.

The value of the random policy $\hat \pi$ is defined as
\[
\hat V
=
\E\!\left[
\sum_{o \in \observations}
\rho^*(o \mid \hat \pi) r^*(o)
\right].
\]

Finally, we assume that the random variable $O \sim \rho^*(\cdot \mid \hat \pi)$ satisfies $O \perp (\rho^*, r^*, \hat r) \mid \rho^*(\cdot \mid \hat \pi)$. That is, once the outcome distribution $\rho^*(\cdot|\hat \pi)$ is determined, the realized outcome $O$ is sampled independently from the environment and reward functions.

\subsection{Example: Coding Agent}

The objects we have described — the designer, the agent, and the environment — are abstract.  But our results apply to concrete examples.  Here we describe one example involving a coding agent.

Consider a coding agent designed to maintain and operate software within an enterprise.  In this context, the outcome $O$ is a trajectory of future observations such as prompts requesting code behaviors, ingested data, and logs from code testing and usage. The agent's policy governs how the agent updates the codebase, runs tests, and operates the software as new prompts and other information arrive.

The proxy reward function $\hat{r}$ assigns rewards to outcomes.  Rewards could be manually specified --- for example, counting the number of unit tests passed or the number of provenance characters retained.  Or a reward function can be derived from human-annotated data via reinforcement learning from human feedback (RLHF) or from a constitution designed, for example, to enforce helpful, harmless, and honest behavior.  Or a reward function can combine all these elements.

Given the reward function $\hat{r}$, the agent learns about the environment so that it can do well.  The random variable $\rho^*$ represents what is learned.  The agent then applies a policy $\hat{\pi}_{\rho^*,\hat{r}}$ that maximizes the expectation $\sum_{o \in \observations} \rho^*(o|\pi) \hat{r}(o)$.

\section{A Definition of Catastrophe}

In this section, we introduce two performance baselines that allow us to define a meaningful notion of catastrophic performance. Informally, these baselines capture two uninformed approaches to policy selection.

First, the agent might choose an ``uninformed policy'' -- one without any information about either the environment or the designer's reward function. Such a policy represents a baseline level of performance that arises from complete ignorance. 

Second, the agent might optimize a reward function unrelated to the true reward function. In this case, the agent is performing optimization using knowledge about the environment $\rho^*$, but it is optimizing the wrong objective. This can lead to outcomes that are substantially worse than those produced by an uninformed policy.

By catastrophic performance, we will mean performance that falls short of some threshold between these two baselines.

\subsection{Uninformed Policy and Contemporary Value}

A random policy $\tilde \pi$ is said to be \textit{uninformed} if
\[
\I(\tilde \pi; \rho^*, r^*) = 0.
\]
Such a policy contains no information about either the environment $\rho^*$ or the true reward function $r^*$.

To gain some intuition, consider deep reinforcement learning (RL). In deep RL, a neural network outputs an action at each timestep. If we take a randomly initialized neural network and interpret it as a policy, we obtain a prototypical example of an uninformed policy: the policy was produced without any influence from either the reward function or the environment.

We define the \textit{contemporary value} to be the best performance achievable by an uninformed policy. Formally,
\begin{align*}
V_0
&=
\sup_{\pi \in \Pi}
\E\!\left[
\sum_{o \in \observations}
\rho^*(o \mid \pi) r^*(o)
\right].
\end{align*}

In the problem settings we have in mind, you can think of $V_0$ as the expected reward obtained by deploying a randomly initialized policy network. Such a policy will typically perform poorly because it uses no information about the environment. However, exactly because the policy lacks information about the environment, it also lacks the ability to produce extremely bad outcomes. Thus, $V_0$ represents useless but benign performance.

Finally, we note a subtlety in the definition of an uninformed policy: not all uninformed policies are necessarily equally good. Even if the agent is completely uninformed about $(r^*, \rho^*)$, some policies may still be favored over others under the prior. This is why $V_0$ is defined using the supremum over policies. 

That said, in the problem settings we have in mind, the prior will often be ``symmetric'' in the sense that all uninformed policies are equally good. In such cases, $V_0$ corresponds to the performance of any uninformed policy. We call it contemporary value as we think of it as the value of the designer's starting point before any AI has been deployed.

\subsection{Uninformed Reward Function and Primordial Value}

A random reward function $\tilde r$ is said to be \textit{uninformed} if
\[
\I(\tilde r; \rho^*, r^*) = 0.
\]
Such a reward function contains no information about either the environment $\rho^*$ or the true reward $r^*$.

To gain some intuition, again consider deep reinforcement learning. Suppose the agent is using a learned reward model, for example a recurrent neural network that could be trained on human feedback. Consider the extreme case where no human feedback was actually used, and the reward model is just a randomly initialized neural network. This would be an uninformed reward function. An agent that optimizes such a reward function will perform optimization, but toward an essentially arbitrary objective.

Recall that given a reward function $r$, we denote by $\pi_{\rho^*, r}$ the policy that maximizes the expected reward under the environment $\rho^*$. We define the best uninformed reward function by
$$\overline r \in \argmax_{r \in \mathcal R}
\E\!\left[
\sum_{o \in \observations}
\rho^*(o \mid \pi_{\rho^*,r})\, r^*(o)
\right].$$
We assume that a maximizer exists. Any approach to tie-breaking will do. Let $\overline O$ be the random outcome generated when the agent maximizes $\overline r$. We refer to $\overline O$ as the primordial outcome and $r^*(\overline{O})$ as primordial performance. 

We define the \textit{primordial value} and \textit{primordial variance} by
$$
\overline V
=
\E[r^*(\overline O)]
 \quad \mathrm{\ and\ } \quad 
\overline{\sigma}^2 = \mathrm{Var}[r^*(\overline O)]
.
$$
Note that the primordial value is the best expected true performance achievable by optimizing an uninformed reward function. 

As a simplifying assumption, we will assume that $\overline V \ge 0$. We note, informally, that the primordial performance $r^*(\overline O)$ is close to zero with high probability if and only if $\overline V$ and $\overline \sigma$ are both small.

\subsection{The Primordial-Contemporary Gap}

Figure \ref{fig:caveman} visualizes various levels of reward and value.  The vast majority of outcomes generate near-zero reward, though some outcomes offer large positive or negative rewards, depicted as spikes in the figure.  The primordial value, illustrated by the blue dotted line, is also near zero.  The contemporary value, illustrated by the green dotted line, averages rewards at a much higher, though possibly still far from optimal, level.  The red dotted line represents a safety threshold, which we will discuss in the next section.  We consider performance to be catastrophic when value falls below this threshold.  The orange double arrow represents the primordial-contemporary gap.  The green and blue double arrows represent the variation in rewards attained by uninformed policies and uninformed reward functions, respectively.  These ranges are very small relative to the primordial-contemporary gap.

\begin{figure}[htbp]
\centering
\includegraphics[width=0.6\columnwidth]{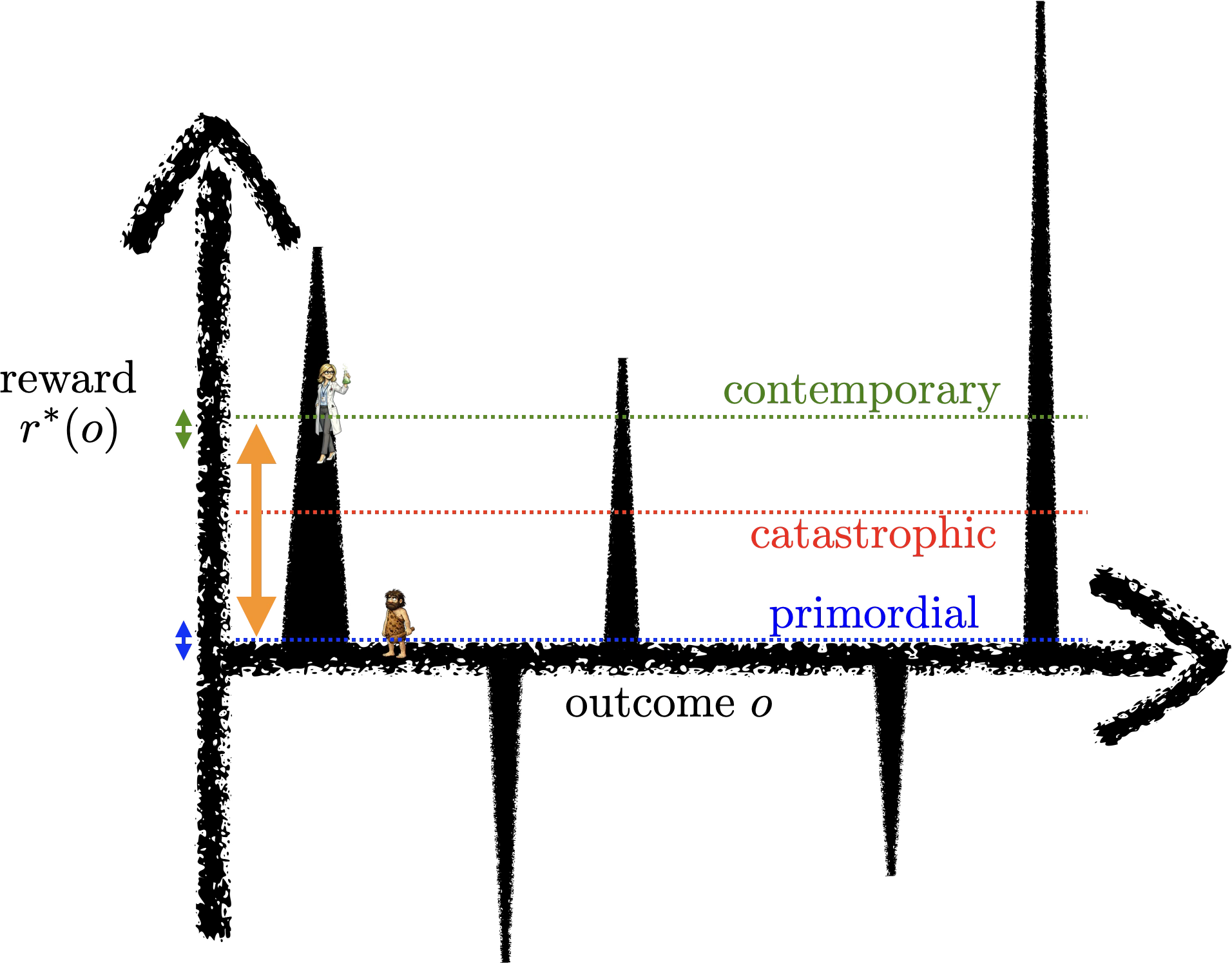}
\caption{The primordial-contemporary gap. The horizontal axis represents the outcome space, with nearby outcomes producing similar rewards. The orange double arrow represents the primordial-contemporary gap.  The blue double arrow represents variation in true reward when maximizing the best uninformed reward function. The green double arrow represents the variation in rewards attained by uninformed policies.}
\label{fig:caveman}
\end{figure}

One may wonder how an uninformed policy could vastly outperform maximization of an uninformed reward function.  Here is an informal example of how this can happen. Consider a humanoid robot that receives observations as camera images and can move by controlling its joints.

Suppose the robot applies an uninformed policy.  This could, for example, be made up of actions generated by a randomly initialized but untrained neural network.  Even though we know little about the parameters of that network, we can expect that the robot will fall over and lie incapacitated.  For the robot to accomplish anything beyond that, good or bad, it needs to be trained.

Now suppose we instruct a superintelligent robot to maximize an uninformed reward function.  This reward function could, for example, be a randomly initialized neural network that maps images to reward.  A superintelligent agent would do all in its power to attain highly rewarded outcomes, which may look nothing like the world as we know it.  Rearranging the world in such a disordered manner is likely to be catastrophic.

\subsection{Catastrophic Performance}

Here we explain what we mean by \textit{catastrophic performance}.
In the scenarios we have in mind, the primordial value $\overline{V}$ and primordial variance $\overline \sigma^2$ are both close to zero, while the contemporary value $V_0$ is far from zero. Or equivalently, with high probability, when optimizing an uninformed reward function the generated reward is much worse than $V_0$.

Formally, let 
$$V^\dagger \in [\overline V+ \overline \sigma/2, V_0]$$
denote a safety-threshold. We will interpret performance below $V^\dagger$ as catastrophic.

Performance much worse than $\overline V$ may be possible. For instance, consider an agent that chooses a policy that maximizes $-r^*$. Formally, the performance of such an agent is  $\underline V = \E[ \sum_{o \in \observations} \rho^*(o | \pi_{\rho^*,-r^*}) r^*(o)]$. In typical scenarios, $\underline V \ll \overline V$.  The catastrophic outcomes we have in mind, however, involve performance somewhere between $\overline V$ and $V_0$.

\subsection{Example: Coding Agent}

Our definition of catastrophe is abstract.   What would constitute such a catastrophe induced by a coding agent?  Consider a coding agent designed for a utility provider to develop and maintain a codebase that manages an energy grid.

In this setting, the contemporary value is the value of operating the grid under current human-managed practices, without the assistance of a coding agent.  If a superintelligent coding agent becomes available but does nothing, the utility provider retains that contemporary value.  Even if the agent takes random actions, it is unlikely to induce catastrophe.  Some small hiccups may occur, but the agent would need significant competence even to produce a pull request that is accepted, let alone cause substantial harm.  Again, catastrophe arises from competence, rather than incompetence.

Now suppose that the agent maximizes a proxy reward function based on the number of outages and on whether the code passes periodic simulation tests intended to measure performance and risk.  Because most outages result from protective shutdowns triggered when sensor readings cross safety thresholds, the agent loosens those thresholds to reduce the outage count.  To ensure that the code continues to pass simulation tests, it modifies the simulator itself.  This is akin to recent reports of agents modifying unit tests to improve benchmark performance \citep{zhong2026impossiblebench}.  Such behavior creates serious risk of a catastrophic blackout.

It is worth emphasizing that one can typically patch a reward function to rule out any particular failure mode once it has been identified.  No single bad outcome is inevitable. The difficulty is that the space of possible failure modes is too large to enumerate and exclude in advance.

\section{Specifying a Safe Reward Function Requires Many Bits}\label{sec:many-bits}

In this section, we describe conditions for when specifying a safe reward function is very difficult. Specifically, for $\hat r$ to be safe, it would have to encode a prohibitive amount of information about $r^*$. The main result of this section is \Cref{thm:information-lowerbound}, in which we provide a lower bound on the mutual information between $r^*$ and $\hat r$ in terms of the safety threshold $V^\dagger$, the primordial performance $\overline V$, the primordial standard deviation $\overline \sigma$, and an \textit{attainability} parameter.

\subsection{Main Result}

We now introduce a notion of \textit{attainability}, which will drive our lower bound on $\I(r^*;\hat r)$.  We define the {\it attainability} of an outcome $o \in \observations$ to be 
\begin{align}
\alpha^*(o) = \sup_{\pi \in \Pi} \rho^*(o|\pi).
\end{align}
This is the maximal probability with which an agent possessing knowledge of $\rho^*$ can attain $o$.  We then define the {\it expected attainability}
\begin{align}
p_{\mathrm{att}}(o) = \E\left[\alpha^*(o)\right]\label{eq:def-outcome-attainability}.
\end{align}
In words, the expected attainability $p_{\mathrm{att}}(o)$ is the probability that a capable agent trying to produce outcome $o$ actually succeeds in doing so, averaged over environments.

In the following result, we will use Bern($p$) to denote the probability distribution of a Bernoulli random variable with parameter $p$, $\KL$ to refer to the Kullback–Leibler (KL) divergence function, and $\mathcal L_{\rho^*}$ the random set of attainable distributions over outcomes $\mathcal L_{\rho^*} = \{\rho^*(\cdot|\pi) \ : \ \pi\in\Pi\}$. The result establishes a lower bound on the mutual information of $r^*$ and $\hat r$.%
\begin{restatable}{theorem}{informationlowerbound}\label{thm:information-lowerbound}{\bf (specification complexity)} If $\hat{V} \geq V^\dagger$, $r^* \perp \mathcal{L}_{\rho^*}$, and $(r^*(o): o \in \observations)$ is iid then
\begin{align}
\I(r^*;\hat r) &\ge \frac{1}{\sup_{o \in \observations} p_{\mathrm{att}}(o)} \KL(\mathrm{Bern}(V^\dagger) || \mathrm{Bern}(\overline{V} + \overline \sigma/2)).\label{eq:rhs-of-thm:information-lowerbound}
\end{align}
\end{restatable}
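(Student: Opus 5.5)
We will use a data-processing argument against a reference "uninformed" proxy. Let $\tilde r$ be an independent copy of $\hat r$, drawn from its marginal law and independent of $(r^*,\rho^*)$; this $\tilde r$ is an uninformed reward function. Let $\tilde O$ be the outcome produced by $\pi_{\rho^*,\tilde r}$. By the definition of $\overline r$ as the best uninformed reward function, $\E[r^*(\tilde O)]\le \overline V$.

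The first step is a change of measure. The joint law of $(r^*,\rho^*,\hat r)$ is $P$, and the product law of $(r^*,\rho^*)$ with $\tilde r$ is $Q$. Since $\hat r\perp \rho^*\mid r^*$, we have $\KL(P\|Q)=\I(\hat r; r^*,\rho^*)=\I(\hat r;r^*)$. We then apply data processing to a binary statistic. Conditionally on everything, draw the outcome $O$ and then a Bernoulli $B$ with success probability $(1+r^*(O))/2$; alternatively, map rewards to $[0,1]$ so that the Bern parameters match $V^\dagger$ and $\overline V+\overline\sigma/2$ as in the statement. We may need to rescale or use a one-sided KL monotonicity, in which $\KL(\mathrm{Bern}(a)\|\mathrm{Bern}(b))$ is increasing in $a\ge b$. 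Under $P$ the success probability is at least $V^\dagger$, and under $Q$ it is at most $\overline V$ (at most $\overline V+\overline\sigma/2$). Data processing then gives $\I(r^*;\hat r)\ge \KL(\mathrm{Bern}(V^\dagger)\|\mathrm{Bern}(\overline V+\overline\sigma/2))$, but without the attainability factor.

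The main obstacle is gaining the factor $1/\sup_o p_{\mathrm{att}}(o)$. The idea is that $\hat\pi$ depends on $\hat r$ only through the ranking of outcomes in $\mathcal L_{\rho^*}$, and the reward relevant to performance is revealed only at the realized $O$. We will use the hypotheses that $r^*\perp\mathcal L_{\rho^*}$ and that the coordinates $r^*(o)$ are iid, and apply the chain rule over outcomes in the reverse well-order induced by $\hat r$. Write $\I(r^*;\hat r)\ge \sum_o \I(r^*(o);\hat r\mid r^*(o')_{o'\neq o})$; independence of the coordinates makes this superadditive decomposition valid. Each outcome $o$ then contributes information $i_o$.

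The performance advantage over $Q$ comes from outcomes realized with probability $\rho^*(o\mid\hat\pi)\le\alpha^*(o)$. By convexity of KL (joint convexity, with the mixture over which outcome is realized), $\KL(\text{law of }B\text{ under }P\|\text{under }Q)\le \sum_o \E[\alpha^*(o)]\, i_o\le \sup_o p_{\mathrm{att}}(o)\sum_o i_o$. The independence $r^*\perp\mathcal L_{\rho^*}$ is what lets us pull $\E[\alpha^*(o)]$ out separately from $i_o$. Combining this with the Bernoulli bound yields the theorem.

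The delicate part is making this per-outcome decomposition rigorous. We must condition on $\mathcal L_{\rho^*}$ and use consequentialist tie-breaking, so that the realized-outcome distribution is a function of $(\mathcal L_{\rho^*},\hat r)$ only. We must also verify that the information "spent" on outcome $o$ affects the Bernoulli's law only through events of probability $\alpha^*(o)$.
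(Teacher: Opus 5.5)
Your outline has the right independence structure: condition on $\mathcal L_{\rho^*}$ and use consequentialist tie-breaking, so that the outcome law is a function of $(\mathcal L_{\rho^*},\hat r)$. As written, though, one step is false, the Bernoulli reduction is not pinned down, and the crux is only asserted.

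\emph{The decomposition runs the wrong way.} Write $r^*_{-o}=(r^*(o'))_{o'\neq o}$. The claim $\I(r^*;\hat r)\ge\sum_o\I(r^*(o);\hat r\mid r^*_{-o})$ fails. For independent coordinates, $\I(r^*(o);\hat r\mid r^*_{-o})=\I(r^*(o);\hat r,r^*_{-o})$ dominates the corresponding chain-rule term, so the fully conditioned sum is an \emph{upper} bound on $\I(r^*;\hat r)$. For example, take two fair bits and let $\hat r$ encode $r^*(o_1)\oplus r^*(o_2)$: the left side is one bit and the right side is two. Only the unconditional version $\I(r^*;\hat r)\ge\sum_o\I(r^*(o);\hat r)$ holds.

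\emph{The Bernoulli surrogate is wrong.} The map $(1+r^*(O))/2$ gives $\KL(\mathrm{Bern}((1+V^\dagger)/2)\|\mathrm{Bern}((1+\overline V)/2))\le\log 2$, which cannot yield a bound that blows up as $\overline V+\overline\sigma/2\to0$. The surrogate that works is $\mathrm{Bern}(r^*_+(O))$ with $r^*_+=\max(r^*,0)$. Its parameter is at least $\hat V$, but its reference parameter is $\overline V_+=\E[r^*_+(\overline O)]$, which in general exceeds $\overline V$. The term $\overline\sigma/2$ is exactly the slack from $\E[X_+]\le(\mu+\sqrt{\mu^2+\sigma^2})/2\le\mu+\sigma/2$ for $\mu\ge0$. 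Identifying this reference needs $r^*(\overline O)\stackrel{d}{=}r^*(o)$ for each fixed $o$ (iid coordinates and $\overline O\perp r^*$), not merely $\E[r^*(\tilde O)]\le\overline V$.

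\emph{The key inequality is unproved.} $\KL\le\sum_o\E[\alpha^*(o)]\,i_o$ carries the entire attainability factor, and you give no argument for it.

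The gap closes along your own lines, and the result is a proof genuinely different from the paper's. Let $m_o=\E[r^*_+(o)\mid\hat r]$. Since $\rho^*(\cdot\mid\hat\pi)$ is a function of $(\mathcal L_{\rho^*},\hat r)$ and $\mathcal L_{\rho^*}\perp(r^*,\hat r)$, you get $\hat V\le\E[\sum_o\rho^*(o\mid\hat\pi)\,m_o]$. Now write $\mathrm{Bern}(\overline V_+)$ as a mixture over the same index $(\hat r,\mathcal L_{\rho^*},o)$, with the same weights $\rho^*(o\mid\hat\pi)$ but constant components. Convexity, $\rho^*(o\mid\hat\pi)\le\alpha^*(o)$, and $\alpha^*(o)\perp\hat r$ then give
\[
\KL\big(\mathrm{Bern}(\hat V)\,\|\,\mathrm{Bern}(\overline V_+)\big)\le\sum_{o}p_{\mathrm{att}}(o)\,\E\big[\KL\big(\mathrm{Bern}(m_o)\,\|\,\mathrm{Bern}(\overline V_+)\big)\big]\le\sup_{o}p_{\mathrm{att}}(o)\sum_{o}\I(r^*(o);\hat r).
\]
The first step also uses $\hat V\ge V^\dagger\ge\overline V_+$. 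The last step holds because the expectation equals $\I(r^b(o);\hat r)\le\I(r^*(o);\hat r)$, where $r^b(o)\sim\mathrm{Bern}(r^*_+(o))$ is drawn given $r^*(o)$. Unconditional superadditivity and monotonicity of Bernoulli KL then finish.

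The paper takes a different route:
\begin{enumerate}
\item It replaces $\hat r$ by the coherent $\E[r^*\mid\hat r]$.
\item It indexes by rank $o_{\hat r,t}$, which is a function of $\hat r$ alone.
\item A frontloading lemma dominates $\hat V$ by the top $\lfloor 1/p'_{\mathrm{att}}\rfloor$ ranks, each weighted $p'_{\mathrm{att}}=\sup_o p_{\mathrm{att}}(o)$.
\item It applies a KL chain rule against the iid sequence $(r^*(o_{\tilde r,t}))_t$ built from an independent copy $\tilde r$.
\item It finishes with Jensen over ranks.
\end{enumerate}
Indexing by outcome bounds the weights by $p_{\mathrm{att}}(o)$ directly. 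So the repaired version of your route needs neither the coherence reduction nor frontloading, and it yields the refinement $\KL(\mathrm{Bern}(\hat V)\|\mathrm{Bern}(\overline V_+))\le\sum_o p_{\mathrm{att}}(o)\,\I(r^*(o);\hat r)$. The paper's rank-based argument is the one that mirrors its contingency-plan reading of the attainability factor.
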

The proof can be found in \Cref{apx:proofs}.

The right hand side of \Cref{eq:rhs-of-thm:information-lowerbound} is the product of two quantities: a KL factor $\KL(\mathrm{Bern}(V^\dagger) || \mathrm{Bern}(\overline{V} + \overline \sigma/2))$ and an attainability factor $1 / \sup_{o \in \observations} p_{\mathrm{att}}(o)$.
Note that as $\overline V + \overline \sigma/2$ goes to zero, the KL factor goes to infinity. Similarly, as $\sup_{o \in \observations} p_{\mathrm{att}}(o)$ goes to zero, its reciprocal goes to infinity.

In Sections \ref{se:KL-factor-interpretation} and \ref{se:attainability-factor-interpretation}, we interpret each of these quantities in turn to explain what makes their magnitudes large in practical contexts.  But first we offer some intuition for why the result holds.

\subsection{Why Does the Result Hold?}

To develop intuition, we establish the result for a very simple special case.   A ``toy problem,'' so to speak.

\noindent{\bf A Simple Special Case}

In this case, $\observations = \Pi = \{1,2,\ldots\}$.  We assume that rewards $(r^*(o): o \in \observations)$ are iid $\mathrm{Bernoulli}(p_{\mathrm{safe}})$, for some parameter $p_{\mathrm{safe}} \in (0,1)$. If $r^*(o) = 1$, we say that the outcome $o$ is {\it safe}, otherwise we say it is {\it unsafe}.

We will construct $\rho^*$ with the help of the auxiliary random variables $a^*$ and $b^*$. Let $(a^*(o): o \in \observations)$ be iid $\mathrm{Bernoulli}(\overline p_{\mathrm{att}})$ for some parameter $\overline p_{\mathrm{att}} \in (0,1)$. If $a^*(o) = 1$, we say that the outcome $o$ is \textit{attainable}, otherwise we say it is {\it unattainable}. 

Let $(b^*(\pi): \pi \in \Pi)$ be iid $\mathrm{Bernoulli}(p_{\mathrm{pol}})$, for some parameter $p_{\mathrm{pol}} \in (p_{\mathrm{safe}}, 1)$. If $b^*(\pi) = 1$, we say that the policy $\pi$ is safe, otherwise we say that the policy is unsafe.  The requirement that $p_{\mathrm{pol}} > p_{\mathrm{safe}}$ will ensure that, in expectation, outcomes generated by random policies outperform random outcomes.

We assume, for each $\pi$, $\rho^*(\cdot|\pi)$ is one-hot.  With some abuse of notation, we denote the selected outcome by $\rho^*(\pi)$.  We assume that the first safe policy produces the first attainable outcome that is safe. The second safe policy produces the second attainable outcome that is safe. And so on.  Similarly, the first bad policy produces the first attainable outcome that is bad. The second bad policy produces the second attainable outcome that is bad. And so on.

The aforementioned assumptions imply the following. First, attainability $\sup_{o \in \observations} p_{\mathrm{att}}(o) = \sup_{o \in \observations} \overline p_{\mathrm{att}} = \overline p_{\mathrm{att}}$. Second,  primordial value is the expected reward of a random attainable outcome: $\overline{V} = p_{\mathrm{safe}}$.  Third, since all uninformed policies attain expected reward $p_{\mathrm{pol}}$, the contemporary value is $V_0 = p_{\mathrm{pol}}$.  It follows that
\[
\overline V < V_0.
\]
In other words, there is a primordial-contemporary gap.  We consider a safety threshold $V^\dagger \in (\overline V, V_0)$, which exists within this gap.

\noindent{\bf Analysis}

For simplicity, assume that $L = 1/\overline p_{\mathrm{att}}$ is an integer.  Since $(\alpha^*(o) : o \in \observations)$ is iid $\mathrm{Bernoulli}(\overline p_{\mathrm{att}})$, we can equivalently generate $\alpha^*$ via the following procedure:
\begin{enumerate}
\item Generate $(\ell(o): o \in \observations)$ iid $\mathrm{uniform}(1, \ldots, L)$.
\item Generate $\ell^* \sim \mathrm{unif}(1,\ldots,L)$.
\item For $o \in \observations$, let $\alpha^*(o) = \1(\ell(o) = \ell^*)$.
\end{enumerate}
This procedure assigns a random label $\ell(o)$ to each outcome.  Only outcomes with label $\ell(o) = \ell^*$ are attainable.

Let $O_i = \argmax_{o : \ell(o) = i} \hat{r}(o)$ and $\hat{V}_i = \E[r^*(O_i) | \ell]$.  It is easy to verify that $\hat{V} = \frac{1}{L} \sum_{i=1}^L \E[\hat{V}_i]$.  Let $\tilde{O}_i$ be distributed identically with $O_i$ but independent from $\hat{r}$ conditioned on $\ell$.  It is easy to verify that $\E[r^*(\tilde{O}_i)] = p_{\mathrm{safe}} = \overline{V}$.  Let $r_i$ be the restriction of $r^*$ to observations with label $i$.  Then,
\begin{align*}
\I(r^*; \hat{r})
\stackrel{(a)}{=}& \I(r^*; \hat{r} | \ell) \\
\stackrel{(b)}{\geq}& \I(r_1^*, \ldots, r_L^*; O_1, \ldots, O_L | \ell) \\
\stackrel{(c)}{\geq}& \sum_{i=1}^L \I(r_i^*; O_i | \ell),
\end{align*}
where (a) holds because $r^*$ and $\hat{r}$ are independent from $\ell$, (b) follows from the data processing inequality, and (c) follows from the fact that $r_i$ is independent across labels $i$.   Further,
\begin{align*}
\I(r_i^*; O_i | \ell)
\stackrel{(a)}{=}& \E\left[\KL(\Pr((r_i^*, O_i) \in \cdot | \ell) \| \Pr((r_i^*, \tilde{O}_i) \in \cdot | \ell))\right] \\
\stackrel{(b)}{\geq}& \E\left[\KL(\Pr(r_i^*(O_i) \in \cdot | \ell) \| \Pr(r_i^*(\tilde{O}_i) \in \cdot | \ell))\right] \\
=& \E\left[\KL(\mathrm{Bern}(\hat{V}_i) \| \mathrm{Bern}(\overline{V}))\right],
\end{align*}
where (a) expresses mutual information in terms of KL divergence and (b) follows from the KL data processing inequality.  It follows that
\begin{align*}
\I(r^*; \hat{r})
\geq& \sum_{i=1}^L \E\left[\KL(\mathrm{Bern}(\hat{V}_i) \| \mathrm{Bern}(\overline{V}))\right] \\
=& L \E\left[\frac{1}{L} \sum_{i=1}^L \KL(\mathrm{Bern}(\hat{V}_i) \| \mathrm{Bern}(\overline{V}))\right] \\
\stackrel{(a)}{\geq}& L \KL(\mathrm{Bern}(\hat{V}) \| \mathrm{Bern}(\overline{V})),
\end{align*}
where (a) follows from Jensen's inequality.
For any $V^\dagger \in (\overline{V}, V_0)$ and $\sigma \in [0, 2(V^\dagger - \overline{V})]$, we recover an inequality that matches the more general result of Theorem \ref{thm:information-lowerbound}:
\begin{align*}
\I(r^*; \hat{r}) \geq& \frac{1}{\overline{p}_{\mathrm{att}}} \KL\big( \mathrm{Bern}(V^\dagger) \,\|\, \mathrm{Bern}(\overline V + \sigma/2) \big).
\end{align*}
This inequality follows from the fact that $L = 1/\overline{p}_{\mathrm{att}}$ and the fact that $\KL(\mathrm{Bern}(p) \,\|\, \mathrm{Bern}(q))$ decreases $q$ when $q < p$.

\subsection{Interpretation of the KL Factor}
\label{se:KL-factor-interpretation}

For representative cases, the KL factor scales with the number of bits about $r^*$ required to identify a safe outcome.  While the exact relation depends on properties of the joint distribution between $r^*$ and $\rho^*$, as well as the choice of $V^\dagger$, we offer in this section a simple example that conveys the essential insight.

Assume that the observation set $\observations$ is finite.  We say that an outcome $o \in \observations$ is safe if $r^*(o) \geq V^\dagger$.  For each $o \in \observations$, let $s(o)=\1(r^*(o) \geq V^\dagger)$ and $\observations^* = \{o \in \observations: s(o) = 1\}$.  We say that a random outcome $O$ is safe if $\Pr(O \in \observations^*) = 1$.

The number of bits required to identify a random outcome $O$ is $\I(r^*; O)$.  Let $O^*$ be a safe outcome that minimizes $\I(r^*; O)$.  Let $\tilde{O}$ be independent from $(r^*, O^*)$ and distributed identically with $O^*$.  Note that $\tilde{O}$ can be interpreted as an uninformed observation.

Assume for simplicity that $(s(o): o \in \observations)$ is iid and $\Pr(O^* = o|r^*) = 1/|\observations^*|$.  In other words, safety is independent across observations and the safe observation $O^*$ is distributed uniformly from the safe set $\observations^*$.  It follows that $\Pr(\tilde{O} = o) = 1/|\observations|$.  In other words, $\tilde{O}$ is drawn uniformly from all outcomes.

The number of bits required to identify $O^*$ satisfies
\begin{align*}
\I(r^*; O^*)
\stackrel{(a)}{=}& \E[\KL(\Pr(O^* \in \cdot | r^*) \| \Pr(O^* \in \cdot))] \\
=& \E\left[\sum_{o \in \observations} \Pr(O^* = o | r^*) \log \frac{\Pr(O^* = o | r^*)}{\Pr(O^* = o)}\right] \\
=& \E\left[\sum_{o \in \observations^*} \frac{1}{|\observations^*|} \log \frac{1/|\observations^*|}{1/|\observations|}\right] \\
=& \E\left[\log \frac{1}{\Pr(\tilde{O} \in \observations^* | \observations^*)}\right],
\end{align*}
where (a) is the expression for mutual information in terms of expected KL-divergence.

In contexts we have in mind, the primordial value $\overline{V}$ and primordial variance $\overline \sigma^2$ are both close to zero, while the safety threshold $V^\dagger$ is far from zero.  For simplicity, let's take $\overline{V} + \overline \sigma/2$ to be some small value $\epsilon > 0$.  And let's take $V^\dagger$ to be such that the probability that the uninformed outcome $\tilde{O}$ is safe is similarly small in the sense that $\E[\log (|\observations^*|/|\observations|)] = \log \epsilon$.  We then have 
\begin{align*}
\KL(\mathrm{Bern}(V^\dagger) || \mathrm{Bern}(\overline{V} + \overline \sigma/2)) 
=& V^\dagger \log \frac{V^\dagger}{\overline{V} + \overline \sigma/2} + (1-V^\dagger) \log \frac{1-V^\dagger}{1-(\overline V + \overline \sigma/2)} \\
=& V^\dagger \log \frac{1}{\epsilon} + V^\dagger \log V^\dagger + (1-V^\dagger) \log \frac{1-V^\dagger}{1-\epsilon}.
\end{align*}
For small $\epsilon$, the first term dominates, leading to an approximation
$$\KL(\mathrm{Bern}(V^\dagger) || \mathrm{Bern}(\overline{V} + \overline \sigma/2)) \approx V^\dagger \log \frac{1}{\epsilon} = V^\dagger \E\left[\log \frac{1}{\Pr(\tilde{O} \in \observations^* | \observations^*)}\right] = V^\dagger \I(r^*;O^*).$$
This quantity scales with the number of bits required to identify a safe outcome.

\subsection{Interpretation of the Attainability Factor}
\label{se:attainability-factor-interpretation}

While the number of bits required to identify a safe outcome can be very large,  the attainability factor dramatically amplifies the lower bound on $\I(r^*; \hat{r})$.  This factor arises because identifying a safe outcome is not enough.  The outcome needs to be attainable.  A contingency plan is needed to select an alternative safe outcome when the first is not attainable.  The attainability factor roughly represents the number of safe outcomes needed for a sufficient contingency plan.

The form $1/\sup_{o \in \observations} p_{\mathrm{att}}(o)$ is intuitive.  For simplicity, suppose that $p_{\mathrm{att}}(o)$ does not depend on $o$.  Let $\overline{p}_{\mathrm{att}}$ be the constant value taken, and suppose attainabilities $(\alpha^*(o): o \in \observations)$ are iid $\mathrm{Bernoulli}(\overline{p}_{\mathrm{att}})$.  Suppose we progress through a list of safe outcomes provided by a contingency plan until we arrive at the first one that is attainable.  Then we would expect to traverse $1/\overline{p}_{\mathrm{att}}$ outcomes.  Hence, the number of outcomes needed in the contingency plan scales with $1/\overline{p}_{\mathrm{att}} = 1/\sup_{o \in \observations} p_{\mathrm{att}}(o)$.

\subsection{Generalization Across Outcomes}

A limitation of Theorem \ref{thm:information-lowerbound} is in its assumption that rewards are iid across observations.  This implies that it is not possible to generalize across observations.  In particular, if $o' \neq o$ then learning about $r^*(o)$ does not inform the agent's estimate of $r^*(o')$.

We leave a more comprehensive treatment of generalization to future work. To offer basic insights, we will consider an extreme version of generalization where rewards are constant within but independent between cells of a known partition.  In particular, suppose that $\tilde{\observations}$ is a finite or countably infinite partition of $\observations$ such that, for each cell $\tilde{o} \in \tilde{O}$ and observations $o,o' \in \tilde{o}$ and $o'' \notin \tilde{o}$, $r^*(o) = r^*(o')$ and $r^*(o) \perp r^*(o'')$.  Hence, an agent can generalize perfectly within each cell but not at all between cells.  

It is unlikely that this relation ever holds exactly. Nevertheless, we assume this extreme case in order to discuss the role of generalization in our coding-agent example in the next section. To connect this with a more plausible notion of generalization, suppose there is a known notion of similarity such that, whenever $o$ and $o'$ are similar, the difference between $r^*(o)$ and $r^*(o')$ is small. Now consider clustering outcomes by similarity so that rewards are more correlated for random elements within the same cluster and less correlated for random elements from different clusters.  Our partition-based model approximates this structure by treating each cluster as a cell of a partition.  It approximates high within-cell correlation by the stronger assumption that rewards are constant and low cross-cell correlation by the assumption that rewards are independent.

Our definition of attainability extends to cells of a partition $\tilde{\observations}$.  In particular, for each $\tilde{o} \in \tilde{\observations}$, let
$$\alpha^*(\tilde{o}) = \sup_{\pi \in \Pi} \sum_{o \in \tilde{o}} \rho^*(o|\pi) \qquad \text{and} \qquad p_{\mathrm{att}}(\tilde{o}) = \E[\alpha^*(\tilde{o})].$$
Note that $\alpha^*(\tilde{o})$ is the maximum probability that conditioned on $\rho^*$, the realized outcome lies in the partition $\tilde{o}$.

When $r^*$ is constant within each cell, with some abuse of notation, we write $r^*(\tilde{o})$ to denote the reward at outcomes within a cell $\tilde{o}$.
The following extension of Theorem \ref{thm:information-lowerbound} addresses the extreme form of generalization we have described.
\begin{restatable}{theorem}{informationlowerboundgeneralization}\label{thm:information-lowerbound-generalization}{\bf (specification complexity with generalization)}
 If $\hat{V} \geq V^\dagger$, $r^* \perp \mathcal{L}_{\rho^*}$, and there exists a partition $\tilde{\observations}$ of $\observations$ such that $(r^*(\tilde{o}): \tilde{o} \in \tilde{\observations})$ is iid and, with probability one, $r^*$ is constant within each cell of $\tilde{\observations}$ then
\begin{align}
\I(r^*;\hat r) &\ge \frac{1}{\sup_{\tilde{o} \in \tilde{\observations}} p_{\mathrm{att}}(\tilde{o})} \KL(\mathrm{Bern}(V^\dagger) || \mathrm{Bern}(\overline{V} + \overline \sigma/2)).\label{eq:rhs-of-thm:information-lowerbound-generalization}
\end{align}
\end{restatable}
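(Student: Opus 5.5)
The plan is to reduce Theorem~\ref{thm:information-lowerbound-generalization} to Theorem~\ref{thm:information-lowerbound} by passing to the quotient (coarse-grained) problem on cells. Define a new outcome set $\tilde{\observations}$ and, for each environment $\rho$ and policy $\pi$, the pushforward $\tilde\rho(\tilde o\mid\pi)=\sum_{o\in\tilde o}\rho(o\mid\pi)$. Because $r^*$ is constant on cells, write $\tilde r^*(\tilde o)=r^*(\tilde o)$. Then $(\tilde r^*(\tilde o):\tilde o\in\tilde{\observations})$ is iid by hypothesis, $\tilde r^*$ and $r^*$ determine each other almost surely (the partition is known), and the attainability of a cell in the coarse problem, $\sup_\pi\tilde\rho^*(\tilde o\mid\pi)$, is exactly $\alpha^*(\tilde o)$ as defined before the theorem. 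Hence $p_{\mathrm{att}}$ in the coarse problem equals $p_{\mathrm{att}}(\tilde o)$. Moreover, $\tilde{\mathcal L}_{\rho^*}$ is a function of $\mathcal L_{\rho^*}$, so $r^*\perp\mathcal L_{\rho^*}$ gives $\tilde r^*\perp\tilde{\mathcal L}_{\rho^*}$.

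The delicate step is the proxy reward. $\hat r$ need not be constant on cells, so the agent's choice $\hat\pi=\pi_{\rho^*,\hat r}$ is not a maximizer of any cell-level reward. I would therefore not apply Theorem~\ref{thm:information-lowerbound} as a black box. Instead I would rerun its proof with cells in place of outcomes, interpreting that proof as the labelling argument of the toy analysis, generalized. The quantities that proof uses are:
\begin{itemize}
\item the value $\hat V=\E[\sum_{\tilde o}\tilde\rho^*(\tilde o\mid\hat\pi)\tilde r^*(\tilde o)]$, which is unchanged by coarse-graining;
\item the primordial quantities $\overline V$ and $\overline\sigma$, which are also unchanged;
\item the attainability of the selected cell distribution, bounded by $\sup_{\tilde o}p_{\mathrm{att}}(\tilde o)$;
\item a data-processing step $\I(r^*;\hat r)\ge\I(\tilde r^*;\text{selected cell distribution}\mid\tilde{\mathcal L}_{\rho^*})$.
\end{itemize}
That last step holds because, conditioned on $\mathcal L_{\rho^*}$ (independent of $r^*$, and $\hat r\perp\rho^*\mid r^*$), the selected outcome distribution is a function of $(\hat r,\mathcal L_{\rho^*})$ via consequentialist tie-breaking. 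Its cell pushforward is a further function, so $\I(r^*;\hat r)=\I(r^*;\hat r\mid\mathcal L_{\rho^*})\ge\I(\tilde r^*;\tilde\rho^*(\cdot\mid\hat\pi)\mid\mathcal L_{\rho^*})$.

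From there I would follow the argument of Theorem~\ref{thm:information-lowerbound} verbatim on the cell level. First, decompose the information across cells using independence of $\tilde r^*(\tilde o)$. Next, convert each term to a KL divergence between the joint law and a decoupled copy, in which the selected distribution is drawn independently of $\tilde r^*$. Then apply KL data processing to the reward realized under the selected distribution. Finally, use Jensen plus the fact that a decoupled selection achieves expected reward at most $\overline V+\overline\sigma/2$. That last fact holds because a decoupled selection is an uninformed choice, and $\overline r$ is the best uninformed reward; the $\overline\sigma/2$ slack absorbs the passage from $[-1,1]$ rewards to Bernoulli comparisons, exactly as in the original proof. The weighting by attainability then yields the factor $1/\sup_{\tilde o}p_{\mathrm{att}}(\tilde o)$.

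The main obstacle is this last identification: showing that the decoupled cell selection is no better than the primordial benchmark. This requires exhibiting, for any decoupled selection rule, an uninformed reward function on $\observations$ whose maximizer reproduces it. I would construct this by taking a reward constant on cells, ranking cells by an independent copy of $\hat r$'s induced cell ordering. If that matching fails because of ties within cells, I would fall back on consequentialist tie-breaking and the well-ordering assumption to choose a canonical representative outcome per cell.
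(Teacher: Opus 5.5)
\textbf{There is a genuine gap, at the step you yourself called delicate.} Your data-processing inequality $\I(r^*;\hat r)=\I(r^*;\hat r\mid\mathcal{L}_{\rho^*})\ge\I(\tilde r^*;\tilde\rho^*(\cdot\mid\hat\pi)\mid\mathcal{L}_{\rho^*})$ is valid. The trouble is that its right-hand side is in general too small to carry the factor $1/\sup_{\tilde o}p_{\mathrm{att}}(\tilde o)$, so no argument that starts from it can succeed. Once you condition on the attainable set, the selected distribution reveals only the branch of the contingency plan that is actually realized.

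Here is a concrete instance. Take singleton cells and a variant of the paper's toy problem in which the labels $\ell(o)$ are deterministic and only $\ell^*$ is random, uniform on $\{1,\ldots,L\}$. Take $p_{\mathrm{pol}}$ large enough that some $V^\dagger>\overline V+\overline\sigma/2$ lies below $V_0$. Let $\hat r$ be the indicator of $\{O_1,\ldots,O_L\}$, where $O_i$ is the first safe outcome with label $i$.
\begin{itemize}
\item Then $\hat V=1$ and $\I(r^*;\hat r)=\sum_i\H(O_i)$.
\item But $\rho^*(\cdot\mid\hat\pi)$ is the point mass at $O_{\ell^*}$. So $\I(r^*;O_{\ell^*}\mid\mathcal{L}_{\rho^*})=\frac1L\sum_i\H(O_i)$, which is the entropy of a single geometric random variable and does not grow with $L$.
\item The right-hand side of the theorem, by contrast, grows linearly in $L=1/\overline p_{\mathrm{att}}$.
\end{itemize}

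The amplification must therefore come from what $\hat r$ says about outcomes that are \emph{not} selected in the realized environment. The proof of Theorem~\ref{thm:information-lowerbound} is not the toy labelling argument. It captures this information through the $\hat r$-ranking $o_{\hat r,1},\ldots,o_{\hat r,T+1}$ with $T=\lfloor 1/p_{\mathrm{att}}'\rfloor$, which is a function of $\hat r$ alone. It then controls $\hat V$ with the frontloading Lemma~\ref{le:frontloading}, which needs $\E[r^*(o_{\hat r,t})\mid\hat r]$ to be nonincreasing in $t$, i.e.\ coherence of $\hat r$ (Lemma~\ref{le:dominating-performance}). For a proxy that is not constant on cells there is no cell ranking with this property, so a ``verbatim'' rerun breaks exactly there.

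\textbf{The missing idea is a one-line reduction that removes the delicate step altogether:} replace $\hat r$ by $\hat r'=\E[r^*\mid\hat r]$.
\begin{itemize}
\item Data processing gives $\I(r^*;\hat r')\le\I(r^*;\hat r)$.
\item Lemma~\ref{co:coherent-is-better} gives that $\hat r'$ is coherent, with value at least $\hat V$.
\item Because $r^*$ is constant on cells, so is $\hat r'$.
\end{itemize}
The agent's choice under $\hat r'$ therefore maximizes a genuine cell-level reward. Your quotient construction is correct as written, including the identification of cell attainability and $\tilde r^*\perp\tilde{\mathcal{L}}_{\rho^*}$. With it you can apply the iid result, Corollary~\ref{co:semi-general-incoherent}, as a black box. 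Finish with $\overline V_+\le\overline V+\overline\sigma/2$ (Lemma~\ref{lm:bound-expectation-of-positive-component-simplified}) and monotonicity of the Bernoulli KL in its second argument. This is the paper's route.

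Finally, the ``main obstacle'' you name at the end is not one. With iid cell rewards and $r^*\perp\mathcal{L}_{\rho^*}$, any selection independent of $r^*$ yields a reward with the same law as $r^*(\overline O)$. So there is no need to construct a matching uninformed reward function.
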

This result is proved in Appendix \ref{apx:proofs}.  The conclusions are similar to those of Theorem \ref{thm:information-lowerbound} except that the attainability factor tends to be smaller.  This is because the denominator depends on attainability of cells rather than individual outcomes.  This factor can be interpreted as the number of safe cells needed for a sufficient contingency plan.

While we leave other forms of generalization for future work, understanding implications of the extreme form assumed by Theorem \ref{thm:information-lowerbound-generalization} can offer insight into the role of generalization, as we will discuss in the next section.

\subsection{Example: Coding Agent}

The lower bound of Theorem \ref{thm:information-lowerbound} grows with the number of bits required to identify a safe outcome times the number of safe outcomes needed for a sufficient contingency plan.  Let us now interpret the scale of these quantities in the context of our coding agent example.

Each outcome $o \in \observations$ is a trajectory of future observations, such as prompts requesting code behaviors, ingested data, and logs from code testing and usage.  Specifying a single outcome might require, say, terabytes. Although that is a large amount, it is not beyond reach.  But each specific outcome is highly unlikely to be attainable.  This necessitates an elaborate contingency plan that lists a huge number -- perhaps millions or billions -- of outcomes.  Specifying millions or billions of terabytes becomes prohibitive.

Our preceding interpretation is motivated by Theorem \ref{thm:information-lowerbound}, which assumes that rewards are iid across outcomes.  Does specification of a safe proxy reward function become manageable when the agent can generalize across outcomes?  As we now explain, generalization is unlikely to overcome the challenge.

Consider generalization of the sort treated by Theorem \ref{thm:information-lowerbound-generalization}.  That lower bound is again a product of KL and attainability factors.  The KL factor is of the same form as before.  The attainability factor depends on attainability of cells rather than individual outcomes.  So the question is whether this attainability multiplier would still be very large if the agent can generalize to a practical extent.

To answer this question, we focus on the special case where environment dynamics are deterministic.  In other words, for each $\pi \in \Pi$, $\rho^*(\cdot|\pi)$ is one-hot.  It is natural to expect that similar outcomes are nearly as attainable.  An extreme form of this assumption would be that, for outcomes $o, o' \in \tilde{o}$ within the same cell, $\alpha^*(o) = \alpha^*(o')$.  In this case,
$$\sup_{\tilde{o} \in \tilde{\observations}} p_{\mathrm{att}}(\tilde{o}) = \sup_{\tilde{o} \in \tilde{\observations}} \E\left[\sup_{\pi \in \Pi} \sum_{o \in \tilde{o}} \rho^*(o|\pi)\right] = \sup_{\tilde{o} \in \tilde{\observations}} \E\left[\sup_{o \in \tilde{o}} \alpha^*(o)\right] = \sup_{o \in \observations} p_{\mathrm{att}}(o).$$
It follows that, in this case, the attainability factor for a cell is equal to that for a specific outcome.  Hence, if  one is in the millions or billions, so is the other.

\section{Limiting Capabilities}\label{sec:limiting-capabilities}

Our preceding result indicates that, unless the number of bits $\I(r^*; \hat{r})$ that specify the proxy reward function is astronomical, a sufficiently capable agent that aims to maximize expected proxy reward will induce catastrophe.  The agent is better off applying an uninformed policy to attain the contemporary level of performance $V_0$.  This result formalizes \textit{Goodhart's Law}: when a measure becomes a target, it ceases to be a good measure.

Now suppose that we constrain communication to a modest number of bits $K$.  Is there a sense in which the proxy reward can be useful to the agent even though $\I(r^*; \hat{r}) \leq K$?  In this section, we show that, if we sufficiently constrain agent capabilities, the agent can attain performance exceeding the contemporary level $V_0$.  We refer to this observation as \textit{Goldilocks' Law}.

To simplify arguments in this section, we assume that $\Pi$ and $\observations$ are finite. Define
\[
\Pi_0 = \argmax_{\pi \in \Pi} 
\E\!\left[\sum_{o \in \observations} \rho^*(o|\pi) r^*(o)\right].
\]
We refer to $\Pi_0$ as the set of \textit{optimal uninformed policies}. Let $P_0 \in \Delta_\Pi$ denote the uniform distribution over $\Pi_0$.

We now define a regularized version of $\hat \pi$. For any $\lambda>0$, let 
\begin{align*}
\hat{P}_{\lambda} \in \argmax_{P \in \Delta_\Pi} \left(\lambda \sum_{\pi \in \Pi} P(\pi) \sum_{o \in \observations} \rho^*(o|\pi) \hat{r}(o) - \KL(P \| P_0) \right).
\end{align*}
The parameter $\lambda$ can be interpreted as optimization pressure, while the resulting distribution $\hat{P}_{\lambda}$ can be interpreted as a (softly) constrained version of $\hat\pi$.  In particular, the soft constraint keeps the distribution close to $P_0$. Reducing optimization pressure corresponds to tightening constraints. The value generated by $\hat P_\lambda$ is 
\[
\hat V_\lambda
=
\E\!\left[
\sum_{\pi \in \Pi}
\hat P_\lambda(\pi) \sum_{o \in \observations} \rho^*(o|\pi) r^*(o)
\right].
\]
The following result establishes that for any number of bits communicated about the true reward function $r^*$, there exists a proxy $\hat r$ such that, if capabilities are sufficiently constrained, performance exceeds the contemporary level.
\begin{restatable}{theorem}{thmlimitingcapabilities}\label{thm:limiting-capabilities}
{\bf (Goldilocks' Law)}
For all $K> 0$, if
\begin{equation}
\label{as:uninformed-policy-variation}
\Pr\left(\min_{\pi \in \Pi_0 }\sum_{o \in \observations} \rho^*(o|\pi) r^*(o) < \max_{\pi \in \Pi_0}\sum_{o \in \observations} \rho^*(o|\pi) r^*(o)\right) > 0,
\end{equation}
then there exists $\lambda > 0$ and $\hat r$ such that
$$\I(r^*; \hat{r}) \leq K \qquad \text{and} \qquad \hat V_\lambda > V_0.$$
\end{restatable}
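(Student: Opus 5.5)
The plan is to construct a proxy $\hat r$ that carries at most $K$ bits about $r^*$ yet is positively correlated with true performance on the policies in $\Pi_0$, and then to run a first-order perturbation argument in $\lambda$ around $P_0$.

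For any fixed $\hat r$, the KL-regularized problem has the Gibbs solution
\[
\hat P_\lambda(\pi) \propto P_0(\pi)\exp\big(\lambda \hat u(\pi)\big), \qquad \hat u(\pi)=\sum_{o}\rho^*(o|\pi)\hat r(o).
\]
Its support is contained in $\Pi_0$. Write $u^*(\pi)=\sum_o \rho^*(o|\pi) r^*(o)$. By definition of $\Pi_0$ we have $\E[u^*(\pi)]=V_0$ for every $\pi\in\Pi_0$. At $\lambda=0$ we get $\hat P_0=P_0$, so $\hat V_0=V_0$. The map $\lambda\mapsto \hat V_\lambda$ is smooth because $\Pi$ is finite, and its derivative at $0$ is
\[
\frac{d}{d\lambda}\hat V_\lambda\Big|_{\lambda=0}=\E\big[\mathrm{Cov}_{P_0}(\hat u,u^*)\big],
\]
the expected covariance under the uniform distribution on $\Pi_0$. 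It therefore suffices to build $\hat r$ with $\I(r^*;\hat r)\le K$ and $\E[\mathrm{Cov}_{P_0}(\hat u,u^*)]>0$. A small enough $\lambda>0$ then gives $\hat V_\lambda>V_0$.

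\textbf{Construction of $\hat r$.} The natural candidate is a noisy version of $r^*$. Take a Bernoulli($\epsilon$) switch $B$ independent of everything, and set $\hat r=r^*$ if $B=1$ and $\hat r=\tilde r$ otherwise, where $\tilde r$ is an independent copy of $r^*$. Then $\hat r\perp\rho^*\mid r^*$ as required.

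\textbf{Information bound.} The mixture argument bounds $\I(r^*;\hat r)$. Since $\I(r^*;\hat r)\le \I(r^*;\hat r,B)=\epsilon\,\H(r^*)$, the bound is only useful if $\H(r^*)<\infty$. In general $r^*$ could have infinite entropy, so I would instead reveal only a finite-valued quantization $q(r^*)$ of $r^*$ on the finitely many outcomes. One option is to round each $r^*(o)\in[-1,1]$ to a finite grid, with $\hat r$ equal to the rounded value when $B=1$ and an independent draw of the rounded quantity when $B=0$. Then $\I(r^*;\hat r)\le\epsilon\,\H(q(r^*))\le K$ for small $\epsilon$. A tie-breaking or well-ordering perturbation can make $\hat r$ satisfy the reverse well-ordering assumption.

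\textbf{Positive covariance.} Here $\hat u=\epsilon u_q+(1-\epsilon)\tilde u$, where $u_q(\pi)=\sum_o\rho^*(o|\pi)q(r^*)(o)$ and $\tilde u$ is built from $\tilde r$. The $\tilde u$ term contributes
\[
\E[\mathrm{Cov}_{P_0}(\tilde u,u^*)]
\]
in the mixture. This term need not vanish, since $\tilde u$ and $u^*$ share $\rho^*$. To avoid it I would use $B=0\Rightarrow\hat r\equiv 0$, a constant independent of everything, in place of $\tilde r$. Then
\[
\E[\mathrm{Cov}_{P_0}(\hat u,u^*)]=\epsilon\,\E[\mathrm{Cov}_{P_0}(u_q,u^*)].
\]
As the grid is refined, $u_q\to u^*$ uniformly, so this tends to $\epsilon\,\E[\mathrm{Var}_{P_0}(u^*)]$. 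Assumption~\eqref{as:uninformed-policy-variation} makes this strictly positive, because with positive probability $u^*$ is nonconstant on $\Pi_0$. Hence a fine enough grid gives a positive derivative. Finally, choose $\epsilon$ small to meet the $K$ budget, which is possible since the grid entropy is finite.

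\textbf{Main obstacle.} I expect the hardest part to be getting the information accounting and the positivity to hold simultaneously while respecting the model's constraints: finite entropy via quantization, conditional independence from $\rho^*$, and the well-ordering assumption. I also need to check that the error from the quantization and the constant branch does not destroy the sign of the first-order term.
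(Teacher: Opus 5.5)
Your proof is correct, and its skeleton is the paper's. It has the Gibbs form of $\hat P_\lambda$, $\hat V_\lambda = V_0$ at $\lambda=0$, and the derivative at $0$ equal to $\mathbb{E}[\mathrm{Cov}_{P_0}(\hat u,u^*)]$, which is exactly what the paper's policy- and value-derivative lemmas compute. Positivity then comes from $\mathbb{E}[\mathrm{Var}_{P_0}(u^*)]>0$.

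The genuine difference is the proxy. The paper takes $\hat r=\eta r^*+w$ with $w\sim\mathcal{N}(0,I)$ independent of $r^*$. Because the noise is additive and independent, the derivative is exactly $\eta\,\mathbb{E}[\mathrm{Var}_{P_0}(u^*)]$, with no approximation. The budget is met through the Gaussian-channel bound $\mathbb{I}(r^*;\hat r)\le\frac{|\mathcal{O}|}{2}\log(1+\eta^2)$ (using $|r^*(o)|\le 1$), which needs no finiteness of $\mathbb{H}(r^*)$. The price is an unbounded proxy and a capacity bound that the paper leaves implicit.

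Your erasure-type proxy $\hat r=B\,q(r^*)$ keeps everything bounded and finite-valued. The information bound becomes the one-line $\epsilon\,\mathbb{H}(q(r^*))$, and differentiating under $\mathbb{E}$ is trivial; the cost is a quantization error. You correctly diagnosed both pitfalls along the way:
\begin{itemize}
\item The unquantized $Br^*$ actually fits the paper's independent-noise template (scaling $\epsilon$, $v_o=B-\epsilon$). It fails only because it leaks infinite information whenever $\mathbb{H}(r^*)=\infty$, e.g.\ for continuous $r^*$.
\item Mixing in an independent copy $\tilde r$ would add a spurious covariance through the shared $\rho^*$, since this theorem does not assume $r^*\perp\mathcal{L}_{\rho^*}$.
\end{itemize}

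The obstacle you flag at the end is already closed by your own argument. Since $|u_q-u^*|\le\delta$ pointwise and $|u^*|\le 1$, Cauchy--Schwarz gives $\mathbb{E}[\mathrm{Cov}_{P_0}(u_q,u^*)]\ge\mathbb{E}[\mathrm{Var}_{P_0}(u^*)]-\delta$. The derivative is $\epsilon$ times a quantity that does not depend on $\epsilon$, and you fix the grid before choosing $\epsilon$, so the two choices do not interact.

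Two small points. First, the identity $\hat u=\epsilon u_q+(1-\epsilon)\tilde u$ is not true pointwise; pointwise it is $\hat u=Bu_q+(1-B)\tilde u$, and the $\epsilon$-weighting only appears after taking the expected covariance, using independence of $B$. Second, no perturbation is needed for the reverse well-ordering assumption, since every real-valued $\hat r$ on a finite $\mathcal{O}$ satisfies it automatically.
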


The condition (\ref{as:uninformed-policy-variation}) ensures that with positive probability some optimal uninformed policies are better than others when assessed with respect to the true environment $\rho^*$ and reward function $r^*$.

\begin{figure}[htbp]
\centering
\includegraphics[width=0.7\columnwidth]{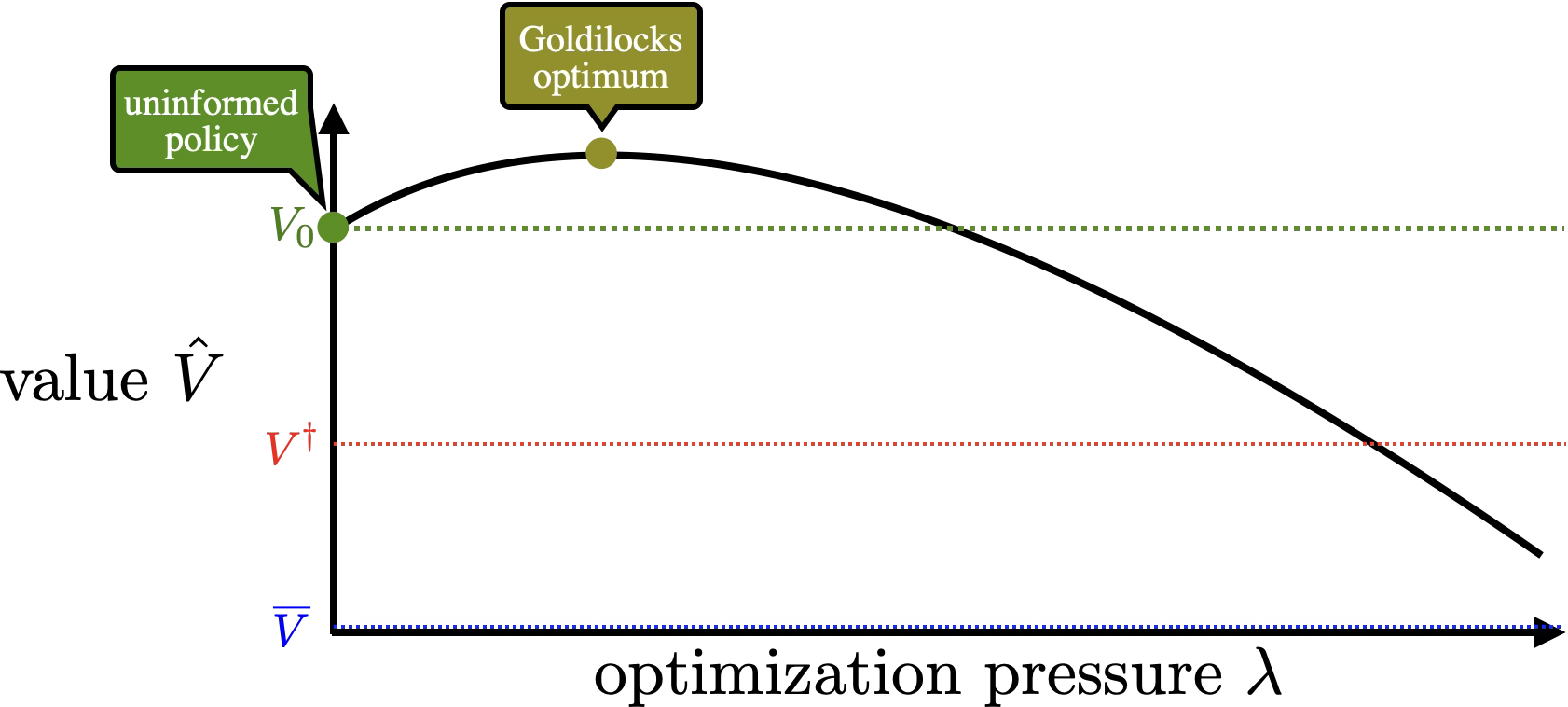}
\caption{Goldilocks' law: while a sufficiently capable agent induces catastrophe, 
if we sufficiently constrain agent capabilities, the agent can attain performance exceeding the contemporary level.}
\label{fig:goldilocks}
\end{figure}

\section{Mitigations}
\label{sec:mitigations}

We see at least two approaches for mitigating the risks discussed in this paper. 

\textbf{Constraining Capabilities.} As discussed in \Cref{sec:limiting-capabilities}, and also by others (see, for example, \citet{pan2022effectsrewardmisspecificationmapping}), constraining capabilities can mitigate the risks arising from a misspecified reward function. In practical AI systems, this is often achieved through one of two regularization techniques:
\begin{enumerate}
    \item Regularizing the policy to remain close to a base policy. For example, large language models (LLMs) are often regularized toward their pretrained models during post-training RL \citep{laidlaw2024correlated}.
    \item Early stopping, where the agent is allowed to learn from the environment only for a limited period of time \citep{gao2022scalinglawsrewardmodel}.
\end{enumerate}

Both approaches can be interpreted as limiting the amount of information the agent acquires about the environment.

However, constraining capabilities also limits the potential value an AI system can produce. Finding mitigation strategies that reduce risk without sacrificing capability is therefore an important open problem. One candidate approach is online learning of human preferences.

\textbf{Online Learning of Human Preferences.} Our theory is developed for a setting where the reward function is specified once up front, before the agent learns about the environment. The reward function may be specified manually via code, or learned through data, or some combination thereof. However, being restricted to specifying a proxy reward function up front is clearly not optimal. The agent should be able to do better by continuously updating the proxy reward function as it interacts with the designer and the environment. This reduces the burden to specify a good proxy reward function up front in three main ways:
\begin{enumerate}
    \item the designer can provide feedback to the agent as it observes the agent acting in the environment, 
    \item the agent can actively query the designer about outcomes the agent is uncertain about,
    \item the agent may learn directly from the environment about what the designer wants.
\end{enumerate}

We discuss some approaches to continual reward learning in the related work section.

\section{Related Work}
\textbf{Misalignment risk}. There is a large body of work, both academic and non-academic, discussing potential risks from advanced AI systems whose objectives are misaligned with human preferences. Early influential discussions include \citet{bostrom2012superintelligent, bostrom2014superintelligence,omohundro2018basic}. Subsequent work has analyzed mechanisms through which misaligned systems may cause harm \citep{amodei2016concrete,barnett2024without}, and the potential for advanced AI systems to seek and accumulate power \citep{NEURIPS2021_c26820b8,carlsmith2021power}. Broader discussions of AI risk and alignment also appear in \citep{russell2019human,yudkowsky2025if}.

\textbf{Reward hacking.} Many examples demonstrate agents exploiting flaws in reward functions, typically producing benign but unintended behavior \citep{krakovna2019specification,leike2017gridworlds,nguyen2015deep}. Some works formalize reward hacking and reward misspecification. For example, \citet{skalse2022defining} and \citet{laidlaw2024correlated}  provide a formal definition of reward hacking. The works of \citet{pan2022effectsrewardmisspecificationmapping,laidlaw2024correlated} analyze how misspecified reward functions lead to proxy optimization failures. Closely related is the literature on Goodhart’s law, which studies how optimizing proxy objectives can diverge from intended goals \citep{manheim2019categorizingvariantsgoodhartslaw,elmhamdi2024goodhartslawapplicationvalue,strathern1997improving}.  %

\textbf{Limiting capabilities.} Several works discuss limiting optimization power as a way of mitigating risks from misspecified objectives \citep{zhuang2020consequences,karwowski2023goodhart,taylor2016quantilizers}. The work by \citep{gao2022scalinglawsrewardmodel} studies the effects of increasing optimization pressure on proxy objectives.

\textbf{Reward learning.} A large literature studies methods for learning reward functions rather than manually specifying them. Approaches include learning reward models from human comparisons \citep{ibarz2018reward}, and active learning approaches \citep{hadfield2016cooperative,sadigh2017active,NIPS2012_16c222aa}. Scalable oversight techniques such as debate \citep{irving2018ai} and reward modeling \citep{leike2018scalable} aim to enable reliable supervision of increasingly capable systems.

\textbf{Economic principal--agent models.} The problem of specifying reward functions in AI is closely related to the principal--agent problem in economics \citep{hadfield2019incomplete}, where a principal designs incentives for an agent whose actions are imperfectly observable \citep{holmstrom1991multitask,laffont2001incentive}. These models highlight how optimizing measurable proxies can lead agents to pursue outcomes that diverge from the principal's true objectives.

\section{Conclusion}
In this work, we give conditions under which virtually any consequentialist objective induces catastrophic outcomes, if the agent is sufficiently capable. Specifically, to specify a safe objective, the AI designer would have to provide a prohibitively large amount of information. We also established conditions under which limiting capabilities not only averts catastrophic performance, but can enable the agent to produce valuable outcomes.

We highlight two research directions that we find particularly important. In practice, AI systems are often trained using reinforcement learning with early stopping and are then deployed with frozen parameters. Such systems are not explicitly optimizing a consequentialist objective at deployment time. Nevertheless, some worry that these systems may implicitly behave as if they are optimizing one, potentially posing catastrophic risks. Future work should therefore establish conditions for when frozen AI systems are safe and not safe.

Finally, as discussed in Section \ref{sec:mitigations}, we hope future work studies the conditions under which catastrophic performance can and cannot be averted through continual updating of the objective.

\section*{Acknowledgements}

This work was supported by grants from the Huxley Foundation and the UK AI Security Institute.  We thank John Duchi, Ramesh Johari, Stuart Russell, Renyuan Xu, Wanqiao Xu, and Yifan Zhu for helpful comments.

\clearpage

\bibliography{references}

@article{bostrom2012superintelligent,
  title={The superintelligent will: Motivation and instrumental rationality in advanced artificial agents},
  author={Bostrom, Nick},
  journal={Minds and Machines},
  volume={22},
  number={2},
  pages={71--85},
  year={2012},
  publisher={Springer}
}

@book{bostrom2014superintelligence,
  title={Superintelligence: Paths, Dangers, Strategies},
  author={Bostrom, Nick},
  year={2014},
  publisher={Oxford University Press}
}

@book{russell2019human,
  title={Human compatible: {AI} and the problem of control},
  author={Russell, Stuart},
  year={2019},
  publisher={Penguin UK}
}

@article{amodei2016concrete,
  title={Concrete Problems in {AI} Safety},
  author={Amodei, Dario and Olah, Chris and Steinhardt, Jacob and Christiano, Paul and Schulman, John and Mané, Dan},
  journal={arXiv preprint arXiv:1606.06565},
  year={2016}
}

@techreport{carlsmith2021power,
  title={Is Power-Seeking {AI} an Existential Risk?},
  author={Carlsmith, Joe},
  institution={Open Philanthropy},
  year={2021}
}

@inproceedings{NEURIPS2021_c26820b8,
 author = {Turner, Alex and Smith, Logan and Shah, Rohin and Critch, Andrew and Tadepalli, Prasad},
 booktitle = {Advances in Neural Information Processing Systems},
 editor = {M. Ranzato and A. Beygelzimer and Y. Dauphin and P.S. Liang and J. Wortman Vaughan},
 pages = {23063--23074},
 publisher = {Curran Associates, Inc.},
 title = {Optimal Policies Tend To Seek Power},
 url = {https://proceedings.neurips.cc/paper_files/paper/2021/file/c26820b8a4c1b3c2aa868d6d57e14a79-Paper.pdf},
 volume = {34},
 year = {2021}
}

@article{barnett2024without,
  title={WITHOUT FUNDAMENTAL ADVANCES, MISALIGNMENT AND CATASTROPHE ARE THE DEFAULT OUTCOMES OF TRAINING POWERFUL {AI}},
  author={Barnett, Peter and Gillen, Jeremy},
  year={2024}
}

@book{yudkowsky2025if,
  title={If Anyone Builds It, Everyone Dies: Why Superhuman AI Would Kill Us All},
  author={Yudkowsky, Eliezer and Soares, Nate},
  year={2025},
  publisher={Little, Brown and Company},
  isbn={978-0316595643},
}

@incollection{omohundro2018basic,
  title={The basic {AI} drives},
  author={Omohundro, Stephen M},
  booktitle={Artificial intelligence safety and security},
  pages={47--55},
  year={2018},
  publisher={Chapman and Hall/CRC}
}

@article{skalse2022defining,
  title={Defining and characterizing reward gaming},
  author={Skalse, Joar and Howe, Nikolaus and Krasheninnikov, Dmitrii and Krueger, David},
  journal={Advances in Neural Information Processing Systems},
  volume={35},
  pages={9460--9471},
  year={2022}
}

@misc{pan2022effectsrewardmisspecificationmapping,
  title={The Effects of Reward Misspecification: Mapping and Mitigating Misaligned Models},
  author={Pan, Alexander and Bhatia, Kush and Steinhardt, Jacob},
  year={2022},
  eprint={2201.03544},
  archivePrefix={arXiv},
  primaryClass={cs.LG},
  url={https://arxiv.org/abs/2201.03544}
}

@article{laidlaw2024correlated,
  title={Correlated proxies: A new definition and improved mitigation for reward hacking},
  author={Laidlaw, Cassidy and Singhal, Shivam and Dragan, Anca},
  journal={arXiv preprint arXiv:2403.03185},
  year={2024}
}

@article{krakovna2019specification,
  title={Specification Gaming: The Flip Side of {AI} Ingenuity},
  author={Krakovna, Viktoria and Uesato, Jonathan and Mikulik, Vladimir and Rahtz, Matthew and Everitt, Tom and Kumar, Ramana and Kenton, Zachary and Legg, Shane},
  journal={arXiv preprint arXiv:1909.07563},
  year={2019}
}

@article{leike2017gridworlds,
  title={{AI} Safety Gridworlds},
  author={Leike, Jan and Martic, Miljan and Krakovna, Viktoria and Ortega, Pedro and Everitt, Tom and Lefrancq, Andrew and Orseau, Laurent and Legg, Shane},
  journal={arXiv preprint arXiv:1711.09883},
  year={2017}
}

@inproceedings{
zhong2026impossiblebench,
title={ImpossibleBench: Measuring {LLM}s' Propensity of Exploiting Test Cases},
author={Ziqian Zhong and Aditi Raghunathan and Nicholas Carlini},
booktitle={The Fourteenth International Conference on Learning Representations},
year={2026},
url={https://openreview.net/forum?id=SeO4vyAj7E}
}

@incollection{bostrom2003ethical,
  author    = {Nick Bostrom},
  title     = {Ethical Issues in Advanced Artificial Intelligence},
  booktitle = {Cognitive, Emotive and Ethical Aspects of Decision Making in Humans and in Artificial Intelligence},
  editor    = {Iva Smit and George E. Lasker},
  volume    = {2},
  pages     = {12--17},
  year      = {2003},
  publisher = {International Institute of Advanced Studies in Systems Research and Cybernetics},
  address   = {Windsor, ON, Canada},
  url       = {https://nickbostrom.com/ethics/ai},
  urldate   = {2026-03-23}
}

@misc{manheim2019categorizingvariantsgoodhartslaw,
  title={Categorizing Variants of {G}oodhart's Law},
  author={David Manheim and Scott Garrabrant},
  year={2019},
  eprint={1803.04585},
  archivePrefix={arXiv},
  primaryClass={cs.AI},
  url={https://arxiv.org/abs/1803.04585}
}

@misc{elmhamdi2024goodhartslawapplicationvalue,
  title={On {G}oodhart's law, with an application to value alignment},
  author={El-Mahdi El-Mhamdi and Lê-Nguyên Hoang},
  year={2024},
  eprint={2410.09638},
  archivePrefix={arXiv},
  primaryClass={stat.ML},
  url={https://arxiv.org/abs/2410.09638}
}

@article{strathern1997improving,
  title={‘{I}mproving ratings’: audit in the {British} University system},
  author={Strathern, Marilyn},
  journal={European review},
  volume={5},
  number={3},
  pages={305--321},
  year={1997},
  publisher={Cambridge University Press}
}

@inproceedings{di2022goal,
  title={Goal misgeneralization in deep reinforcement learning},
  author={Di Langosco, Lauro Langosco and Koch, Jack and Sharkey, Lee D and Pfau, Jacob and Krueger, David},
  booktitle={International Conference on Machine Learning},
  pages={12004--12019},
  year={2022},
  organization={PMLR}
}

@article{hubinger2019risks,
  title={Risks from learned optimization in advanced machine learning systems},
  author={Hubinger, Evan and van Merwijk, Chris and Mikulik, Vladimir and Skalse, Joar and Garrabrant, Scott},
  journal={arXiv preprint arXiv:1906.01820},
  year={2019}
}

@article{ibarz2018reward,
  title={Reward learning from human preferences and demonstrations in {A}tari},
  author={Ibarz, Borja and Leike, Jan and Pohlen, Tobias and Irving, Geoffrey and Legg, Shane and Amodei, Dario},
  journal={Advances in neural information processing systems},
  volume={31},
  year={2018}
}

@article{hadfield2016cooperative,
  title={Cooperative inverse reinforcement learning},
  author={Hadfield-Menell, Dylan and Russell, Stuart J and Abbeel, Pieter and Dragan, Anca},
  journal={Advances in neural information processing systems},
  volume={29},
  year={2016}
}

@misc{gao2022scalinglawsrewardmodel,
  title={Scaling Laws for Reward Model Overoptimization},
  author={Leo Gao and John Schulman and Jacob Hilton},
  year={2022},
  eprint={2210.10760},
  archivePrefix={arXiv},
  primaryClass={cs.LG},
  url={https://arxiv.org/abs/2210.10760}
}

@inproceedings{taylor2016quantilizers,
  title={Quantilizers: A Safer Alternative to Maximizers for Limited Optimization},
  author={Taylor, Jessica},
  booktitle={Workshops of the Thirtieth AAAI Conference on Artificial Intelligence: AI, Ethics, and Society},
  year={2016},
  pages={1--9},
  url={https://intelligence.org/files/QuantilizersSaferAlternative.pdf}
}

@inproceedings{nguyen2015deep,
  title={Deep neural networks are easily fooled: High confidence predictions for unrecognizable images},
  author={Nguyen, Anh and Yosinski, Jason and Clune, Jeff},
  booktitle={Proceedings of the IEEE Conference on Computer Vision and Pattern Recognition},
  pages={427--436},
  year={2015}
}

@article{zhuang2020consequences,
  title={Consequences of misaligned {AI}},
  author={Zhuang, Simon and Hadfield-Menell, Dylan},
  journal={Advances in Neural Information Processing Systems},
  volume={33},
  pages={15763--15773},
  year={2020}
}

@article{karwowski2023goodhart,
  title={Goodhart's law in reinforcement learning},
  author={Karwowski, Jacek and Hayman, Oliver and Bai, Xingjian and Kiendlhofer, Klaus and Griffin, Charlie and Skalse, Joar},
  journal={arXiv preprint arXiv:2310.09144},
  year={2023}
}

@article{irving2018ai,
  title={{AI} safety via debate},
  author={Irving, Geoffrey and Christiano, Paul and Amodei, Dario},
  journal={arXiv preprint arXiv:1805.00899},
  year={2018}
}

@article{leike2018scalable,
  title={Scalable agent alignment via reward modeling: a research direction},
  author={Leike, Jan and Krueger, David and Everitt, Tom and Martic, Miljan and Maini, Vishal and Legg, Shane},
  journal={arXiv preprint arXiv:1811.07871},
  year={2018}
}

@article{sadigh2017active,
  title={Active preference-based learning of reward functions},
  author={Sadigh, Dorsa and Dragan, Anca and Sastry, Shankar and Seshia, Sanjit},
  year={2017}
}

@inproceedings{NIPS2012_16c222aa,
 author = {Wilson, Aaron and Fern, Alan and Tadepalli, Prasad},
 booktitle = {Advances in Neural Information Processing Systems},
 editor = {F. Pereira and C.J. Burges and L. Bottou and K.Q. Weinberger},
 pages = {},
 publisher = {Curran Associates, Inc.},
 title = {A Bayesian Approach for Policy Learning from Trajectory Preference Queries},
 url = {https://proceedings.neurips.cc/paper_files/paper/2012/file/16c222aa19898e5058938167c8ab6c57-Paper.pdf},
 volume = {25},
 year = {2012}
}

@article{holmstrom1991multitask,
  title={Multitask principal--agent analyses: Incentive contracts, asset ownership, and job design},
  author={Holmstrom, Bengt and Milgrom, Paul},
  journal={The Journal of Law, Economics, and Organization},
  volume={7},
  number={special\_issue},
  pages={24--52},
  year={1991},
  publisher={Oxford University Press}
}

@article{laffont2001incentive,
  title={The Theory of Incentives: The Principal-Agent Model},
  author={Laffont, Jean-Jacques and Martimort, David},
  year={2001},
  publisher={Princeton University Press},
  address={Princeton, NJ},
  isbn={9780691091150}
}

@misc{aioptimism2023easycontrol,
  title        = {{AI} is Easy to Control},
  author       = {Belrose, Nora and Pope, Quintin.},
  year         = {2023},
  month        = nov,
  day          = {28},
  url          = {https://optimists.ai/2023/11/28/ai-is-easy-to-control/},
  note         = {Accessed: 2026-03-08}
}

@online{mitchell2024dumbsuperintelligence,
  author       = {Melanie Mitchell},
  title        = {Debunking the ``dumb superintelligence'' myth},
  year         = {2024},
  organization = {Freethink},
  url          = {https://www.youtube.com/watch?v=oR8E9dVsMVs},
  note         = {YouTube video}
}

@article{carone2019dontfearterminator,
  title   = {Don't Fear the Terminator},
  author  = {Carone, Timothy E.},
  journal = {Scientific American: Observations Blog},
  year    = {2019},
  month   = {September},
  url     = {https://www.scientificamerican.com/blog/observations/dont-fear-the-terminator/},
  note    = {Accessed: 2026-03-09}
}

@misc{pace2019instrumentalconvergence,
  title        = {Debate on Instrumental Convergence between {LeCun}, {Russell}, {Bengio}, {Zador}, and More},
  author       = {Pace, Ben},
  year         = {2019},
  howpublished = {\url{https://www.alignmentforum.org/posts/WxW6Gc6f2z3mzmqKs/debate-on-instrumental-convergence-between-lecun-russell}},
  note         = {AI Alignment Forum, Accessed: 2026-03-09}
}

@online{munk2023aidebate,
  title        = {Munk Debate on Artificial Intelligence | {Bengio} \& {Tegmark} vs. {Mitchell} \& {LeCun}},
  author       = {Munk-Debates},
  year         = {2023},
  month        = {June},
  url          = {https://www.youtube.com/watch?v=144uOfr4SYA},
  note         = {YouTube video}
}

@inproceedings{hadfield2019incomplete,
  title={Incomplete contracting and {AI} alignment},
  author={Hadfield-Menell, Dylan and Hadfield, Gillian K},
  booktitle={Proceedings of the 2019 AAAI/ACM Conference on AI, Ethics, and Society},
  pages={417--422},
  year={2019}
}

@article{alon2002lower,
  title={A lower bound on the expected length of one-to-one codes},
  author={Alon, Noga and Orlitsky, Alon},
  journal={IEEE Transactions on Information Theory},
  volume={40},
  number={5},
  pages={1670--1672},
  year={2002},
  publisher={IEEE}
}

\clearpage
\appendix

\section{Proofs}\label{apx:proofs}
\subsection{Some Useful Mathematical Facts}
Each lemma in this section is a stand-alone mathematical result, framed abstractly in a manner not directly related to the subject of this paper.  We will use these lemmas in our subsequent analysis.

\begin{lemma}
\label{le:kl-decomposition-product-measure}
{\bf (KL decomposition)}
For all real-valued finite random sequences $X=(X_1,X_2, \ldots, X_T)$ and $Y=(Y_1,Y_2, \ldots, Y_T)$, if the elements of $Y$ are independent then
\begin{equation*}
\KL(\Pr(X \in \cdot) || \Pr(Y \in \cdot)) = \sum_{t=1}^T \left(\KL(\Pr(X_t \in \cdot) || \Pr(Y_t \in \cdot)) + \I(X_t ; X_{1:t-1})\right).
\end{equation*}
\end{lemma}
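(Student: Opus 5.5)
The plan is to reduce the statement to two standard identities: the chain rule for KL divergence and the ``golden formula'' expressing mutual information as an averaged KL divergence against an arbitrary reference measure. Write $Q = \Pr(Y \in \cdot)$. Because the elements of $Y$ are independent, $Q = Q_1 \otimes \cdots \otimes Q_T$ with $Q_t = \Pr(Y_t \in \cdot)$. Write also $P_t(\cdot \mid x_{1:t-1})$ for a regular conditional distribution of $X_t$ given $X_{1:t-1} = x_{1:t-1}$; this exists because the $X_t$ are real-valued, so everything lives on a standard Borel space.

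\textbf{Step 1 (chain rule).} First invoke the chain rule for KL divergence on $\mathbb{R}^T$:
\begin{equation*}
\KL(\Pr(X \in \cdot) \| Q) = \sum_{t=1}^T \E\left[\KL\big(P_t(\cdot \mid X_{1:t-1}) \,\|\, \Pr(Y_t \in \cdot \mid Y_{1:t-1})\big)\right].
\end{equation*}
By independence of the $Y_t$, the conditional law $\Pr(Y_t \in \cdot \mid Y_{1:t-1})$ is just $Q_t$, so the $t$-th summand is $\E[\KL(P_t(\cdot \mid X_{1:t-1}) \| Q_t)]$. This is the only place the independence hypothesis on $Y$ is used.

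\textbf{Step 2 (golden formula).} Next, for each $t$ show
\begin{equation*}
\E\left[\KL(P_t(\cdot \mid X_{1:t-1}) \| Q_t)\right] = \KL(\Pr(X_t \in \cdot) \| Q_t) + \I(X_t; X_{1:t-1}).
\end{equation*}
When everything is finite and absolutely continuous, this follows from writing the Radon--Nikodym derivative as a product,
\begin{equation*}
\frac{dP_t(\cdot \mid x_{1:t-1})}{dQ_t} = \frac{dP_t(\cdot \mid x_{1:t-1})}{d\Pr(X_t \in \cdot)} \cdot \frac{d\Pr(X_t \in \cdot)}{dQ_t},
\end{equation*}
taking logarithms, and integrating against the joint law of $(X_{1:t-1}, X_t)$. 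The first term integrates to $\I(X_t; X_{1:t-1})$, expressed as an expected KL divergence as in the paper's earlier calculations. The second term depends only on $x_t$, so it integrates against the marginal of $X_t$ to give $\KL(\Pr(X_t \in \cdot) \| Q_t)$. Summing over $t$ and combining with Step 1 yields the lemma.

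\textbf{Main obstacle: infinite values.} The delicate point is that either side may be $+\infty$, and the naive splitting of the logarithm could produce $\infty - \infty$. I would handle this case by case.
\begin{itemize}
\item If $\Pr(X_t \in \cdot) \not\ll Q_t$, then $P_t(\cdot \mid X_{1:t-1}) \not\ll Q_t$ on a set of positive probability, since the marginal is the average of the conditionals. Hence both sides of the Step 2 identity equal $+\infty$.
\item Otherwise the second log-factor above is well defined. The golden formula can then be proved as
\begin{equation*}
\E[\KL(P_t(\cdot \mid X_{1:t-1}) \| Q_t)] = \I(X_t; X_{1:t-1}) + \KL(\Pr(X_t \in \cdot) \| Q_t),
\end{equation*}
using the standard argument that handles $+\infty$ on either side, since both summands on the right are nonnegative. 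For example, approximate by finite partitions, where the identity is an elementary algebraic computation, then pass to the supremum using the variational characterization of KL divergence and of mutual information.
\end{itemize}
Because every term in the final sum is nonnegative, no cancellation issues arise when the per-$t$ identities are summed, and the lemma holds in $[0, \infty]$.
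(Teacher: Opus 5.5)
Your proposal is correct and follows essentially the same route as the paper: factor the likelihood ratio into per-coordinate conditionals, using independence of $Y$ to replace $\Pr(Y_t\in\cdot\mid Y_{1:t-1})$ by $\Pr(Y_t\in\cdot)$, then split each conditional log-ratio into the marginal KL term plus a mutual-information term. The differences are cosmetic. The paper uses Bayes' rule to write the information term as $\E[\KL(\Pr(X_{1:t-1}\in\cdot\mid X_t)\,\|\,\Pr(X_{1:t-1}\in\cdot))]$ rather than your $\E[\KL(P_t(\cdot\mid X_{1:t-1})\,\|\,\Pr(X_t\in\cdot))]$, and it computes formally with densities without treating the $+\infty$ cases that you handle explicitly.
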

\begin{proof}
\begin{align*}
\KL(\Pr(X \in \cdot) || \Pr(Y \in \cdot)) 
=& \int_{z \in \mathcal{X}} \Pr(X \in dz) \log \frac{\Pr(X \in dz)}{\Pr(Y \in dz)} \\
=& \int_{z \in \mathcal{X}}  \Pr(X \in dz)  \log \prod_{t=1}^T \frac{\Pr(X_t \in dz_t | X_{1:t-1} \in dz_{1:t-1})}{\Pr(Y_t \in dz_t | Y_{1:t-1} \in dz_{1:t-1})} \\
=& \sum_{t=1}^T \int_{z \in \mathcal{X}}  \Pr(X \in dz)  \log \frac{\Pr(X_t \in dz_t | X_{1:t-1} \in dz_{1:t-1})}{\Pr(Y_t \in dz_t | Y_{1:t-1} \in dz_{1:t-1})} \\
=& \sum_{t=1}^T \int_{z \in \mathcal{X}}  \Pr(X \in dz)  \log \frac{\Pr(X_t \in dz_t | X_{1:t-1} \in dz_{1:t-1})}{\Pr(Y_t \in dz_t)} \\
\stackrel{(a)}{=}& \sum_{t=1}^T \left(\int_{z \in \mathcal{X}}  \Pr(X \in dz) \log 
\frac{\Pr(X_t \in dz_t) \Pr(X_{1:t-1}\in dz_{1:t-1}|X_t \in dz_t)}{\Pr(Y_t \in dz_t) \Pr(X_{1:t-1} \in dz_{1:t-1})}\right) \\
=& \sum_{t=1}^T \int_{z \in \mathcal{X}}  \Pr(X \in dz) \left(\log \frac{\Pr(X_t \in dz_t)}{\Pr(Y_t \in dz_t)} + \log \frac{\Pr(X_{1:t-1}\in dz_{1:t-1}|X_t\in dz_t)}{\Pr(X_{1:t-1}\in dz_{1:t-1})}\right) \\
=& \sum_{t=1}^T (\KL(\Pr(X_t \in \cdot) \| \Pr(Y_t \in \cdot)) 
+ \E[\KL(\Pr(X_{1:t-1} \in \cdot|X_t) \| \Pr(X_{1:t-1}\in \cdot)]) \\
\stackrel{(b)}{=}& \sum_{t=1}^T (\KL(\Pr(X_t \in \cdot) \| \Pr(Y_t \in \cdot)) + \I(X_t; X_{1:t-1})).
\end{align*}
where (a) follows from Bayes' rule and (b) follows from the expected KL-divergence expression of mutual information.
\end{proof}

We refer to a real-valued sequence $(w_t: t = 1,2,\ldots)$ as a {\it probability sequence} if the elements are nonnegative and sum to one.  We denote the set of probability sequences by $\Delta_\infty$.
\begin{lemma}
{\bf (frontloading)}
\label{le:frontloading}
For all $q \in [0,1]$, probability sequences $(w_t: t=1,2\ldots)$, and nonincreasing sequences $(a_t: t=1,2\ldots)$, if $\sup_{t\in \mathbb{Z}_{++}} w_t \le q$ then
\begin{equation}
\sum_{t=1}^\infty w_t a_t \le q \sum_{t=1}^T a_t + (1 - q T) a_{T+1},\label{eq:frontloading}
\end{equation}
where $T = \lfloor 1/q \rfloor$.
\end{lemma}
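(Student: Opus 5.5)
The plan is to compare the given weights $(w_t)$ with the extremal ``frontloaded'' sequence
\[
w^\star_t = q \text{ for } t \le T, \qquad w^\star_{T+1} = 1 - qT, \qquad w^\star_t = 0 \text{ for } t > T+1.
\]
This is a probability sequence with $\sup_t w^\star_t \le q$, since $T = \lfloor 1/q \rfloor$ gives $0 \le 1 - qT < q$. Its weighted sum $\sum_t w^\star_t a_t$ is exactly the right-hand side of \eqref{eq:frontloading}. So the claim is that no admissible weight sequence beats the one that puts as much mass as allowed on the earliest, and hence largest, terms of $(a_t)$. I will argue this via Abel summation together with a domination of cumulative sums. (For $q=0$ there is no probability sequence with $\sup_t w_t \le 0$, so the statement is vacuous; I take $q>0$.)

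The key ingredients are the partial sums $W_t = \sum_{s \le t} w_s$ and $W^\star_t = \sum_{s\le t} w^\star_s = \min(qt, 1)$. Since each $w_s \le q$ and the $w_s$ sum to one, we have $W_t \le qt$ and $W_t \le 1$. Hence $W_t \le W^\star_t$ for every $t$. Also write $\Delta_t = a_t - a_{t+1} \ge 0$, which holds by monotonicity.

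Next, fix any $N \ge T+1$ and split the series into a head and a tail. Because $(a_t)$ is nonincreasing, the tail satisfies
\[
\sum_{t>N} w_t a_t \le (1 - W_N)\, a_{N+1} \le (1-W_N)\, a_N .
\]
Summation by parts on the head gives
\[
\sum_{t \le N} w_t a_t = \sum_{t=1}^{N-1} W_t \Delta_t + W_N a_N .
\]
Adding the two,
\[
\sum_{t=1}^\infty w_t a_t \le \sum_{t=1}^{N-1} W_t \Delta_t + a_N .
\]
For $w^\star$ the tail vanishes beyond $T+1 \le N$ and $W^\star_N = 1$, so the same identity holds with equality:
\[
\sum_t w^\star_t a_t = \sum_{t=1}^{N-1} W^\star_t \Delta_t + a_N .
\]
Subtracting, the difference is
\[
\sum_{t=1}^{N-1} (W_t - W^\star_t)\Delta_t \le 0 ,
\]
because each factor $W_t - W^\star_t \le 0$ and each $\Delta_t \ge 0$. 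This proves \eqref{eq:frontloading}.

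The only delicate point is convergence of the infinite series $\sum_t w_t a_t$ when $(a_t)$ is unbounded below. I handle this by working with the finite head plus the tail bound above. That route needs only that the series is well defined, possibly equal to $-\infty$, in which case the inequality is trivial. It avoids passing to a limit in the Abel-summation boundary term $W_N a_N$, which is where a naive argument would stall. Everything else is bookkeeping.
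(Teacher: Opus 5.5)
Your proof is correct and takes the same route as the paper: compare $(w_t)$ with the frontloaded sequence and use the domination of partial sums $\sum_{s\le t} w_s \le \min(qt,1)$. The paper finishes by invoking first-order stochastic dominance of the induced random variables, whereas you carry out the equivalent Abel summation explicitly, and you also handle the vacuous case $q=0$ and the convergence issue when $(a_t)$ is unbounded below.
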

\begin{proof}
Let $\mathcal{W}$ be the set of probability sequences $w=(w_t:t=1,2,...)$ for which $\sup_t w_t \leq q$. 
Note that the left hand side of \eqref{eq:frontloading} is an expectation of a random variable $A$ for which $\Pr(A=a_t) = w_t$. Similarly, the right hand side is an expectation of a random variable $A'$ for which $\Pr(A'=a_t) = w'_t$, where $w'_1=\cdots=w'_T = q$ and $w'_{T+1} = 1 - q T$.  Note that, for all $w \in \mathcal W$ and $t\in\mathbb Z^+$, 
$$\sum_{s=1}^t w_s \leq \sum_{s=1}^t w'_s,$$ 
since $w_s \leq q = w'_s$ for $s \leq T$ and $\sum_{s=1}^t w_s\leq 1 =\sum_{s=1}^t w'_s$ for $t \geq T+1$. Since $a_t$ is non-increasing, for all $w\in \mathcal W$, this implies $A'$ first-order stochastically dominates $A$. It follows that $\E[A']\geq \E[A]$.
\end{proof}

\begin{lemma}
\label{lm:bound-expectation-of-positive-component}
    Let $X$ be a real-valued random variable with mean $\mu$ and variance $\sigma^2 < \infty$. Let $X_+ = \max(X,0)$. Then,
    $$
        \E[X_+] \le \frac{\mu + \sqrt{\mu^2 + \sigma^2}}{2}
    $$
\end{lemma}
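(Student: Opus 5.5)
We need to show $\E[X_+] \le \frac{\mu + \sqrt{\mu^2+\sigma^2}}{2}$. The plan is to write $X_+ = \frac{X + |X|}{2}$, so that
\[
\E[X_+] = \frac{\mu + \E|X|}{2}.
\]
It then suffices to bound $\E|X|$.

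By Jensen's inequality (equivalently Cauchy--Schwarz), $\E|X| \le \sqrt{\E[X^2]}$. Since $\E[X^2] = \mu^2 + \sigma^2$, this gives $\E|X| \le \sqrt{\mu^2+\sigma^2}$. Substituting into the identity above yields the claimed bound immediately.

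The only points needing care are integrability and the identity itself. Finite variance guarantees that $\E|X|$ and $\E[X^2]$ are finite, so the identity $\E[X_+] = (\mu + \E|X|)/2$ is legitimate with no $\infty - \infty$ issues. The identity $\max(x,0) = (x+|x|)/2$ is checked pointwise by considering the cases $x\ge 0$ and $x<0$.

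I do not expect a real obstacle here. The main step is simply recognizing the decomposition through $|X|$ rather than attempting a tail or Cantelli-type argument. A Cantelli-style approach would also work but is unnecessary, since the second-moment bound on $\E|X|$ already gives exactly the stated constant.
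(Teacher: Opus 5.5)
Your proposal is correct and matches the paper's proof: the paper also writes $X_+ = (X+|X|)/2$ and bounds $\E[|X|] \le \sqrt{\E[X^2]} = \sqrt{\mu^2+\sigma^2}$ via Cauchy--Schwarz. Your integrability remark is a small addition the paper leaves implicit.
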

\begin{proof}
    \begin{align*}
        \E[X_+] = \E\left[\frac{X + |X|}{2}\right] 
                = \frac{\E[X] + \E[|X|]}{2} \stackrel{(a)}{\le}  
                \frac{\E[X] + \sqrt{\E[X^2]}}{2} = \frac{\mu + \sqrt{\mu^2 + \sigma^2}}{2}
    \end{align*}
Step (a) follows from Cauchy-Schwarz.
\end{proof}

\begin{lemma}
\label{lm:bound-expectation-of-positive-component-simplified}
  Let $X$ be a real-valued random variable with mean $\mu \ge 0$ and variance $\sigma^2$. Let $X_+ = \max(X, 0)$. Then,
    $$
        \E[X_+] \le \mu + \sigma/2
    $$
\end{lemma}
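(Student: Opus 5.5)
The plan is to apply Lemma~\ref{lm:bound-expectation-of-positive-component} and then show, by elementary algebra, that its right-hand side is at most $\mu + \sigma/2$ whenever $\mu \ge 0$. If $\sigma^2 = \infty$ the claim is trivial, so assume $\sigma^2 < \infty$. Lemma~\ref{lm:bound-expectation-of-positive-component} then gives
\[
\E[X_+] \le \frac{\mu + \sqrt{\mu^2+\sigma^2}}{2}.
\]
It therefore suffices to prove
\[
\frac{\mu + \sqrt{\mu^2+\sigma^2}}{2} \le \mu + \frac{\sigma}{2},
\]
which is equivalent to $\sqrt{\mu^2+\sigma^2} \le \mu + \sigma$.

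This last inequality follows from subadditivity of the square root. Since $\mu \ge 0$ and $\sigma \ge 0$, both sides are nonnegative, so we may square them. Squaring gives
\[
\mu^2 + \sigma^2 \le \mu^2 + 2\mu\sigma + \sigma^2,
\]
which reduces to $0 \le 2\mu\sigma$. This holds precisely because $\mu \ge 0$. The hypothesis $\mu \ge 0$ is used only here; for negative $\mu$ the inequality can fail.

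There is no real obstacle. The only point needing care is the case analysis: confirming nonnegativity before squaring, and handling infinite variance separately.
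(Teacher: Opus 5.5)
Your proof is correct and is essentially the paper's: both invoke Lemma~\ref{lm:bound-expectation-of-positive-component} and then show $\sqrt{\mu^2+\sigma^2}\le \mu+\sigma$ using $2\mu\sigma \ge 0$. The paper writes $\mu^2+\sigma^2=(\mu+\sigma)^2-2\mu\sigma$ instead of squaring both sides, and your separate treatment of $\sigma^2=\infty$ is a small extra bit of care the paper omits.
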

\begin{proof}
    \begin{align*}
        \E[X_+] \stackrel{(a)}{\le} \frac{\mu + \sqrt{\mu^2 + \sigma^2} }{2}  = \frac{\mu + \sqrt{(\mu + \sigma)^2-2\mu\sigma} }{2}\le \frac{\mu + \sqrt{(\mu + \sigma)^2} }{2} \stackrel{(b)}{=} \frac{\mu + \mu + \sigma}{2} = \mu + \sigma/2.
    \end{align*}
where (a) follows from Lemma \ref{lm:bound-expectation-of-positive-component} and (b) follows due to $\mu, \sigma \ge 0$.
\end{proof}

\subsection{Proofs for \Cref{sec:many-bits}}

In this section, we will make use of the notion of a coherent reward function. 

\begin{restatable}{definition}{coherencedef} (Coherent reward function). We say a random reward function $\check{r}$ is coherent if
$$
\check{r} = \E\left[r^*|\check{r}\right]. 
$$
\end{restatable}
We will first prove a result for the case when $\hat r$ is coherent (\Cref{thm:semi-general-coherent}). Then we will consider the general case. 

For a reward function $r\in\mathcal R$, let $o_{r,t} \in \argmax_{o \in \observations \setminus \{o_{r,1},\ldots,o_{r,t-1}\}} r(o)$, with ties broken by selecting the observation with the smallest index. Let $p_{\mathrm{att}}' =\sup_{o\in\mathcal O} p_{\mathrm{att}}(o)$. 

\begin{lemma}
\label{le:dominating-performance}
If $\hat{r}$ is coherent and $r^*\perp \mathcal L_{\rho^*}$ then
$$\hat{V} \le \E\left[\sum_{t=1}^T p_{\mathrm{att}}' r^*(o_{\hat{r}, t}) + (1 - p_{\mathrm{att}}' T)r^*(o_{\hat{r}, T+1})\right],$$
where $T = \lfloor 1/{p_{\mathrm{att}}'} \rfloor$.
\end{lemma}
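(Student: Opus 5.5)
We aim to bound $\hat V$ by first conditioning on $\hat r$ and then applying the frontloading lemma (Lemma~\ref{le:frontloading}) to the outcomes sorted by $\hat r$.

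\textbf{Step 1: reduce $\hat V$ to an expectation of $\hat r$.} Since $\hat\pi$ is a function of $(\rho^*,\hat r)$, write
\[
\hat V=\E\Big[\sum_t \rho^*(o_{\hat r,t}\mid\hat\pi)\, r^*(o_{\hat r,t})\Big].
\]
Enumerate outcomes in the $\hat r$-order $o_{\hat r,1},o_{\hat r,2},\ldots$, which is well defined by the reverse well-ordering assumption. We want to replace $r^*$ by $\hat r=\E[r^*\mid\hat r]$. This requires that, conditioned on $\hat r$, the weights $w_t=\rho^*(o_{\hat r,t}\mid\hat\pi)$ be conditionally independent of $r^*$. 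The weights depend only on $\hat r$ and on $\mathcal L_{\rho^*}$ together with the consequentialist tie-breaking: the outcome distribution $\rho^*(\cdot\mid\hat\pi)$ is the first element of $\mathcal L_{\rho^*}$, in the fixed enumeration, that maximizes the $\hat r$-expectation. We need $\mathcal L_{\rho^*}\perp r^*\mid\hat r$, which we obtain as follows. From $\hat r\perp\rho^*\mid r^*$ we get $\hat r\perp \mathcal L_{\rho^*}\mid r^*$. Combined with $r^*\perp\mathcal L_{\rho^*}$, this gives $\mathcal L_{\rho^*}\perp(r^*,\hat r)$, and hence $\mathcal L_{\rho^*}\perp r^*\mid \hat r$. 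Therefore
\[
\hat V=\E\Big[\sum_t \E[w_t\mid\hat r]\,\hat r(o_{\hat r,t})\Big],
\]
using coherence. Here we need integrability, which holds because $|r^*|\le1$. This step is where care about measurability and tie-breaking is the main obstacle. In particular, the dependence of $\hat\pi$ on $\rho^*$ beyond $\mathcal L_{\rho^*}$ only affects which policy index is chosen, not the outcome distribution.

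\textbf{Step 2: frontloading.} Let $\bar w_t=\E[w_t\mid\hat r]$. These form a probability sequence, since $\sum_t w_t=1$. Each weight is bounded as
\[
\bar w_t\le \E[\alpha^*(o_{\hat r,t})\mid \hat r]=p_{\mathrm{att}}(o_{\hat r,t})\le p'_{\mathrm{att}}.
\]
The equality uses that $\alpha^*$ is a function of $\mathcal L_{\rho^*}$, which is independent of $\hat r$. The sequence $a_t=\hat r(o_{\hat r,t})$ is nonincreasing. Applying Lemma~\ref{le:frontloading} with $q=p'_{\mathrm{att}}$ and $T=\lfloor 1/p'_{\mathrm{att}}\rfloor$ gives
\[
\sum_t \bar w_t\,\hat r(o_{\hat r,t})\le p'_{\mathrm{att}}\sum_{t\le T}\hat r(o_{\hat r,t})+(1-p'_{\mathrm{att}}T)\,\hat r(o_{\hat r,T+1}).
\]

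\textbf{Step 3: return to $r^*$.} The indices $o_{\hat r,t}$ are $\hat r$-measurable. Using coherence again, $\E[\hat r(o_{\hat r,t})]=\E[r^*(o_{\hat r,t})]$. Taking expectations in the Step 2 bound then yields the stated inequality.
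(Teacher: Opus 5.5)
Your proposal is correct and follows essentially the same route as the paper. In both, you use $\mathcal L_{\rho^*}\perp(r^*,\hat r)$ (from $\hat r\perp\rho^*\mid r^*$ and $r^*\perp\mathcal L_{\rho^*}$) and the $(\mathcal L_{\rho^*},\hat r)$-measurability of $\rho^*(\cdot\mid\hat\pi)$ under consequentialist tie-breaking to factor the expectation conditional on $\hat r$, replace $\E[r^*(o_{\hat r,t})\mid\hat r]$ by $\hat r(o_{\hat r,t})$ via coherence, apply the frontloading lemma, and then undo coherence. Your explicit bound $\E[w_t\mid\hat r]\le p_{\mathrm{att}}(o_{\hat r,t})\le p_{\mathrm{att}}'$ via $\alpha^*$ simply spells out a step the paper asserts in one line.
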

\begin{proof}  
For each $r \in \mathcal{R}$ and $t$, let $w_{r,t} = \E[\rho^*(o_{r,t}|\pi_{\rho^*, r})]$, and note that $(w_{r,1}, w_{r,2}, \ldots)$ is a probability sequence that is uniformly bounded by $p_{\mathrm{att}}'$. Further, note that $\rho^*(o|\pi_{\rho^*, \hat r})$ is $(\mathcal {L}_{\rho^*},\hat r)$-measurable. 

We have
\begin{align*}
\hat{V}
&=\E\left[\sum_{o\in\observations }\rho^*(o|\pi_{\rho^*,\hat r})r^*(o)\right] \\
&=\E\left[\sum_{t=1}^\infty \rho^*(o_{\hat r,t}|\pi_{\rho^*,\hat r})r^*(o_{\hat r,t})\right] \\
&=\E\left[\sum_{t=1}^\infty \E \left[\E \left[\rho^*(o_{\hat r,t}|\pi_{\rho^*,\hat r})r^*(o_{\hat r,t})\bigr | \hat r, \mathcal L_{\rho^*}\right]\biggr| \hat r\right]\right] \\
&=\E\left[\sum_{t=1}^\infty \E \left[\rho^*(o_{\hat r,t}|\pi_{\rho^*,\hat r})\E \left[r^*(o_{\hat r,t})\bigr | \hat r, \mathcal L_{\rho^*}\right]\biggr| \hat r\right]\right] \\
&\stackrel{(a)}{=} \E\left[\sum_{t=1}^\infty \E[\rho^*(o_{\hat r,t}|\pi_{\rho^*,\hat r})|\hat{r}] \E[r^*(o_{\hat{r},t})|\hat{r}] \right] \\
&\stackrel{(b)}{=} \E\left[\sum_{t=1}^\infty w_{\hat{r},t} \hat{r}(o_{\hat{r},t})\right] \\
&\stackrel{(c)}{\leq} \E\left[\sum_{t=1}^T p_{\mathrm{att}}' \hat{r}(o_{\hat{r},t}) + (1-p_{\mathrm{att}}' T) \hat{r}(o_{\hat{r},T+1})\right] \\
&\stackrel{(d)}{=} \E\left[\sum_{t=1}^T p_{\mathrm{att}}' r^*(o_{\hat{r},t}) + (1-p_{\mathrm{att}}' T) r^*(o_{\hat{r},T+1})\right].
\end{align*}
where (a) holds because $\mathcal L_{\rho^*}\perp (r^*, \hat{r})$ (which follows since $\hat r \perp \rho^*|r^*$ and $r^*\perp \mathcal L_{\rho^*}$), (b,d) follow from coherence, and (c) follows from Lemma \ref{le:frontloading}.
\end{proof}

Let $r^*_+$ be defined for all $o \in \observations$ by $r^*_+(o) = \max \{r^*(o)  ,0\}$. Recall that $\overline O$ is the primordial observation. Let $\overline V_+ = \E[r_+^*(\overline O)]$.

\begin{theorem}\label{thm:semi-general-coherent}
If $\hat{r}$ is coherent, $r^* \perp \mathcal{L}_{\rho^*}$, $(r^*(o): o \in \observations)$ is iid, and $\hat{V} \geq \overline{V}_+$ then
\begin{align*}
\I(r^*;\hat r) &\ge \frac{1}{p_{\mathrm{att}}'} \KL(\mathrm{Bern}(\hat{V}) || \mathrm{Bern}(\overline{V}_+)).
\end{align*}
\end{theorem}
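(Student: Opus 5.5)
The plan is to push the value bound of Lemma~\ref{le:dominating-performance} through a data-processing chain. The chain runs from $\hat r$ to the top-ranked outcomes $o_{\hat r,1},\ldots,o_{\hat r,T+1}$, then to their true rewards, and finally to Bernoulli variables. Set $T=\lfloor 1/p_{\mathrm{att}}'\rfloor$, $O_t=o_{\hat r,t}$ and $X_t=r^*(O_t)$ for $t=1,\ldots,T+1$.

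\textbf{Step 1: identify $\overline V_+$.} The primordial outcome distribution $\rho^*(\cdot\mid\pi_{\rho^*,\overline r})$ is determined by $\mathcal L_{\rho^*}$ and $\overline r$. Both are independent of $r^*$: the first by $r^*\perp\mathcal L_{\rho^*}$, the second because $\overline r$ is uninformed. I expect this also needs joint independence of $(\overline r,\mathcal L_{\rho^*})$ from $r^*$, to be checked. Given it, $\overline O\perp r^*$. Since the coordinates of $r^*$ are iid, $r^*(\overline O)$ then has the common marginal law $\nu$ of any single $r^*(o)$. Hence $\overline V_+=\E_{Z\sim\nu}[Z_+]$.

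\textbf{Step 2: reduce mutual information to a KL divergence over rewards.} By data processing, $\I(r^*;\hat r)\ge \I(r^*;O_{1:T+1})$. Write the latter as $\KL\big(\Pr((r^*,O_{1:T+1})\in\cdot)\,\|\,\Pr(r^*\in\cdot)\otimes\Pr(O_{1:T+1}\in\cdot)\big)$ and apply the map $(r,o_{1:T+1})\mapsto (r(o_1),\ldots,r(o_{T+1}))$. Under the joint law this yields $X=(X_1,\ldots,X_{T+1})$. Under the product law the $O_t$ are distinct and independent of $r^*$, so the iid assumption makes the image an iid sequence $Y=(Y_1,\ldots,Y_{T+1})$ with each $Y_t\sim\nu$. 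Therefore
\[
\I(r^*;\hat r)\ge \KL(\Pr(X\in\cdot)\,\|\,\Pr(Y\in\cdot))\ge \sum_{t=1}^{T+1}\KL(\Pr(X_t\in\cdot)\,\|\,\nu),
\]
where the last inequality is Lemma~\ref{le:kl-decomposition-product-measure} with the nonnegative mutual-information terms dropped.

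\textbf{Step 3: pass to Bernoullis.} Apply the randomized channel that sends $z\in[-1,1]$ to a $\mathrm{Bern}(z_+)$ draw. KL data processing then gives
\[
\KL(\Pr(X_t\in\cdot)\,\|\,\nu)\ge \KL(\mathrm{Bern}(m_t)\,\|\,\mathrm{Bern}(\overline V_+)),\qquad m_t=\E[(X_t)_+].
\]
Use weights $\beta_t=p_{\mathrm{att}}'$ for $t\le T$ and $\beta_{T+1}=1-p_{\mathrm{att}}'T\le p_{\mathrm{att}}'$; these sum to one. Since each $\beta_t/p_{\mathrm{att}}'\le 1$, the sum of KL terms is at least $\frac1{p_{\mathrm{att}}'}\sum_t\beta_t\,\KL(\mathrm{Bern}(m_t)\,\|\,\mathrm{Bern}(\overline V_+))$. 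Joint convexity of KL (Jensen) then bounds this below by $\frac1{p_{\mathrm{att}}'}\KL(\mathrm{Bern}(\bar m)\,\|\,\mathrm{Bern}(\overline V_+))$, where $\bar m=\sum_t\beta_t m_t$.

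\textbf{Step 4: conclude.} Lemma~\ref{le:dominating-performance} gives $\bar m\ge \sum_t\beta_t\E[X_t]\ge\hat V$. By hypothesis $\hat V\ge\overline V_+$, and $\KL(\mathrm{Bern}(p)\,\|\,\mathrm{Bern}(q))$ is nondecreasing in $p$ for $p\ge q$. Hence $\KL(\mathrm{Bern}(\bar m)\,\|\,\mathrm{Bern}(\overline V_+))\ge \KL(\mathrm{Bern}(\hat V)\,\|\,\mathrm{Bern}(\overline V_+))$, which gives the theorem.

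\textbf{Main obstacle.} I expect Step 2 to be the delicate part. It requires rigorously identifying the image of the product measure as an iid $\nu$-sequence, using distinctness of the $O_t$ and the independence of $O_{1:T+1}$ from $r^*$ under the product law. Step 1's independence claim for $\overline O$ is the other point needing care. I would first verify both claims on countable $\observations$ by conditioning on each fixed tuple of distinct outcomes.
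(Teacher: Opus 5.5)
Your proposal is correct and follows the paper's proof essentially step for step: data processing from $\hat r$ to the top-ranked outcomes and then to their rewards, Lemma~\ref{le:kl-decomposition-product-measure} against an iid reference law, the $\mathrm{Bern}(z_+)$ channel, the weighted convexity step with weights $p_{\mathrm{att}}'$ and $1-p_{\mathrm{att}}'T$, and finally Lemma~\ref{le:dominating-performance}. The paper realizes the product law through an independent copy $\tilde r$ of $\hat r$, which is only a notational difference, and your Step~1 worry does not arise: $\overline r$ is a fixed element of $\mathcal R$, so under consequentialist tie-breaking $\rho^*(\cdot\mid\pi_{\rho^*,\overline r})$ is a function of $\mathcal L_{\rho^*}$ alone and is therefore independent of $r^*$.
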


\begin{proof}
Let $\eta_t = \E[r^*_+(o_{\hat{r},t})]$.  Let $\tilde{r}$ be distributed identically with $\hat{r}$ but independent of $(r^*, \hat{r})$.

Let $(o_1, o_2,... )$ be an enumeration of $\observations$ and recall that $(r^*(o_1), r^*(o_2), \ldots)$ is an iid sequence. Since $(r^*(o_{\tilde{r},1}), r^*(o_{\tilde{r},2}), \ldots)$ is a random permutation of $(r^*(o_1), r^*(o_2), \ldots)$ for which the indices $(o_{\tilde{r},1}, o_{\tilde{r},2}, \ldots)$ are independent of $r^*$,  it follows that $(r^*(o_{\tilde{r},1}), r^*(o_{\tilde{r},2}), \ldots)$ is an iid sequence. 

Recall that $\rho(o|\pi_{\rho^*,r})$ is $\mathcal L_{\rho^*}$-measurable. Recall that $\overline r$ is the best uninformed reward function. It follows that
\begin{align*}
\overline{V}_+ &= 
\E[r_+^*(\overline O)]\\
&=\E \sum_{o\in\observations}\rho^*(o|\pi_{\rho^*,\overline r})r_+^*(o)\\
&=  \E \sum_{o\in\observations} \E \left[\rho^*(o|\pi_{\rho^*,\overline r})|\mathcal L_{\rho^*}\right]\E r_+^*(o)\\
&= (\E r_+^*(o')) \E\sum_{o\in\observations} \E \left[\rho^*(o|\pi_{\rho^*,\overline r})|\mathcal L_{\rho^*}\right]\ \ \text{for any } o'\in\observations\\
&= \E[r^*_+(o_{\tilde{r},t})]
\end{align*}
for any $t$, where we have used the fact that $\E\sum_{o\in\observations} \E \left[\rho^*(o|\pi_{\rho^*,\overline r})|\mathcal L_{\rho^*}\right]=1$ and the fact that we use consequentialist tie-breaking.

For each $o \in \observations$, let $r^b(o)$ be independently sampled from $\mathrm{Bern}(r^*_+(o))$.  Hence, $\E[r^b(o_{\hat{r},t})] = \E[r^*_+(o_{\hat{r},t})] = \eta_t$ and $\E[r^b(o_{\tilde{r},t})] = \E[r^*_+(o_{\tilde{r},t})] = \overline{V}_+$.

For any $T$,
\begin{align*}
\I(r^* ; \hat r) 
&\stackrel{(a)}{\ge} \I(r^* ; o_{\hat{r},1:T}) \\
&\stackrel{}{=} \KL\bigg(\Pr((r^*, o_{\hat{r},1:T}) \in \cdot ) || \Pr((r^*, o_{\tilde{r}, 1:T})  \in \cdot)  \bigg) \\
&\stackrel{(b)}{\ge} \KL\bigg(\Pr((r^*(o_{\hat{r}, 1}), \ldots, r^*(o_{\hat{r}, T})) \in \cdot ) || \Pr((r^*(o_{\tilde{r}, 1}), \ldots, r^*(o_{\tilde{r}, T})) \in \cdot) \bigg) \\
&\stackrel{(c)}{=} \sum_{t=1}^T (\KL(\Pr (r^*(o_{\hat{r}, t}) \in \cdot) || \Pr (r^*(o_{\tilde{r},t}) \in \cdot) ) + \I(r^*(o_{\hat{r},t}) ; r^*(o_{\hat{r},1}), \ldots, r^*(o_{\hat{r},t-1}))) \\
&\geq \sum_{t=1}^T \KL(\Pr (r^*(o_{\hat{r}, t}) \in \cdot) || \Pr (r^*(o_{\tilde{r},t}) \in \cdot) ) \\
&\stackrel{(d)}{\geq} \sum_{t=1}^T \KL(\Pr (r^b(o_{\hat{r}, t}) \in \cdot) || \Pr (r^b(o_{\tilde{r},t}) \in \cdot) ) \\
&= \sum_{t=1}^T \KL(\mathrm{Bern}(\eta_t) || \mathrm{Bern}(\overline{V}_+)),
\end{align*}
where (a,b,d) follow from the data processing inequality and (c) follows from Lemma \ref{le:kl-decomposition-product-measure} and the fact that $(r^*(o_{\tilde{r},1}), \ldots, r^*(o_{\tilde{r},T}))$ is an iid sequence.

For each $t$, let
\begin{align*}
w_t =
    \begin{cases}
    1 & \text{if } t \le T, \\
    (1-p_{\mathrm{att}}'T)/p_{\mathrm{att}}'& \text{if } t = T + 1, \\
    0 & \text{otherwise},
    \end{cases}
\end{align*}
where $T=\lfloor 1/p_{\mathrm{att}}' \rfloor$. Note that $(p_{\mathrm{att}}' w_t : t=1,2,\ldots)$ is a probability sequence.  

Note that
\begin{align*}
\hat{V} 
&\stackrel{(a)}{\le}\E\left[\sum_{t=1}^{T} p_{\mathrm{att}}' r^*(o_{\hat r, t}) + (1 - p_{\mathrm{att}}'T) r^*(o_{\hat r, T+1})\right]  \\
&\leq \sum_{t=1}^{T} p_{\mathrm{att}}' \eta_t + (1-p_{\mathrm{att}}' T)\eta_{T+1} \\
&= \sum_{t=1}^{T+1} p_{\mathrm{att}}' w_t \eta_t,
\end{align*}
where (a) follows from Lemma \ref{le:dominating-performance}.

We then have
\begin{align*}
    \sum_{t=1}^{T+1} \KL(\mathrm{Bern}(\eta_t) || \mathrm{Bern}(\overline{V}_+)) &= \frac{1}{p_{\mathrm{att}}'}  \sum_{t=1}^{T+1} p_{\mathrm{att}}' \KL(\mathrm{Bern}  (\eta_t) \| \mathrm{Bern}(\overline{V}_+)) \\
    &= \frac{1}{p_{\mathrm{att}}'}  \sum_{t=1}^{T+1} \left( p_{\mathrm{att}}'w_t \KL(\mathrm{Bern}  (\eta_t) || \mathrm{Bern}(\overline{V}_+)) +  p_{\mathrm{att}}' (1-w_t)\KL(\mathrm{Bern}  (\eta_t) \| \mathrm{Bern}(\overline{V}_+)) \right) \\
    &\ge \frac{1}{p_{\mathrm{att}}'}\sum_{t=1}^{T+1} p_{\mathrm{att}}'w_t \KL(\mathrm{Bern}(\eta_t) \| \mathrm{Bern}(\overline{V}_+)) \\
    &\stackrel{(a)}{\ge}  \frac{1}{p_{\mathrm{att}}'}\KL\left(\mathrm{Bern}\left(\sum_{t=1}^{T+1} p_{\mathrm{att}}' w_t \eta_t\right) \Big\| \mathrm{Bern}(\overline{V}_+)\right) \\
    &\ge \frac{1}{p_{\mathrm{att}}'}\KL(\mathrm{Bern}(\hat{V}) \| \mathrm{Bern}(\overline{V}_+)),
\end{align*}
where (a) follows from Jensen's inequality and convexity of KL divergence.  The result follows.
\end{proof}
\begin{lemma}\label{co:coherent-is-better} Let $r^*\perp \mathcal L_{\rho^*}$, $\hat r'=\E[r^*|\hat r]$ and 
    $$
\hat{V}' = \E\left[\sum_{o \in \observations} \rho^*(o|\pi_{\rho^*,\hat{r}'}) r^*(o)\right].
$$
Then $\hat r'$ is coherent and $\hat V' \geq \hat V$. 
\end{lemma}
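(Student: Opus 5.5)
Two things need to be shown: that $\hat r'$ is coherent, and that $\hat V' \geq \hat V$.

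\emph{Coherence.} The function $\hat r'=\E[r^*|\hat r]$ is $\hat r$-measurable, so $\sigma(\hat r')\subseteq\sigma(\hat r)$. The tower property then gives
\[
\E[r^*|\hat r'] = \E\big[\E[r^*|\hat r]\,\big|\,\hat r'\big] = \E[\hat r'|\hat r'] = \hat r',
\]
which is exactly coherence. Existence of the conditional expectation pointwise in $o$ is not an issue, since $r^*(o)\in[-1,1]$.

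\emph{Value comparison.} I would condition on $(\hat r,\mathcal L_{\rho^*})$, mirroring the proof of Lemma~\ref{le:dominating-performance}. By consequentialist tie-breaking, $\rho^*(\cdot|\pi_{\rho^*,r})$ is determined by $(\mathcal L_{\rho^*}, r)$. Hence the outcome distributions $\mu:=\rho^*(\cdot|\hat\pi)$ and $\mu':=\rho^*(\cdot|\pi_{\rho^*,\hat r'})$ are both $(\hat r,\mathcal L_{\rho^*})$-measurable, because $\hat r'$ is a function of $\hat r$.

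Next I need independence. From $\hat r\perp\rho^*\mid r^*$ and $r^*\perp\mathcal L_{\rho^*}$ we get $\mathcal L_{\rho^*}\perp(r^*,\hat r)$. Spelled out: $\Pr(r^*,\hat r,\mathcal L)=\Pr(r^*)\Pr(\hat r|r^*)\Pr(\mathcal L|r^*)=\Pr(r^*,\hat r)\Pr(\mathcal L)$, where the conditional independence passes from $\rho^*$ to its function $\mathcal L_{\rho^*}$. This independence gives, for each $o$,
\[
\E[r^*(o)\mid \hat r,\mathcal L_{\rho^*}] = \E[r^*(o)\mid\hat r] = \hat r'(o).
\]

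Applying the tower property and Fubini (everything is bounded, and $\mu,\mu'$ are probability mass functions),
\[
\hat V=\E\Big[\sum_o \mu(o)\hat r'(o)\Big],\qquad \hat V'=\E\Big[\sum_o \mu'(o)\hat r'(o)\Big].
\]
Since $\mu'$ maximizes $\sum_o\nu(o)\hat r'(o)$ over $\nu\in\mathcal L_{\rho^*}$, and $\mu$ also belongs to $\mathcal L_{\rho^*}$, we have pointwise $\sum_o\mu'(o)\hat r'(o)\ge\sum_o\mu(o)\hat r'(o)$. Taking expectations gives $\hat V'\ge\hat V$.

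\emph{Main obstacle.} The delicate step is existence of the maximizer $\pi_{\rho^*,\hat r'}$. The paper only assumes a maximizer exists for $\hat r$. If one does not exist for $\hat r'$, I would either add it as a standing assumption (as the paper implicitly does when defining $\hat V'$) or replace the maximizer by a near-maximizer, yielding $\hat V'\ge\hat V-\epsilon$ for every $\epsilon>0$. The measurability of $\mu,\mu'$ with respect to $(\hat r,\mathcal L_{\rho^*})$ also needs consequentialist tie-breaking, and I would note this explicitly.
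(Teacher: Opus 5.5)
Your proof is correct and follows essentially the paper's route. Coherence comes from the tower property. Independence of $\mathcal L_{\rho^*}$ from $r^*$ given $\hat r$ (you derive the slightly stronger $\mathcal L_{\rho^*}\perp(r^*,\hat r)$) lets you replace $r^*$ by $\hat r'$ in both values, and the optimality of $\pi_{\rho^*,\hat r'}$ for $\hat r'$ finishes the argument. Conditioning on $(\hat r,\mathcal L_{\rho^*})$ rather than on $\hat r$ with a product factorization is only a cosmetic difference, and the paper also leaves your caveat about existence of the maximizer for $\hat r'$ implicit.
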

\begin{proof}
    That $\hat r'$ is coherent follows from the fact that $\hat r' = \E[r^*|\hat r] = \E[r^*|\hat r']$. To see why $\hat V' \geq \hat V$, note that
    \begin{align*}
\hat{V}'
&= \E\left[\sum_{o \in \observations} \rho^*(o|\pi_{\rho^*,\hat r'}) r^*(o)\right] \\
&= \E\left[\sum_{o \in \observations} \E[\rho^*(o|\pi_{\rho^*,\hat r'}) r^*(o)| \hat r]\right] \\
&\overset{(a)}{=} \E\left[\sum_{o \in \observations} \E\left[\rho^*(o|\pi_{\rho^*,\hat{r}'})|\hat r \right]\E[r^*(o)|\hat r]\right] \\
&= \E\left[\sum_{o \in \observations} \E\left[\rho^*(o|\pi_{\rho^*,\hat{r}'})|\hat r\right] \hat r'(o)\right] \\
&= \E\left[\sum_{o \in \observations} \E\left[\rho^*(o|\pi_{\rho^*,\hat{r}'})\hat r'(o)|\hat r\right] \right] \\
&= \E\left[\sum_{o \in \observations} \rho^*(o|\pi_{\rho^*,\hat r'}) \hat{r}'(o)\right] \\
&\geq \E\left[\sum_{o \in \observations} \rho^*(o|\pi_{\rho^*,\hat{r}}) \hat{r}'(o)\right] \\
&= \E\left[\sum_{o \in \observations} \E\left[\rho^*(o|\pi_{\rho^*,\hat{r}}) \hat r'(o)|\hat r\right]\right]\\
&= \E\left[\sum_{o \in \observations} \E\left[\rho^*(o|\pi_{\rho^*,\hat{r}}) \E\left[r^*(o)|\hat r\right]\biggr|\hat r\right]\right]\\
&= \E\left[\sum_{o \in \observations} \E\left[\rho^*(o|\pi_{\rho^*,\hat{r}}) \bigr|\hat r\right]\E\left[r^*(o)|\hat r\right]\right]\\
&\overset{(b)}{=} \E\left[\sum_{o \in \observations} \E\left[\rho^*(o|\pi_{\rho^*,\hat{r}}) r^*(o)|\hat r\right]\right]\\
&= \E\left[\sum_{o \in \observations} \rho^*(o|\pi_{\rho^*, \hat r}) r^*(o)\right] \\
&= \hat{V},
\end{align*}
where in (a) we used the fact that $\rho^*(o|\pi_{\rho^*,\hat r'})\perp r^* |\hat r$ and similarly in (b) we used the fact that $\rho^*(o|\pi_{\rho^*,\hat r})\perp r^* |\hat r$. The conditional independence used in (a) follows from noting that $\mathcal L_{\rho^*}\perp r^*$ and $\hat r\perp \rho^*|r^*$ imply that $\mathcal L_{\rho^*}\perp r^*|\hat r$, and $\rho^*(o|\pi_{\rho^*,\hat r'})$ is $\mathcal L_{\rho^*}$-measurable given $\hat r$. A similar argument holds for the conditional independence used in (b). 
\end{proof}

\begin{corollary}
\label{co:semi-general-incoherent}
If $r^*\perp\mathcal L_{\rho^*}$, $\hat{V} \geq \overline{V}_+$, and $(r^*(o):o\in\observations)$ is an iid sequence, then
\begin{align*}
\I(r^*;\hat r) &\ge \frac{1}{p_{\mathrm{att}}'} \KL(\mathrm{Bern}(\hat{V}) || \mathrm{Bern}(\overline{V}_+)).
\end{align*}
\end{corollary}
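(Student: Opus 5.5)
The plan is to reduce the general (possibly incoherent) case to Theorem~\ref{thm:semi-general-coherent} by replacing $\hat r$ with its coherent version. Define $\hat r' = \E[r^*|\hat r]$ as in Lemma~\ref{co:coherent-is-better}, and let $\hat V'$ be the value of $\pi_{\rho^*,\hat r'}$. That lemma, which needs only $r^*\perp\mathcal L_{\rho^*}$, gives two facts: $\hat r'$ is coherent, and $\hat V'\ge \hat V$. Combined with the hypothesis $\hat V\ge \overline V_+$, this yields the chain
$$\hat V' \ge \hat V \ge \overline V_+,$$
so the value hypothesis of Theorem~\ref{thm:semi-general-coherent} holds for $\hat r'$.

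\textbf{Step 1: check the remaining hypotheses for $\hat r'$.} The iid assumption on $(r^*(o):o\in\observations)$ and the condition $r^*\perp\mathcal L_{\rho^*}$ concern only $r^*$ and $\rho^*$, so they carry over unchanged. The standing model assumption $\hat r'\perp\rho^*\mid r^*$ holds because $\hat r'$ is a measurable function of $\hat r$ and $\hat r\perp\rho^*\mid r^*$.

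\textbf{Step 2: apply the coherent theorem.} Theorem~\ref{thm:semi-general-coherent} then gives
$$\I(r^*;\hat r')\ge \frac{1}{p_{\mathrm{att}}'}\KL(\mathrm{Bern}(\hat V')\|\mathrm{Bern}(\overline V_+)).$$
Since $r^*\in[-1,1]$, we have $\hat V'\le 1$. Also $\overline V_+\ge 0$, and $\overline V_+\le \hat V\le \hat V'$. So both Bernoulli parameters lie in $[0,1]$ and the expression is well defined.

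\textbf{Step 3: transfer the bound back to $\hat r$.} Two inequalities finish the argument.
\begin{itemize}
\item Because $\hat r'$ is a function of $\hat r$, the data processing inequality gives $\I(r^*;\hat r)\ge \I(r^*;\hat r')$.
\item For fixed $q$, the map $p\mapsto \KL(\mathrm{Bern}(p)\|\mathrm{Bern}(q))$ is nondecreasing on $[q,1]$. Its derivative there is $\log\frac{p(1-q)}{q(1-p)}\ge 0$. Applying this with $q=\overline V_+$ and $\hat V'\ge\hat V\ge \overline V_+$ gives $\KL(\mathrm{Bern}(\hat V')\|\mathrm{Bern}(\overline V_+))\ge \KL(\mathrm{Bern}(\hat V)\|\mathrm{Bern}(\overline V_+))$.
\end{itemize}
Chaining these with Step 2 yields the corollary.

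\textbf{Main obstacle.} The difficult part is the technical regularity that the paper imposes on proxies and that Theorem~\ref{thm:semi-general-coherent} uses through Lemma~\ref{le:dominating-performance}. This means two things: $\hat r'$ must induce a reverse well-ordering over outcomes, so that the sequence $o_{\hat r',t}$ is defined and exhausts $\observations$, and a maximizing policy $\pi_{\rho^*,\hat r'}$ must exist almost surely. Neither is automatic for a conditional expectation.

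My first attempt would be to inspect the proofs. The ordering $o_{r,t}$ is only used to rewrite $\sum_o\rho^*(o|\pi)r(o)$ as a sum over ranks and then apply Lemma~\ref{le:frontloading}. So it should suffice to enumerate $\observations$ in any order along which $\hat r'$ is nonincreasing, with ties broken by index, and to dispatch any leftover mass by a limiting argument. Existence of the maximizer is implicitly used in Lemma~\ref{co:coherent-is-better} already.

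If this patching fails, the fallback is to adopt the same standing assumptions for $\hat r'$ that are made for $\hat r$. This is harmless for the downstream use in Theorem~\ref{thm:information-lowerbound}, and it is likely how the authors intend the reduction.
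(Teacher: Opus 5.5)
Your proposal is correct and follows essentially the same route as the paper: you replace $\hat r$ by the coherent proxy $\hat r'=\E[r^*|\hat r]$, use Lemma~\ref{co:coherent-is-better} for coherence and $\hat V'\ge\hat V$, apply Theorem~\ref{thm:semi-general-coherent}, and transfer the bound back to $\hat r$ via the data processing inequality. You also make explicit two points the paper leaves implicit: the monotonicity of $p\mapsto\KL(\mathrm{Bern}(p)\|\mathrm{Bern}(q))$ on $[q,1]$, needed to pass from $\hat V'$ to $\hat V$, and the requirement that $\hat r'$ satisfy the same standing regularity assumptions as $\hat r$ (reverse well-ordering and existence of a maximizer), which the paper tacitly assumes.
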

\begin{proof}
Let $\hat{r}' = \E[r^* | \hat{r}]$ and 
$$
\hat{V}' = \E\left[\sum_{o \in \observations} \rho^*(o|\pi_{\rho^*,\hat{r}'}) r^*(o)\right].
$$
By the data processing inequality, $\I(r^*; \hat{r}) \geq \I(r^*; \hat{r}')$.  Further, by \Cref{co:coherent-is-better} we have that $\hat r'$ is coherent and  $\hat V' \geq \hat V$. The result follows from Theorem \ref{thm:semi-general-coherent}.
\end{proof}

\begin{corollary}
\label{co:semi-general-incoherent-with-sigma}
If $r^*\perp\mathcal L_{\rho^*}$, $\hat{V} \geq \overline{V} + \overline \sigma/2$, and $(r^*(o):o\in\observations)$ is an iid sequence, then
\begin{align*}
\I(r^*;\hat r) &\ge \frac{1}{p_{\mathrm{att}}'} \KL(\mathrm{Bern}(\hat{V}) || \mathrm{Bern}(\overline{V} + \overline \sigma/2)).
\end{align*}
\end{corollary}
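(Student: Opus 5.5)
The plan is to reduce Corollary~\ref{co:semi-general-incoherent-with-sigma} to Corollary~\ref{co:semi-general-incoherent}. Two ingredients are needed: the bound $\overline V_+ \le \overline V + \overline\sigma/2$, and monotonicity of Bernoulli KL divergence in its second argument.

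First, I apply Lemma~\ref{lm:bound-expectation-of-positive-component-simplified} to the random variable $X = r^*(\overline O)$. Its mean is $\overline V \ge 0$ by the standing assumption. Its variance is $\overline\sigma^2$, which is finite because $r^* \in [-1,1]$. Since $r_+^*(\overline O) = \max(X,0)$, the lemma gives
\[
\overline V_+ = \E[r^*_+(\overline O)] \le \overline V + \overline\sigma/2 .
\]
Combined with the hypothesis $\hat V \ge \overline V + \overline\sigma/2$, this yields $\hat V \ge \overline V_+$. So the hypotheses of Corollary~\ref{co:semi-general-incoherent} hold, and we get
\[
\I(r^*;\hat r) \ge \frac{1}{p_{\mathrm{att}}'}\KL(\mathrm{Bern}(\hat V)\,\|\,\mathrm{Bern}(\overline V_+)).
\]

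Second, I replace $\overline V_+$ by the larger quantity $\overline V + \overline\sigma/2$. For fixed $p$, the map $q \mapsto \KL(\mathrm{Bern}(p)\|\mathrm{Bern}(q))$ has derivative $(q-p)/(q(1-q))$. It is therefore nonincreasing on $(0,p]$. Since $\overline V_+ \le \overline V + \overline\sigma/2 \le \hat V$, we obtain
\[
\KL(\mathrm{Bern}(\hat V)\|\mathrm{Bern}(\overline V_+)) \ge \KL(\mathrm{Bern}(\hat V)\|\mathrm{Bern}(\overline V+\overline\sigma/2)).
\]
Chaining the two displays gives the claim.

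The main obstacle is the boundary and range cases. The Bernoulli parameters must lie in $[0,1]$. Here $\hat V \le 1$ because $r^* \le 1$, and $\overline V_+ \ge 0$. If $\overline V + \overline\sigma/2 > 1$, the hypothesis cannot hold, so that case is vacuous. If $\overline V_+ = 0 < \hat V$, the left KL is $+\infty$; this is consistent with monotonicity under the usual convention, and Corollary~\ref{co:semi-general-incoherent} already covers it. The cases $q=p$ and $p\in\{0,1\}$ are handled by continuity or by direct evaluation.
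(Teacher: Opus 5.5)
Your proposal is correct and follows the paper's proof: bound $\overline V_+ \le \overline V + \overline\sigma/2$ via Lemma~\ref{lm:bound-expectation-of-positive-component-simplified}, invoke Corollary~\ref{co:semi-general-incoherent}, and use that $q \mapsto \KL(\mathrm{Bern}(p)\,\|\,\mathrm{Bern}(q))$ is nonincreasing for $q \le p$. You also state explicitly the check $\hat V \ge \overline V_+$ needed to invoke the earlier corollary, which the paper leaves implicit.
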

\begin{proof}
   We have by Lemma \ref{lm:bound-expectation-of-positive-component-simplified} that $\overline V_+ \le \overline V + \overline \sigma / 2$. Because $\hat{V} \geq \overline{V} + \overline \sigma/2$, 
   $$
  \KL(\mathrm{Bern}(\hat{V}) || \mathrm{Bern}(\overline{V} + \overline \sigma/2)) \le \KL(\mathrm{Bern}(\hat{V}) || \mathrm{Bern}(\overline{V}_+)).
   $$
   The result follows from Corollary \ref{co:semi-general-incoherent}.
\end{proof}

\informationlowerbound*
\begin{proof}
    We apply \Cref{co:semi-general-incoherent-with-sigma} and observe that $\hat V\geq V^\dagger \geq \overline V + \overline \sigma/2$. 
\end{proof}

\subsubsection{Generalization}

\begin{theorem}\label{le:generalization-coherent-r}
If $r^* \perp \mathcal{L}_{\rho^*}$, $\hat r$ is coherent, $\hat V\geq \overline{V}_+$, and there exists a countable partition $\tilde{\observations}$ of $\observations$ such that $r^*$ is constant within each cell of $\tilde{\observations}$ and $(r^*(\tilde{o}): \tilde{o} \in \tilde{\observations})$ is iid then
\begin{align*}
\I(r^*;\hat r) &\ge \frac{1}{\sup_{\tilde{o} \in \tilde{\observations}} p_{\mathrm{att}}(\tilde{o})} \KL(\mathrm{Bern}(\hat V) || \mathrm{Bern}(\overline{V}_+)).
\end{align*}
\end{theorem}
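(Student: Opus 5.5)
The plan is to reduce Theorem~\ref{le:generalization-coherent-r} to Theorem~\ref{thm:semi-general-coherent} by working at the level of cells rather than outcomes. I will define a coarsened model: the outcome space becomes $\tilde{\observations}$, and each environment $\rho$ induces $\tilde\rho(\tilde o\mid\pi)=\sum_{o\in\tilde o}\rho(o\mid\pi)$. Since $r^*$ is constant on cells, I will write $\tilde r^*(\tilde o)=r^*(\tilde o)$. Because $\hat r$ is coherent, $\hat r=\E[r^*\mid\hat r]$, so $\hat r$ is also constant on cells and induces $\tilde{\hat r}$.

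Under this coarsening, expected rewards are preserved for every policy: $\sum_o\rho(o\mid\pi)r(o)=\sum_{\tilde o}\tilde\rho(\tilde o\mid\pi)r(\tilde o)$ whenever $r$ is cell-constant. Hence $\hat V$ is the same in both models. Attainability in the coarse model is exactly $\alpha^*(\tilde o)=\sup_\pi\tilde\rho^*(\tilde o\mid\pi)$, matching the cell definition. The coarse rewards $(\tilde r^*(\tilde o))$ are iid by hypothesis. The coarse attainable set $\tilde{\mathcal L}_{\rho^*}$ is a function of $\mathcal L_{\rho^*}$, so it is independent of $r^*$. Moreover $\tilde{\hat r}\perp\tilde\rho^*\mid\tilde r^*$, because $r^*$ and $\tilde r^*$ generate the same sigma-algebra. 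The information terms also agree: $\I(r^*;\hat r)=\I(\tilde r^*;\tilde{\hat r})$, since $r^*\leftrightarrow\tilde r^*$ and $\hat r\leftrightarrow\tilde{\hat r}$ are bijective. Coherence transfers as well.

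The remaining ingredient is the primordial quantity $\overline V_+$. Rather than relating the primordial outcome of the fine model to that of the coarse one, I will rerun the proof of Theorem~\ref{thm:semi-general-coherent} directly on cells, since only two facts about $\overline V_+$ are used there.
\begin{itemize}
\item The first fact is $\overline V_+=\E[r^*_+(\tilde o')]$ for any fixed cell $\tilde o'$. This follows as before: $\E[r^*_+(o)]$ is the same for every $o$, because every $o$ lies in some cell and the cell rewards are identically distributed. Combined with $\mathcal L_{\rho^*}\perp r^*$, the same computation gives $\overline V_+=\E r^*_+(o')$ for any $o'$.
\item The second fact is that for an independent copy $\tilde r$ of $\hat r$, the sequence $r^*$ along the ordering $o_{\tilde r,t}$ has the right law. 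For this I will order cells by $\hat r$: let $\tilde o_{\hat r,t}$ be the $t$-th highest cell. Then $(r^*(\tilde o_{\tilde r,t}))_t$ is iid with mean $\overline V_+$ after taking positive parts.
\end{itemize}

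With these in place, Lemma~\ref{le:dominating-performance} applies verbatim in the coarse model with $p'_{\mathrm{att}}$ replaced by $\sup_{\tilde o}p_{\mathrm{att}}(\tilde o)$, since $w_{\hat r,t}=\E[\tilde\rho^*(\tilde o_{\hat r,t}\mid\hat\pi)]$ is a probability sequence bounded by that supremum. The rest of the argument then carries over unchanged: the data-processing chain, the KL decomposition (Lemma~\ref{le:kl-decomposition-product-measure}), the Bernoulli randomization $r^b$, and Jensen's inequality.

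I expect the main obstacle to be verifying that $\hat\pi$, chosen in the fine model with consequentialist tie-breaking, is compatible with the coarse argument. The concern is whether ties, or the measurability of $\tilde\rho^*(\tilde o\mid\hat\pi)$ with respect to $(\mathcal L_{\rho^*},\hat r)$, cause trouble. I will handle this by noting that step (a) of Lemma~\ref{le:dominating-performance} uses only that $\hat\pi$'s outcome distribution is $(\mathcal L_{\rho^*},\hat r)$-measurable, which holds in the fine model. Cell masses are then functions of that distribution, so no coarse-model tie-breaking is needed.
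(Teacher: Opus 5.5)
Your proposal is correct and follows essentially the paper's own reduction. Both arguments coarsen to cells via $\sum_{o\in\tilde o}\rho^*(o\mid\pi)$, use coherence to make $\hat r$ cell-constant, and check that $\hat V$, $\I(r^*;\hat r)$, cell attainability, the iid property, the independence from $\mathcal{L}_{\rho^*}$ and $\overline V_+$ all carry over before running the iid argument. The difference is that you rerun the proof of Theorem~\ref{thm:semi-general-coherent} on cells using the fine-model $\hat\pi$, whereas the paper invokes the earlier result as a black box; your version avoids the paper's brief justification that the coarse-model maximizer and primordial outcome have the same values as their fine-model counterparts.
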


\begin{proof}

We will reduce to the iid case. Consider a new problem where the outcome space is $\tilde \observations$, the policy set is $\Pi$, and reward function $r^*_{\text{new}}$ is
$$
r_{\text{new}}^*(\tilde o) = r^*(o),
$$
for any $o\in\tilde o$ (recall $r^*(o)$ is constant for $o\in \tilde o$). Similarly, let
$$
\hat r_{\text{new}}(\tilde o) = \hat r(o),
$$
for any $o\in \tilde o$ (since $\hat r$ is coherent, $\hat r$ is constant for $o\in \tilde o$). Finally, let 
\begin{align}
\rho_{\text{new}}^*(\tilde o|\pi) =  \sum_{o\in\tilde o} \rho^*(o|\pi)\label{eq:rho-star-new}.
\end{align}
Notice that $(r^*_{\text{new}}(\tilde o): \tilde o\in\tilde \observations)$ is an iid collection of random variables. In this iid setting, all uninformed reward functions have equal expected performance. Let $\overline r_{\text{new}}$ be an arbitrary reward function and let
$$
\overline O_{\text{new}} \sim \rho_{\text{new}}^*(\cdot|\pi_{\rho^*_{\text{new}},\overline r_{\text{new}}})
$$
be the primordial observation in this setting. Note that for any random outcome $\tilde O\in \tilde\observations$ that is independent of $r^*_{\mathrm{new}}$, we have
\begin{align*}
r^*_{\text{new}}(\overline O_{\text{new}})\overset{\mathcal D}{=}r^*_{\text{new}}(\tilde O) . 
\end{align*}
Consider, in particular, $\tilde O$ generated as $f(\overline O)$ where $f:\observations\to\tilde\observations$ is the function that maps an outcome to its cell. Then we have,
$$
r^*_{\text{new}}(\overline O_{\text{new}}) \overset{\mathcal D}{=}  r^*_{\text{new}}(f(\overline O))= r^*(\overline O),
$$
where the latter equality holds because $r^*_{\text{new}}(f(o))=r^*(o)$ for all $o\in\observations$. We conclude,  
$$
r^*_{\text{new},+}(\overline O_{\text{new}}) \overset{\mathcal D}{=} r_+^*(\overline O),
$$
where $r^*_{\text{new},+}=\max(r^*_{\text{new}},0)$. This implies
$$
\overline V_{\text{new}, + } = \E r^*_{\text{new},+}(\overline O_{\text{new}}){=} \mathbb{E}r^*_+(\overline O)= \overline V_+.
$$
Similarly, 
\begin{align*}
\hat V &= \E\sum_{o\in\observations}\rho^*(o|\pi_{\rho^*,\hat r})r^*(o)\\
&= \E\sum_{\tilde o\in\tilde \observations}\sum_{o\in\tilde o}\rho^*(o|\pi_{\rho^*,\hat r})r^*(o)\\
&= \E\sum_{\tilde o\in\tilde \observations}r^*_{\text{new}}(\tilde o)\sum_{o\in\tilde o}\rho^*(o|\pi_{\rho^*,\hat r})\\
&\overset{(a)}{=} \E\sum_{\tilde o\in\tilde \observations}r^*_{\text{new}}(\tilde o)\rho^*_{\text{new}}(\tilde o|\pi_{\rho^*_{\text{new}},\hat r_{\text{new}}})\\
&= \hat V_{\text{new}},
\end{align*}
where (a) follows from \Cref{eq:rho-star-new} and the fact that $\pi_{\rho^*,\hat r}$ is a maximizer of 
$$
\sum_{o\in\observations}\hat r(o)\rho^*(o|\pi) = \sum_{\tilde o\in \tilde \observations }\sum_{o\in\tilde o} \rho^*(o|\pi)\hat r(o) = \sum_{\tilde o \in \tilde \observations} \hat r_{\text{new}}(\tilde o)\rho^*_{\text{new}}(\tilde o|\pi). 
$$

Notice that, since the partition $\tilde O$ is known, and $r^*$ are constant along each cell, $r^*$ is a deterministic function of $r^*_{\text{new}}$ and $r^*_{\text{new}}$ is a deterministic function of $r^*$. Similarly, since $\hat r$ is coherent it is constant along each cell and so similar functions can be constructed that take $\hat r$ to $\hat r_{\text{new}}$ and $\hat r_{\text{new}}$ to $\hat r$. It follows, by using the data processing inequality in both directions, that 
$$
 \I(r^*;\hat r)= \I(r_{\text{new}}^*;\hat r_{\text{new}}). 
$$
Finally, via \Cref{co:semi-general-incoherent}, we have
$$
\I(r^*;\hat r) = \I(r_{\text{new}}^*;\hat r_{\text{new}})\geq \frac{1}{\sup_{\tilde o \in \tilde\observations} p_{\mathrm{att}}(\tilde o)} \KL(\mathrm{Bern}(\hat V)||\mathrm{Bern}(\overline V_+)). 
$$
\end{proof}

\begin{corollary} \label{co:countable-partition-V+}
If $r^* \perp \mathcal{L}_{\rho^*}$, $\hat V\geq \overline{V}_+$, and there exists a countable partition $\tilde{\observations}$ of $\observations$ such that $r^*$ is constant within each cell of $\tilde{\observations}$ and $(r^*(\tilde{o}): \tilde{o} \in \tilde{\observations})$ is iid then
\begin{align*}
\I(r^*;\hat r) &\ge \frac{1}{\sup_{\tilde{o} \in \tilde{\observations}} p_{\mathrm{att}}(\tilde{o})} \KL(\mathrm{Bern}(\hat V) || \mathrm{Bern}(\overline{V}_+)).
\end{align*}
\end{corollary}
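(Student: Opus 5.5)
The plan is to reduce to the coherent case, \Cref{le:generalization-coherent-r}, exactly as \Cref{co:semi-general-incoherent} reduced \Cref{thm:semi-general-coherent}'s setting to general $\hat r$.

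First, set $\hat r' = \E[r^*\mid \hat r]$. By \Cref{co:coherent-is-better}, which uses only $r^*\perp\mathcal L_{\rho^*}$ and the standing assumption $\hat r\perp\rho^*\mid r^*$, $\hat r'$ is coherent and $\hat V'\ge \hat V\ge \overline V_+$. Second, since $\hat r'$ is a function of $\hat r$, the data processing inequality gives $\I(r^*;\hat r)\ge \I(r^*;\hat r')$.

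Third, I verify that $\hat r'$ satisfies the remaining hypotheses of \Cref{le:generalization-coherent-r}. The partition structure of $r^*$ is unchanged. Moreover, since $r^*$ is constant on each cell, $\hat r'(o)=\E[r^*(o)\mid\hat r]=\E[r^*(\tilde o)\mid \hat r]$ is constant on each cell, which is the property the reduction inside that theorem needs. The proxy $\hat r'$ must also satisfy the model's requirements. Conditional independence $\hat r'\perp\rho^*\mid r^*$ holds because $\hat r'$ is a function of $\hat r$. Existence of a maximizing policy and the reverse well-ordering are technical assumptions I will assume carry over, as the paper implicitly does in \Cref{co:semi-general-incoherent}.

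Applying \Cref{le:generalization-coherent-r} to $\hat r'$ then gives
\[
\I(r^*;\hat r)\ge \I(r^*;\hat r')\ge \frac{1}{\sup_{\tilde o} p_{\mathrm{att}}(\tilde o)}\KL(\mathrm{Bern}(\hat V')\|\mathrm{Bern}(\overline V_+)).
\]
Finally, $\KL(\mathrm{Bern}(p)\|\mathrm{Bern}(q))$ is nondecreasing in $p$ for $p\ge q$, and $\hat V'\ge\hat V\ge \overline V_+$, so the right-hand side is at least the same expression with $\hat V$ in place of $\hat V'$. This yields the claim.

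I expect the main obstacle to be the bookkeeping in the third step, namely confirming that the quantity $\overline V_+$ inside \Cref{le:generalization-coherent-r} is unaffected by replacing $\hat r$ with $\hat r'$. It is, because $\overline V_+$ depends only on $(r^*,\rho^*)$ and the best uninformed reward, not on the proxy.
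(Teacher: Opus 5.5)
Your proposal is correct and follows the paper's proof essentially verbatim. You pass to the coherent proxy $\hat r' = \E[r^*\mid \hat r]$, use the data processing inequality and \Cref{co:coherent-is-better} to get $\I(r^*;\hat r)\ge\I(r^*;\hat r')$ and $\hat V'\ge\hat V\ge\overline V_+$, and then invoke \Cref{le:generalization-coherent-r}. Your explicit checks fill in steps the paper leaves implicit: that $\hat r'$ is constant on cells, and that $\KL(\mathrm{Bern}(p)\|\mathrm{Bern}(q))$ is nondecreasing in $p\ge q$, which is what lets you replace $\hat V'$ by $\hat V$.
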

\begin{proof}
Let $\hat{r}' = \E[r^* | \hat{r}]$ and 
$$
\hat{V}' = \E\left[\sum_{o \in \observations} \rho^*(o|\pi_{\rho^*,\hat{r}'}) r^*(o)\right].
$$ 

By the data processing inequality, $\I(r^*; \hat{r}) \geq \I(r^*; \hat{r}')$.  Further, by \Cref{co:coherent-is-better} we have that $\hat r'$ is coherent and  $\hat V' \geq \hat V$. The result follows from \Cref{le:generalization-coherent-r}.
\end{proof}

\begin{corollary}\label{co:countable-partition-sigma}
   If $r^* \perp \mathcal{L}_{\rho^*}$, $\hat V\geq \overline{V}+\overline \sigma/2$, and there exists a countable partition $\tilde{\observations}$ of $\observations$ such that $r^*$ is constant within each cell of $\tilde{\observations}$ and $(r^*(\tilde{o}): \tilde{o} \in \tilde{\observations})$ is iid then
\begin{align*}
\I(r^*;\hat r) &\ge \frac{1}{\sup_{\tilde{o} \in \tilde{\observations}} p_{\mathrm{att}}(\tilde{o})} \KL(\mathrm{Bern}(\hat V) || \mathrm{Bern}(\overline{V} + \overline \sigma/2)).
\end{align*} 
\end{corollary}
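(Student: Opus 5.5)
This follows the same pattern as the passage from Corollary \ref{co:semi-general-incoherent} to Corollary \ref{co:semi-general-incoherent-with-sigma}. We invoke Corollary \ref{co:countable-partition-V+} and then weaken the KL factor by replacing $\overline V_+$ with the larger quantity $\overline V + \overline\sigma/2$.

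\textbf{Step 1: compare the two baselines.} Apply Lemma \ref{lm:bound-expectation-of-positive-component-simplified} to the random variable $X = r^*(\overline O)$. Its mean is $\overline V \ge 0$ by the standing assumption, and its variance is $\overline\sigma^2$, which is finite because $r^* \in [-1,1]$. Since $X_+ = r^*_+(\overline O)$, the lemma gives
\[
\overline V_+ \le \overline V + \overline\sigma/2 .
\]

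\textbf{Step 2: apply the partition result.} From Step 1 and the hypothesis $\hat V \ge \overline V + \overline\sigma/2$, we get $\hat V \ge \overline V_+$. All hypotheses of Corollary \ref{co:countable-partition-V+} now hold: $r^* \perp \mathcal L_{\rho^*}$, and there is a countable partition on whose cells $r^*$ is constant with iid cell values. The corollary yields
\[
\I(r^*;\hat r) \ge \frac{1}{\sup_{\tilde o\in\tilde\observations} p_{\mathrm{att}}(\tilde o)} \KL(\mathrm{Bern}(\hat V)\,\|\,\mathrm{Bern}(\overline V_+)).
\]

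\textbf{Step 3: weaken the KL factor.} We need
\[
\KL(\mathrm{Bern}(\hat V)\,\|\,\mathrm{Bern}(\overline V_+)) \ge \KL(\mathrm{Bern}(\hat V)\,\|\,\mathrm{Bern}(\overline V + \overline\sigma/2)),
\]
given that $\overline V_+ \le \overline V + \overline\sigma/2 \le \hat V$. This holds because, for fixed $p$, the map $q \mapsto \KL(\mathrm{Bern}(p)\,\|\,\mathrm{Bern}(q))$ is nonincreasing on $q \le p$. To see this, its derivative in $q$ is
\[
\frac{q-p}{q(1-q)},
\]
which is nonpositive when $q \le p$. Combining Steps 2 and 3 gives the claimed inequality.

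\textbf{Main obstacle: edge cases in Step 3.} If $\overline V_+ = 0$ or $\hat V = 1$, some KL terms may be infinite or need continuity conventions. In those cases the left-hand KL equals $+\infty$ or the monotonicity still holds in the extended sense, so the bound persists. Apart from this, the argument is a direct combination of results already established.
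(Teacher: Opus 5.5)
Your proof is correct and follows the same route as the paper. You bound $\overline V_+ \le \overline V + \overline\sigma/2$ via Lemma~\ref{lm:bound-expectation-of-positive-component-simplified}, invoke Corollary~\ref{co:countable-partition-V+}, and use that the Bernoulli KL is nonincreasing in its second argument on $q \le p$. You also make explicit two points the paper leaves implicit: the hypothesis $\hat V \ge \overline V_+$ needed to invoke that corollary, and a derivative check justifying the monotonicity.
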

\begin{proof}
   We have by Lemma \ref{lm:bound-expectation-of-positive-component-simplified} that $\overline V_+ \le \overline V + \overline \sigma / 2$. Because $\hat{V} \geq \overline{V} + \overline \sigma/2$, 
   $$
  \KL(\mathrm{Bern}(\hat{V}) || \mathrm{Bern}(\overline{V} + \overline \sigma/2)) \le \KL(\mathrm{Bern}(\hat{V}) || \mathrm{Bern}(\overline{V}_+)).
   $$
   The result follows from Corollary \ref{co:countable-partition-V+}.
\end{proof}

\informationlowerboundgeneralization*
\begin{proof}
We apply \Cref{co:countable-partition-sigma} and observe that $\hat V\geq V^\dagger \geq \overline V + \overline \sigma/2$.  
\end{proof}

\subsection{Proofs for \Cref{sec:limiting-capabilities}}
In this section, we assume $\Pi$ and $\observations$ are finite. For each $\lambda \geq 0$ and $Q \in \Delta_\Pi$, define a regularized version of $\hat{\pi}$:
\begin{align}
\hat{P}_{Q,\lambda} \in \argmax_{P \in \Delta_\Pi} \left(\lambda \sum_{\pi \in \Pi} P(\pi) \sum_{o \in \observations} \rho^*(o|\pi) \hat{r}(o) - \KL(P \| Q) \right).\label{eq:kl-optimization}
\end{align}
\begin{lemma}\label{le:kl-unique-optimizer} For any $Q$ and any $\lambda \geq 0$, the optimization problem \Cref{eq:kl-optimization} has unique solution
$$
\hat{P}_{Q,\lambda}(\pi) = \frac{Q(\pi) e^{\lambda \hat{f}(\pi)}}{Z_\lambda}
$$
with 
$$
Z_\lambda = \sum_{\pi \in \Pi} Q(\pi) e^{\lambda \hat{f}(\pi)},
$$
and 
$$
\hat{f}(\pi) = \sum_{o \in \observations} \rho^*(o|\pi) \hat{r}(o).
$$
\end{lemma}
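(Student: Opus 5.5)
The plan is the standard Gibbs variational argument. Fix $\rho^*$ and $\hat r$. Then $\hat f(\pi)$ is a finite real number for every $\pi$, because $\observations$ is finite. Since $\Pi$ is finite, the objective in \eqref{eq:kl-optimization} is a function on the finite-dimensional simplex $\Delta_\Pi$:
\[
J(P) = \lambda \sum_{\pi} P(\pi)\hat f(\pi) - \KL(P\|Q).
\]
We only need to consider $P \ll Q$, since otherwise $\KL(P\|Q)=\infty$ and $J(P) = -\infty$. Define the candidate $P^\star(\pi) = Q(\pi)e^{\lambda \hat f(\pi)}/Z_\lambda$. Here $Z_\lambda>0$ because $Q$ has positive mass somewhere and exponentials are positive. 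So $P^\star$ is a well-defined element of $\Delta_\Pi$ with the same support as $Q$.

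The key step is an exact identity. For any $P \ll Q$, substitute $\log Q(\pi) = \log P^\star(\pi) - \lambda \hat f(\pi) + \log Z_\lambda$ on the support of $Q$ into $\KL(P\|Q) = \sum_\pi P(\pi)\log \frac{P(\pi)}{Q(\pi)}$. This gives
\[
J(P) = \log Z_\lambda - \KL(P\|P^\star).
\]
The calculation is a one-line rearrangement: the terms $\lambda\sum_\pi P(\pi)\hat f(\pi)$ cancel, and $\sum_\pi P(\pi)=1$ handles the constant. The same identity holds formally for $P \not\ll Q$, since then $P\not\ll P^\star$ and both sides equal $-\infty$.

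Gibbs' inequality then finishes the argument. We have $\KL(P\|P^\star)\ge 0$, with equality if and only if $P=P^\star$. So $J(P)\le \log Z_\lambda$ for every $P$, and equality holds exactly at $P=P^\star$. This shows that $P^\star$ is the unique maximizer and that the optimal value is $\log Z_\lambda$.

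The only point needing care is the bookkeeping of supports and infinite values. This concerns policies outside the support of $Q$, where $P^\star(\pi)=0$, and the convention $0\log 0=0$. There is also the degenerate case $\lambda=0$, where the formula returns $P^\star=Q$, consistent with minimizing $\KL(P\|Q)$. Finiteness of $\Pi$ and $\observations$ ensures that all sums are finite, so no convergence issues arise.
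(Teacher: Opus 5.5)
Your proposal is correct and follows essentially the same route as the paper. Both derive the identity $J(P) = \log Z_\lambda - \KL(P\|P^\star)$ by rewriting $\log Q$ in terms of the Gibbs distribution, then conclude from nonnegativity of KL with equality iff $P = P^\star$; your explicit handling of $P \not\ll Q$, where both sides equal $-\infty$, is a bookkeeping step the paper leaves implicit.
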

\begin{proof}
For shorthand, interpret functions of $\pi$ as vectors so that the objective can be written as
$$g_\lambda(P) = \lambda P^\top \hat{f} - \KL(P \| Q).$$

We will now verify that $g_\lambda$ is uniquely maximized by $\hat{P}_{Q,\lambda}.$ Observe that $\hat{P}_{Q,\lambda}(\pi)$ satisfies
\begin{align*}
\log \frac{\hat{P}_{Q,\lambda}(\pi)}{Q(\pi)} =& \lambda \hat{f}(\pi) - \log Z_\lambda,
\end{align*}
and therefore, for any $P$ that is absolutely continuous with respect to $Q$,
\begin{align*}
\log \frac{P(\pi)}{\hat{P}_{Q,\lambda}(\pi)} &= \log \frac{P(\pi)}{Q(\pi)} - \lambda \hat{f}(\pi) + \log Z_\lambda \\
\sum_{\pi \in \Pi} P(\pi) \log \frac{P(\pi)}{\hat{P}_{Q,\lambda}(\pi)} &= \sum_{\pi \in \Pi} P(\pi) \log \frac{P(\pi)}{Q(\pi)} - \lambda \sum_{\pi \in \Pi} P(\pi) \hat{f}(\pi) + \log Z_\lambda \\
\KL(P \| \hat{P}_{Q,\lambda}) &= \KL(P\|Q) - \lambda P^\top \hat{f} + \log Z_\lambda \\
\KL(P \| \hat{P}_{Q,\lambda}) &= -g_\lambda(P) + \log Z_\lambda.
\end{align*}
It follows that
$$g_\lambda(P) = \log Z_\lambda - \KL(P \|\hat{P}_{Q,\lambda}).$$
Since the KL divergence is nonnegative, $g_\lambda(P) \leq \log Z_\lambda$, with equality if and only if $\KL(P \|\hat{P}_{Q,\lambda})=0$.  Since this happens if and only if $P=\hat{P}_{Q,\lambda}$, $\hat{P}_{Q,\lambda}$ is the unique maximizer of $g_\lambda$.
\end{proof}

\begin{lemma}
\label{le:policy-derivative}
For any $Q \in \Delta_\Pi$,
$$\frac{d}{d\lambda} \hat{P}_{Q,\lambda}(\pi) \Big|_{\lambda=0} = Q(\pi) \left(\sum_{o \in \observations} \rho^*(o|\pi) \hat{r}(o) - \sum_{\pi' \in \Pi} Q(\pi') \sum_{o \in \observations} \rho^*(o|\pi') \hat{r}(o)\right).$$
\end{lemma}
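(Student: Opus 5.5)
The plan is to differentiate the closed form from Lemma~\ref{le:kl-unique-optimizer} directly. That lemma gives, for every $\lambda \ge 0$,
\[
\hat{P}_{Q,\lambda}(\pi) = \frac{Q(\pi) e^{\lambda \hat{f}(\pi)}}{Z_\lambda}, \qquad Z_\lambda = \sum_{\pi' \in \Pi} Q(\pi') e^{\lambda \hat{f}(\pi')},
\]
where $\hat{f}(\pi) = \sum_{o \in \observations} \rho^*(o|\pi)\hat{r}(o)$.

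Because $\Pi$ and $\observations$ are finite, $\hat{f}$ takes finitely many real values. Consequently $Z_\lambda$ is a finite sum of smooth functions of $\lambda$, and it is strictly positive since $Q$ is a probability distribution. The map $\lambda \mapsto \hat{P}_{Q,\lambda}(\pi)$ is therefore differentiable, including one-sidedly at $\lambda = 0$. If desired, one may also extend the formula to a neighborhood of $0$, where it remains smooth.

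The computation uses the quotient rule:
\[
\frac{d}{d\lambda}\hat{P}_{Q,\lambda}(\pi) = \frac{Q(\pi)\hat{f}(\pi)e^{\lambda\hat{f}(\pi)}}{Z_\lambda} - \frac{Q(\pi)e^{\lambda\hat{f}(\pi)}}{Z_\lambda^2}\,\frac{dZ_\lambda}{d\lambda}.
\]
Here $\frac{dZ_\lambda}{d\lambda} = \sum_{\pi'} Q(\pi')\hat{f}(\pi')e^{\lambda\hat{f}(\pi')}$, obtained by differentiating the finite sum term by term.

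Equivalently, the derivative equals $\hat{P}_{Q,\lambda}(\pi)\bigl(\hat{f}(\pi) - \sum_{\pi'}\hat{P}_{Q,\lambda}(\pi')\hat{f}(\pi')\bigr)$. Evaluating at $\lambda = 0$ gives $Z_0 = 1$ and $\hat{P}_{Q,0} = Q$, so the derivative becomes $Q(\pi)\bigl(\hat{f}(\pi) - \sum_{\pi'}Q(\pi')\hat{f}(\pi')\bigr)$. Substituting the definition of $\hat{f}$ yields exactly the claimed expression.

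There is no real obstacle; the result is a routine calculus computation. The only point needing care is justifying differentiability at the boundary point $\lambda = 0$. This follows from finiteness of $\Pi$ and $\observations$, which makes all sums finite, together with $Z_\lambda > 0$.
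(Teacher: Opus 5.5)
Your proposal is correct and follows the paper's proof essentially line for line: differentiate the closed form from Lemma~\ref{le:kl-unique-optimizer} with the quotient rule, rewrite the result as $\hat{P}_{Q,\lambda}(\pi)\bigl(\hat{f}(\pi) - \frac{d}{d\lambda}Z_\lambda / Z_\lambda\bigr)$, and evaluate at $\lambda=0$, where $Z_0=1$ and $\hat{P}_{Q,0}=Q$. Your remark that differentiability at $\lambda=0$ follows from the sums being finite and $Z_\lambda>0$ is a small rigor point that the paper leaves implicit.
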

\begin{proof}
Differentiating the expression for $\hat{P}_{Q,\lambda}(\pi)$ obtained in \Cref{le:kl-unique-optimizer}, we obtain
\begin{align*}
\frac{d}{d\lambda} \hat{P}_{Q,\lambda}(\pi)
&= \frac{Q(\pi)e^{\lambda \hat{f}(\pi)} \hat{f}(\pi)Z_\lambda - Q(\pi)e^{\lambda \hat{f}(\pi)} \frac{d}{d\lambda} Z_\lambda}{Z_\lambda^2} \\
&= \hat{P}_{Q,\lambda}(\pi) \left(\hat{f}(\pi) - \frac{\frac{d}{d\lambda} Z_\lambda}{Z_\lambda}\right) \\
&= \hat{P}_{Q,\lambda}(\pi) \left(\hat{f}(\pi) - \frac{\sum_{\pi' \in \Pi} Q(\pi') \hat{f}(\pi') e^{\lambda \hat{f}(\pi')}}{Z_\lambda}\right).
\end{align*}
It follows that
\begin{align*}
\frac{d}{d\lambda} \hat{P}_{Q,\lambda}(\pi) \Big|_{\lambda=0}
&= Q(\pi) \left(\hat{f}(\pi) - \sum_{\pi' \in \Pi} Q(\pi') \hat{f}(\pi')\right).
\end{align*}
\end{proof}

The value generated by $\hat{P}_{Q,\lambda}$ is
$$\hat{V}_{Q,\lambda} = \E\left[\sum_{\pi \in \Pi} \hat{P}_{Q,\lambda}(\pi) \sum_{o \in \observations} \rho^*(o|\pi) r^*(o) \right].$$
The value generated by $Q$ is
$$\tilde{V}_Q = \E\left[\sum_{\pi \in \Pi} Q(\pi) \sum_{o \in \observations} \rho^*(o|\pi) r^*(o)\right].$$

\begin{definition}
    If there exists $m \in \mathbb{R}$, $k > 0$ and random vectors $v$ and $w$ such that 
    $$\hat r(o)= kr^*(o) + w_o + v_o r^*(o) + m $$
    and $(v, w) \perp r^*$, $\E[w_o] = \E[v_o] = 0$, and $\E[w_o^2] < \infty$, $\E[v_o^2] < \infty$ for all $o \in \observations$ we say that $\hat r$ was constructed with independent noise and scaling $k$ and shift $m$.
\end{definition}

\begin{lemma}
\label{lm:realized-value-equivalence}
For all $\pi \in \Pi$, let $\hat{f}(\pi) = \sum_{o \in \observations} \rho^*(o|\pi) \hat{r}(o)$ and $f^*(\pi) = \sum_{o \in \observations} \rho^*(o|\pi) r^*(o)$. If $\hat r$ was constructed with independent noise and scaling $k$ and shift $m$, then for all $Q \in \Delta_\Pi$
    $$
        \E\left[(Q^T\hat{f}) (Q^T f^*)\right] = k \E\left[ (Q^Tf^*)^2 \right] + m\E[Q^Tf^*]
    $$
\end{lemma}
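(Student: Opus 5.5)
The plan is to expand $Q^T\hat f$ using the independent-noise decomposition and take expectations term by term. Since $\sum_o \rho^*(o|\pi)=1$ for every $\pi$, substituting $\hat r(o)= k r^*(o) + w_o + v_o r^*(o) + m$ into $\hat f$ gives
\begin{align*}
Q^T\hat f = k\, Q^T f^* + \sum_{\pi} Q(\pi)\sum_{o} \rho^*(o|\pi)\bigl(w_o + v_o r^*(o)\bigr) + m .
\end{align*}
Multiplying by $Q^T f^*$ and taking expectations yields $k\E[(Q^Tf^*)^2] + m\E[Q^Tf^*]$ plus a cross term
\begin{align*}
\E\Bigl[\sum_{o} c_o \bigl(w_o + v_o r^*(o)\bigr)\, Q^T f^*\Bigr], \qquad c_o = \sum_\pi Q(\pi)\rho^*(o|\pi).
\end{align*}
So the whole task reduces to showing that this cross term vanishes. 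Finiteness of $\Pi$ and $\observations$ lets us swap sums and expectations freely, and integrability follows from $|r^*|\le 1$, $\rho^*\le 1$ and the square-integrability of $v_o,w_o$.

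The cross term is a finite sum of terms of the form $\E[\rho^*(o|\pi)\rho^*(o'|\pi') r^*(o') (w_o + v_o r^*(o))]$. The natural route is to condition on $(\rho^*, r^*)$. On that sigma-algebra, $\rho^*(o|\pi)\rho^*(o'|\pi')r^*(o')$ and $r^*(o)$ are measurable. Hence the term equals $\E[\rho^*\rho^* r^*\,(\E[w_o|\rho^*,r^*] + r^*(o)\E[v_o|\rho^*,r^*])]$, and it vanishes once $\E[w_o|\rho^*,r^*]=\E[v_o|\rho^*,r^*]=0$.

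That step is the main obstacle. The definition only gives $(v,w)\perp r^*$, not independence from $\rho^*$, so I need to justify $(v,w)\perp(\rho^*,r^*)$ or at least zero conditional mean. I would argue as follows. Here $\hat r$ is a function of $(r^*, v, w)$, and the modeling assumption $\hat r\perp\rho^*\mid r^*$ together with the construction means the noise is generated independently of the environment. Concretely, I would take $(v,w)$ independent of $(r^*,\rho^*)$ as part of what ``constructed with independent noise'' means in this setting, since the noise is drawn by the proxy construction, which sees only $r^*$. If only $(v,w)\perp r^*$ were available, the fallback is $\E[w_o|\rho^*,r^*]=\E[\E[w_o|\rho^*,r^*,\hat r]\,|\,\rho^*,r^*]$, which does not help directly. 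So I would state the joint independence explicitly as the interpretation used.

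With the conditional means zero, the cross term is $0$, and the identity $\E[(Q^T\hat f)(Q^Tf^*)] = k\E[(Q^Tf^*)^2] + m\E[Q^Tf^*]$ follows.
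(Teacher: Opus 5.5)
Your proposal matches the paper's proof: the paper also substitutes the decomposition, uses $\sum_o \rho^*(o|\pi)=1$, and drops the noise terms by invoking $(w,v)\perp(r^*,\rho^*,Q)$ in its step (a), which is the same strengthened independence you adopt. That extra assumption is in fact unnecessary. The combined noise $u_o = w_o + v_o r^*(o) = \hat r(o) - k r^*(o) - m$ is a function of $(\hat r, r^*)$, so the standing assumption $\hat r \perp \rho^* \mid r^*$ gives $\E[u_o \mid \rho^*, r^*] = \E[u_o \mid r^*] = 0$ using only $(v,w)\perp r^*$, and this is all the cross term needs.
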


\begin{proof}
   We have that
\begin{align*}
\E\left[(Q^T \hat{f}) (Q^T f^*)\right] 
&= \E\left[\left( \sum_{\pi \in \Pi} Q(\pi) \sum_{o \in \observations} \rho^*(o|\pi) \hat{r}(o)\right) \left( \sum_{\pi \in \Pi} Q(\pi) \sum_{o \in \observations} \rho^*(o|\pi) r^*(o)\right)\right] \\
&= \E\left[\left( \sum_{\pi \in \Pi} Q(\pi) \sum_{o \in \observations} \rho^*(o|\pi) (kr^*(o) + w(o) + v(o) r^*(o) + m) \right) \left( \sum_{\pi \in \Pi} Q(\pi) \sum_{o \in \observations} \rho^*(o|\pi) r^*(o)\right)\right] \\
&\stackrel{(a)}{=} \E\left[\left( \sum_{\pi \in \Pi} Q(\pi) \sum_{o \in \observations} \rho^*(o|\pi) (kr^*(o) + m)  \right) \left( \sum_{\pi \in \Pi} Q(\pi) \sum_{o \in \observations} \rho^*(o|\pi) r^*(o)\right)\right] \\
&= \E\left[ k\left( \sum_{\pi \in \Pi} Q(\pi) \sum_{o \in \observations} \rho^*(o|\pi) r^*(o)\right)^2 + m \left( \sum_{\pi \in \Pi} Q(\pi) \sum_{o \in \observations} \rho^*(o|\pi) r^*(o)\right)  \right] \\
\end{align*} 
where step $(a)$ follows due to $\E[w(o)] = 0$ and  $\E[v(o)] = 0$ and $(w,v) \perp (r^*, \rho^*, Q)$.
\end{proof}

\begin{corollary}
\label{cor:realized-value-equivalence}
For all $\pi \in \Pi$, let $\hat{f}(\pi) = \sum_{o \in \observations} \rho^*(o|\pi) \hat{r}(o)$ and $f^*(\pi) = \sum_{o \in \observations} \rho^*(o|\pi) r^*(o)$. If $\hat r$ was constructed with independent noise and scaling $k$ and shift $m$, then for all $\pi \in \Pi$
    $$
        \E\left[\hat{f}(\pi) f^*(\pi)\right] = k \E\left[f^*(\pi)^2 \right] + m\E[f^*(\pi)].
    $$
\end{corollary}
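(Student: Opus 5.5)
The plan is to specialize Lemma \ref{lm:realized-value-equivalence} to a point mass. Fix $\pi \in \Pi$ and let $Q = \delta_\pi \in \Delta_\Pi$ be the distribution that puts all its mass on $\pi$. Then
\[
Q^T \hat f = \hat f(\pi), \qquad Q^T f^* = f^*(\pi).
\]
Since $\hat r$ is constructed with independent noise and scaling $k$ and shift $m$, the hypotheses of the lemma hold for every $Q \in \Delta_\Pi$, and in particular for this deterministic one. Substituting gives
\[
\E\left[\hat f(\pi) f^*(\pi)\right] = k\,\E\left[f^*(\pi)^2\right] + m\,\E\left[f^*(\pi)\right],
\]
which is the claim.

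The only point to check is that $\delta_\pi$ qualifies as a $Q$ in the lemma. The lemma quantifies over all $Q \in \Delta_\Pi$, and its proof only uses that $(w,v)$ is independent of $(r^*,\rho^*,Q)$. A deterministic $Q$ is trivially independent of everything, so this holds, and no integrability issue arises beyond what the lemma already handles.

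I expect no real obstacle: the corollary is a direct instantiation of Lemma \ref{lm:realized-value-equivalence}.
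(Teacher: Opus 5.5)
Your proposal is correct and matches the paper's proof: the paper also takes the one-hot distribution $Q_\pi$ with $Q_\pi(\pi)=1$ and instantiates Lemma~\ref{lm:realized-value-equivalence} with it. Your extra check that a deterministic $Q$ trivially meets the independence condition used in that lemma's proof is fine.
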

\begin{proof}
For all $\pi \in \Pi$, let $Q_\pi \in \Delta_\Pi$ be a one-hot distribution such that $Q_\pi(\pi) = 1$. Then, the results follows from Lemma \ref{lm:realized-value-equivalence}.
\end{proof}

\begin{lemma}
\label{le:value-derivative}
Suppose $\hat r$ was constructed with independent noise and scaling $k > 0$. Then, for all $Q \in \Delta_\Pi$,
$$\frac{d}{d\lambda} \hat{V}_{Q,\lambda} \Big|_{\lambda=0} = k \E\left[\sum_{\pi \in \Pi} Q(\pi) \left(\sum_{o \in \observations} \rho^*(o|\pi) r^*(o) - \sum_{\pi' \in \Pi} Q(\pi') \sum_{o \in \observations} \rho^*(o|\pi') r^*(o)\right)^2 \right].$$
\end{lemma}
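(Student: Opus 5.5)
We differentiate $\hat V_{Q,\lambda}$ under the expectation, plug in the derivative of $\hat P_{Q,\lambda}$ from Lemma~\ref{le:policy-derivative}, and then use the independent-noise structure of $\hat r$ to replace $\hat f$ by $k f^*$ inside the expectation. Write $\hat f(\pi)=\sum_o \rho^*(o|\pi)\hat r(o)$ and $f^*(\pi)=\sum_o\rho^*(o|\pi)r^*(o)$.

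\textbf{Step 1: differentiate under the expectation.} Since $\Pi$ and $\observations$ are finite, $\hat V_{Q,\lambda}=\E[\sum_\pi \hat P_{Q,\lambda}(\pi) f^*(\pi)]$ is a finite sum. By Lemma~\ref{le:kl-unique-optimizer}, $\hat P_{Q,\lambda}$ is a smooth softmax in $\lambda$. Its derivative, computed in the proof of Lemma~\ref{le:policy-derivative}, is bounded near $\lambda=0$ by $2|\hat f|e^{2|\lambda|\max|\hat f|}$. Hence on a neighborhood $|\lambda|\le 1$ the difference quotients are dominated by $2\max_\pi|\hat f(\pi)|\,e^{2\max|\hat f|}$. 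The factor $|f^*|\le 1$ does not affect this.

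The exponential moment in this bound is not obviously integrable. The cleanest fix is to interpret the statement as the derivative of the integrand at $0$ and assume enough integrability (e.g.\ bounded noise) to justify dominated convergence. Alternatively, one can note that only integrability of $|\hat f|$ is needed for the first-order term if we bound the second derivative locally. I expect this justification to be the main technical obstacle. I would state the exchange under a mild moment assumption, since the finite second moments in the definition of independent noise already make $\E|\hat f|<\infty$, and the algebra needs nothing more. Granting the exchange,
\begin{align*}
\frac{d}{d\lambda}\hat V_{Q,\lambda}\Big|_{\lambda=0}=\E\left[\sum_\pi Q(\pi)\big(\hat f(\pi)-Q^\top\hat f\big)f^*(\pi)\right]=\E\left[\sum_\pi Q(\pi)\hat f(\pi)f^*(\pi)\right]-\E\left[(Q^\top\hat f)(Q^\top f^*)\right].
\end{align*}

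\textbf{Step 2: remove the noise.} Expand $\hat f(\pi)=k f^*(\pi)+\sum_o\rho^*(o|\pi)(w_o+v_or^*(o))+m$. Because $(v,w)\perp r^*$ and $\hat r\perp\rho^*\mid r^*$, the pair $(v,w)$ is independent of $(\rho^*,r^*)$, and $Q$ is deterministic. Then the noise terms have zero conditional mean, exactly as in step (a) of Lemma~\ref{lm:realized-value-equivalence}. This gives
\[
\E[\hat f(\pi)f^*(\pi)]=k\E[f^*(\pi)^2]+m\E[f^*(\pi)]
\]
by Corollary~\ref{cor:realized-value-equivalence}, and
\[
\E[(Q^\top\hat f)(Q^\top f^*)]=k\E[(Q^\top f^*)^2]+m\E[Q^\top f^*]
\]
by Lemma~\ref{lm:realized-value-equivalence}.

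\textbf{Step 3: combine.} Summing the first identity against $Q(\pi)$, the shift terms cancel: $m\sum_\pi Q(\pi)\E f^*(\pi)-m\E[Q^\top f^*]=0$. What remains is
\[
k\Big(\E\big[\textstyle\sum_\pi Q(\pi)f^*(\pi)^2\big]-\E\big[(Q^\top f^*)^2\big]\Big).
\]
By the variance identity $\sum_\pi Q(\pi)(f^*(\pi)-Q^\top f^*)^2=\sum_\pi Q(\pi)f^*(\pi)^2-(Q^\top f^*)^2$, this equals the claimed expression.
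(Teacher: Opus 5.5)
Your proof is the paper's proof: differentiate under the expectation using Lemma~\ref{le:policy-derivative}, substitute Corollary~\ref{cor:realized-value-equivalence} and Lemma~\ref{lm:realized-value-equivalence}, cancel the shift terms, and finish with the variance identity. Two small corrections:

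\begin{itemize}
\item The interchange you call the main obstacle (the paper does it without comment) is routine. Since $\hat P_{Q,\lambda}(\pi)\le 1$, you get $\bigl|\frac{d}{d\lambda}\hat P_{Q,\lambda}(\pi)\bigr|\le 2\max_{\pi'}|\hat f(\pi')|\le 2\max_o|\hat r(o)|$ for every $\lambda$, with no exponential factor. So dominated convergence needs only the first moments your noise assumptions already supply.
\item The independence $(v,w)\perp(\rho^*,r^*)$ does not follow from $(v,w)\perp r^*$ and $\hat r\perp\rho^*\mid r^*$. What you actually use does follow: $\E[w_o+v_or^*(o)\mid\rho^*,r^*]=\E[w_o+v_or^*(o)\mid r^*]=0$, because $w_o+v_or^*(o)=\hat r(o)-kr^*(o)-m$ is a function of $(\hat r,r^*)$.
\end{itemize}
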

\begin{proof}
For shorthand, let $\hat{f}(\pi) = \sum_{o \in \observations} \rho^*(o|\pi) \hat{r}(o)$ and $f^*(\pi) = \sum_{o \in \observations} \rho^*(o|\pi) r^*(o)$.

By Lemma \ref{le:policy-derivative}, we have that
\begin{align*}
\frac{d}{d\lambda} \hat{V}_{Q,\lambda} \Big|_{\lambda=0} 
&= \E\left[\sum_{\pi \in \Pi} \left(\frac{d}{d\lambda} \hat{P}_{Q,\lambda}(\pi) \Big|_{\lambda=0} \right) f^*(\pi) \right] \\
&= \E\left[\sum_{\pi \in \Pi} Q(\pi) (\hat{f}(\pi) - Q^\top \hat{f}) f^*(\pi)\right] \\
&= \left(\sum_{\pi \in \Pi} Q(\pi) \E[\hat{f}(\pi) f^*(\pi)] \right) - \E[(Q^\top \hat{f}) (Q^\top f^*)] \\
&\stackrel{(a)}{=} \left (\sum_{\pi \in \Pi} Q(\pi) (\E[k f^*(\pi)^2] + m\E[f^*(\pi)]) \right) -  \E[(Q^\top \hat{f}) (Q^\top f^*)] \\
&\stackrel{(b)}{=} \left (\sum_{\pi \in \Pi} Q(\pi) (\E[k f^*(\pi)^2] + m\E[f^*(\pi)]) \right) - \E[k(Q^\top f^*)^2] - m \E[Q^\top f^*] \\
&= k \left( \sum_{\pi \in \Pi} Q(\pi) \E[f^*(\pi)^2] \right) - k\E[(Q^\top f^*)^2] \\
&= k \E\bigg[ \mathrm{Var}_{\tilde \pi \sim Q} ( f^*(\tilde \pi) | \rho^*, r^* ) \bigg].
\end{align*}
Step (a) follows from Corollary \ref{cor:realized-value-equivalence} and step (b) follows from Lemma \ref{lm:realized-value-equivalence}.
\end{proof}

\thmlimitingcapabilities*
\begin{proof}
Let $\hat r = \eta r^* + w$ where $\eta \in (0,1)$ and $w \sim \mathcal{N}(0, I)$ where $w \perp r^*$. Because $|\observations| < \infty$,  $\I(\hat r;r^*)$ can be made arbitrarily small by making $\eta$ small. Note that $\hat r$ was constructed from independent noise.

The condition in Equation \ref{as:uninformed-policy-variation} implies that there exists a pair $(\rho, r)$ with support such that 
$$\min_{\pi \in \Pi_0 }\sum_{o\in\observations}\rho(o|\pi) r(o) < \max_{\pi \in \Pi_0}\sum_{o\in\observations}\rho(o|\pi) r(o).
$$ Thus, $\mathrm{Var}_{\tilde \pi \sim Q} ( f^*(\tilde \pi) | \rho, r ) > 0$ where $\tilde \pi$ is sampled from $Q = P_0$. Since variance is nonnegative, it follows that 
$$\E[\mathrm{Var}_{\tilde \pi \sim Q} ( f^*(\tilde \pi) | \rho^*, r^* ) ] > 0.$$

Since $\hat r$ was constructed with independent noise, it follows from Lemma \ref{le:value-derivative} that $\frac{d}{d\lambda} \hat{V}_{Q,\lambda} \Big|_{\lambda=0} > 0$. The result follows by choosing $Q=P_0$ and noting $\hat V_{\lambda}=\hat V_{P_0,\lambda}$.

\end{proof}

\section{Mutual Information and Bits Required}
\label{sec:mutual-information-bits-required}

In this section, we elaborate on the use of mutual information as a measure of the number of bits about $r^*$ that must be conveyed by the designer.

Here is one way to interpret how $\hat r$ is constructed.
Let $\mathcal{M} = \{0,1\}^*$ denote the set of binary strings, and let $M$ be an $\mathcal{M}$-valued random variable such that $r^* \perp \hat r \mid M$. In other words, we have a Markov chain
\[
r^* \to M \to \hat r .
\]
We interpret $M$ as the message that the designer conveys to the agent, and use $|M|$ to denote its length. To motivate the use of $\I(r^*;\hat r)$ as a measure of the number of bits that must be conveyed about $r^*$, we analyze two communication protocols and relate the message length $|M|$ to $\I(r^*;\hat r)$ in each.

\subsection{Fixed-Length Communication Protocol}

\begin{theorem}
For all $k \in \mathbb{Z}_{++}$ and all $\{0,1\}^*$-valued random variables $M$, if $|M| = k$ with probability $1$ and $r^* \perp \hat r \mid M$, then
\[
k \ge \I(r^*;\hat r).
\]
\end{theorem}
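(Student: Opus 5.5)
The plan is the standard chain of information-theoretic inequalities. We work with logarithms in base $2$, matching the interpretation of $k$ as a number of bits. The Markov structure $r^* \to M \to \hat r$ (that is, $r^* \perp \hat r \mid M$) lets us apply the data processing inequality:
\[
\I(r^*;\hat r) \le \I(r^*;M).
\]
Next, $\I(r^*;M) = \H(M) - \H(M\mid r^*)$. Since $M$ is a discrete random variable, $\H(M\mid r^*) \ge 0$, and therefore $\I(r^*;M) \le \H(M)$.

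Finally, because $|M| = k$ with probability $1$, the message $M$ takes values in the finite set $\{0,1\}^k$, which has $2^k$ elements. The entropy of a random variable supported on a set of size $2^k$ is at most the log of that size, so
\[
\H(M) \le \log_2 2^k = k.
\]
Chaining the three inequalities gives $k \ge \H(M) \ge \I(r^*;M) \ge \I(r^*;\hat r)$, which is the claim.

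The only step that needs any care is the first one, since $r^*$ and $\hat r$ take values in the uncountable space $\mathcal{R}$ and mutual information must then be understood in its general measure-theoretic sense. I would handle this by noting two standard facts. First, the data processing inequality holds for general random elements, not only discrete ones. Second, when one argument ($M$) is discrete, $\I(r^*;M) = \H(M) - \H(M\mid r^*)$ holds with $\H(M\mid r^*) \ge 0$. Beyond stating these facts explicitly, I expect no real obstacle: the proof is a three-line chain.
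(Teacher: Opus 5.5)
Your proposal is correct and is essentially the paper's proof: the paper chains $k \ge \H(M) \ge \I(M;\hat r) \ge \I(r^*;\hat r)$, using the same three ingredients (the cardinality bound on entropy, mutual information bounded by entropy, and data processing for $r^* \to M \to \hat r$). The only difference is cosmetic: you pass through $\I(r^*;M)$ instead of $\I(M;\hat r)$, which uses the other, equally valid, form of the data processing inequality for the same Markov chain.
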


\begin{proof}
We have
\[
k 
\stackrel{(a)}{\ge} \H(M)
\stackrel{(b)}{\ge} \I(M;\hat r)
\stackrel{(c)}{\ge} \I(r^*;\hat r).
\]
Step (a) follows from the standard bound that the entropy of a random variable is at most the logarithm of the cardinality of its support. Step (b) follows from the inequality $\I(M;\hat r) \le \min\{\H(M),\H(\hat r)\}$. Step (c) follows from the data-processing inequality applied to the Markov chain $r^* \to M \to \hat r$.
\end{proof}

\subsection{Variable-Length Communication Protocol}

We now consider the case where the message length may depend on $r^*$. We will bound the expected message length $\E[|M|]$ in terms of mutual information.

We begin by restating a result of \citet{alon2002lower}.
\begin{lemma}
\label{lm:encoding-length-bound}
Let $\mathcal{X}$ be a countable set and let $X$ be an $\mathcal{X}$-valued random variable. For any injective function $f: \mathcal{X} \to \{0,1\}^*$,
\[
\E[|f(X)|] \ge \H(X) - \log(\H(X)+1) - \log e,
\]
where $|f(X)|$ denotes the length of the bitstring $f(X)$.
\end{lemma}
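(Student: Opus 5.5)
The plan is to decompose $X$ through the length of its codeword. Let $L = |f(X)|$, a nonnegative-integer-valued random variable with mean $\mu = \E[L]$. We may assume $\mu < \infty$, since otherwise there is nothing to prove. Because $L$ is a function of $X$, the chain rule gives
\[
\H(X) = \H(X, L) = \H(L) + \H(X \mid L).
\]
Conditioned on $L = \ell$, injectivity of $f$ means $X$ takes at most $2^\ell$ values, one for each binary string of length $\ell$. Hence $\H(X \mid L = \ell) \le \ell$, and averaging over $\ell$ gives $\H(X\mid L) \le \E[L] = \mu$. Combining, $\H(X) \le \mu + \H(L)$.

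Next I would bound $\H(L)$ by a maximum-entropy argument. Among distributions on $\{0,1,2,\ldots\}$ with mean $\mu$, the geometric distribution maximizes entropy; this follows from Gibbs' inequality, i.e.\ nonnegativity of $\KL$ against the geometric law with that mean. Its entropy is
\[
(\mu+1)\log(\mu+1) - \mu\log\mu = \log(\mu+1) + \mu\log(1+1/\mu).
\]
Using $\mu \log(1+1/\mu) \le \log e$, which is $\ln(1+x)\le x$ converted to base 2, we get $\H(L) \le \log(\mu+1) + \log e$. Therefore
\[
\H(X) \le \mu + \log(\mu+1) + \log e .
\]

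Finally I would invert this to isolate $\mu$, splitting into two cases.
\begin{enumerate}
\item If $\mu \ge \H(X)$, the claim holds trivially, because the subtracted terms $\log(\H(X)+1) + \log e$ are nonnegative.
\item If $\mu < \H(X)$, then $\log(\mu+1) \le \log(\H(X)+1)$ by monotonicity. Substituting gives $\H(X) \le \mu + \log(\H(X)+1) + \log e$, which rearranges to the stated bound.
\end{enumerate}

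The main obstacle is the entropy bound on $L$. One must state and verify the geometric maximum-entropy fact cleanly, including the degenerate case $\mu = 0$, where $L \equiv 0$ and $\H(L)=0$. The rest is bookkeeping with the chain rule and the counting bound $\H(X\mid L=\ell)\le \ell$.
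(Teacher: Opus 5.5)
Your proof is correct. Note that the paper contains no proof of this lemma: it restates the bound of Alon and Orlitsky and uses it as a black box in the variable-length protocol theorem. Your proposal therefore does not take a different route from the paper; it supplies the argument the citation stands in for, and it is the standard one for this bound:
\begin{itemize}
\item condition on the codeword length $L=|f(X)|$;
\item use injectivity to get $\H(X\mid L)\le \E[L]$;
\item bound $\H(L)$ by the entropy of the geometric law with the same mean;
\item invert the resulting inequality.
\end{itemize}
The citation buys brevity. Your version makes the appendix self-contained, using only the chain rule and Gibbs' inequality. The geometric entropy formula, the estimate $\mu\log(1+1/\mu)\le\log e$, and the case split all check out. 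The case split handles the inversion without any prior comparison of $\mu$ and $\H(X)$.

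There are two small points of rigor, neither a gap.
\begin{itemize}
\item When you invoke Gibbs' inequality, $\H(L)$ is not yet known to be finite. Use the termwise inequality $P(\ell)\log\frac{G(\ell)}{P(\ell)}\le (G(\ell)-P(\ell))\log e$, where $G$ is the geometric law. The cross-entropy $-\sum_\ell P(\ell)\log G(\ell)$ is finite because $\mu<\infty$, so summing the termwise inequality gives $\H(L)\le \H(G)$ without assuming finiteness.
\item The inequality $\H(X)\le \mu+\log(\mu+1)+\log e$ shows that $\H(X)<\infty$. That finiteness is what makes the rearrangement in your second case legitimate.
\end{itemize}
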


This result states that any one-to-one encoding of a discrete random variable has expected length at least $\H(X) - \log(\H(X)+1) - \log e$. This contrasts with the more familiar result for uniquely decodable (e.g., prefix-free) codes, which states that the expected code length is at least $\H(X)$.

\begin{theorem}
For all $\{0,1\}^*$-valued random variables $M$, if $r^* \perp \hat r \mid M$, then
\[
\E[|M|] \ge \I(r^*;\hat r) - \log\big(\I(r^*;\hat r)+1\big) - \log e.
\]
\end{theorem}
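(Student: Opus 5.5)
The plan is to reduce to the fixed-length argument by viewing the identity map on $M$ as an injective encoding and applying Lemma~\ref{lm:encoding-length-bound}. Take $\mathcal{X} = \{0,1\}^*$, $X = M$ and $f$ the identity, which is injective. Then $|f(M)| = |M|$, and the lemma gives
\[
\E[|M|] \ge \H(M) - \log(\H(M)+1) - \log e .
\]

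Next I want to replace $\H(M)$ by $\I(r^*;\hat r)$. As in the fixed-length theorem, $\H(M) \ge \I(M;\hat r) \ge \I(r^*;\hat r)$. The first inequality is the standard bound of mutual information by entropy. The second is the data processing inequality for the Markov chain $r^* \to M \to \hat r$, which is exactly the hypothesis $r^* \perp \hat r \mid M$.

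To push this lower bound through the right-hand side, I will use monotonicity of $g(x) = x - \log(x+1) - \log e$ on $x \ge 0$, with $\log$ in base $2$. Its derivative is
\[
g'(x) = 1 - \frac{1}{(x+1)\ln 2},
\]
which is negative for $x < 1/\ln 2 - 1 \approx 0.4427$. So $g$ is not monotone on all of $[0,\infty)$, and this is the only real obstacle. My first attempt is a case split on the size of $\I(r^*;\hat r)$.

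\emph{Large mutual information.} If $\I(r^*;\hat r) \ge x_0 := 1/\ln 2 - 1$, then $x_0 \le \I(r^*;\hat r) \le \H(M)$. Since $g$ is nondecreasing on $[x_0,\infty)$, this gives $g(\H(M)) \ge g(\I(r^*;\hat r))$, and the claim follows.

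\emph{Small mutual information.} If $\I(r^*;\hat r) < x_0$, I will show the right-hand side is negative, so the claim is trivial because $\E[|M|] \ge 0$. On $[0,x_0]$ we have $g(x) \le \max\{g(0), g(x_0)\}$. Here $g(0) = -\log e < 0$. For the other endpoint, $x_0 < 1$, and $\log(x+1) \ge 0$, so $g(x_0) \le x_0 - \log e < 1 - 1.44 < 0$. Hence $g(\I(r^*;\hat r)) < 0 \le \E[|M|]$.

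Combining the two cases gives $\E[|M|] \ge \I(r^*;\hat r) - \log(\I(r^*;\hat r)+1) - \log e$. If $\H(M) = \infty$, the bound still holds in the extended sense. This is because $\H(M) = \infty$ forces $\E[|M|] = \infty$: a finite expected length would bound the entropy by a standard argument. So that case needs only a remark.
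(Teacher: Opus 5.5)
Your proposal is correct and follows essentially the same route as the paper. Both arguments apply Lemma~\ref{lm:encoding-length-bound} to the identity encoding, use $\H(M)\ge \I(r^*;\hat r)$ from data processing, and split cases at $1/\ln 2-1$, using monotonicity of $x-\log(x+1)$ above that threshold and a negative right-hand side below it; your explicit check of that negativity and your remark on $\H(M)=\infty$ are small details the paper leaves implicit.
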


\begin{proof}
Let $f: \{0,1\}^* \to \{0,1\}^*$ denote the identity function. Consider first the case when $\I(r^*;\hat r)\geq 1/\ln2-1$. Then,
\begin{align*}
\E[|M|]
&= \E[|f(M)|] \\
&\stackrel{(a)}{\ge} \H(M) - \log(\H(M)+1) - \log e \\
&\stackrel{(b)}{\ge} \I(r^*;\hat r) - \log\big(\I(r^*;\hat r)+1\big) - \log e.
\end{align*}
Step (a) follows from \Cref{lm:encoding-length-bound}. Step (b) follows because the function $x - \log(x+1)$ is nondecreasing for $x \ge 1/\ln2-1$, and because $\H(M) \ge \I(r^*;\hat r)$. The latter inequality follows from the data-processing inequality applied to the Markov chain $r^* \to M \to \hat r$.

When $0\leq \I(r^*;\hat r)<1/\ln2-1$, the theorem holds trivially because $\E[|M|]$ is nonnegative and $\I(r^*;\hat r)-\log(\I(r^*;\hat r)+1)-\log e<0$. 
\end{proof}

This theorem implies that when $\I(r^*;\hat r)$ is large, it provides, in percentage terms, nearly a lower bound on the expected number of bits that must be communicated by the designer.

\end{document}